\newtheorem{theorem}{Theorem}
\newtheorem{corollary}{Corollary}
\newtheorem{proposition}{Proposition}
\newtheorem{lemma}{Lemma}
\newtheorem*{assumption*}{Assumption}
\newtheorem*{metatheorem*}{Metatheorem}
\definecolor{Gray}{gray}{0.9}
\newcommand{\dd}{\mathrm{d}}
\newcommand{\tr}{\mathrm{tr}}
\newcommand{\eulermac}{\gamma_{\text{EM}}}
\newcommand{\GP}{\mathcal{GP}}
\newcommand{\cov}{\mathrm{Cov}}
\newcommand{\new}[1]{#1}
\icmltitlerunning{Monotonicity and Double Descent in GPs}
\title{Monotonicity and Double Descent in Uncertainty Estimation with Gaussian Processes}
\author{
Liam Hodgkinson%
\thanks{School of Mathematics and Statistics, University of Melbourne, Australia. Email: \texttt{lhodgkinson@unimelb.edu.au}}
\and 
Chris van der Heide%
\thanks{Department of Electrical and Electronic Engineering, University of Melbourne, Australia. Email:  \tt{chris.vdh@gmail.com}}
\and 
Fred Roosta%
\thanks{School of Mathematics and Physics, University of Queensland, Australia; and International Computer Science Institute, Berkeley, CA, USA. Email:  \tt{fred.roosta@uq.edu.au}}
\and 
Michael W. Mahoney%
\thanks{Department of Statistics, University of California at Berkeley, Berkeley, CA, USA; Lawrence Berkeley National Laboratory, Berkeley, CA, USA; and International Computer Science Institute, Berkeley, CA, USA. Email: \texttt{mmahoney@stat.berkeley.edu}}
}
\author{Antiquus S.~Hippocampus, Natalia Cerebro \& Amelie P. Amygdale \thanks{ Use footnote for providing further information
about author (webpage, alternative address)---\emph{not} for acknowledging
funding agencies.  Funding acknowledgements go at the end of the paper.} \\
Department of Computer Science\\
Cranberry-Lemon University\\
Pittsburgh, PA 15213, USA \\
\texttt{\{hippo,brain,jen\}@cs.cranberry-lemon.edu} \\
\And
Ji Q. Ren \& Yevgeny LeNet \\
Department of Computational Neuroscience \\
University of the Witwatersrand \\
Joburg, South Africa \\
\texttt{\{robot,net\}@wits.ac.za} \\
\AND
Coauthor \\
Affiliation \\
Address \\
\texttt{email}
}
\begin{document}

\ifdefined\techreport
\maketitle
\else
\twocolumn[
\icmltitle{Monotonicity and Double Descent in \\ Uncertainty Estimation with Gaussian Processes}

\begin{icmlauthorlist}
\icmlauthor{Liam Hodgkinson}{melb}
\icmlauthor{Chris van der Heide}{melbeng}
\icmlauthor{Fred Roosta}{uq,cires,icsi}
\icmlauthor{Michael W. Mahoney}{icsi,lbnl,berk}
\end{icmlauthorlist}

\icmlaffiliation{melb}{School of Mathematics and Statistics, University of Melbourne, Australia}
\icmlaffiliation{melbeng}{Department of Electrical and Electronic Engineering, University of Melbourne, Australia}
\icmlaffiliation{uq}{School of Mathematics and Physics, University of Queensland, Australia}
\icmlaffiliation{cires}{ARC Training Centre for Information Resilience, University of Queensland, Australia}
\icmlaffiliation{icsi}{International Computer Science Institute}
\icmlaffiliation{lbnl}{Lawrence Berkeley National
Laboratory}
\icmlaffiliation{berk}{Department of Statistics, University of California at Berkeley}

\icmlcorrespondingauthor{Liam Hodgkinson}{lhodgkinson@unimelb.edu.au}

\icmlkeywords{Double Descent, Gaussian Processes, Marginal Likelihood}

\vskip 0.3in
]
\fi
\printAffiliationsAndNotice{} %
\begin{abstract}
Despite their importance for assessing reliability of predictions, uncertainty quantification (UQ) measures for machine learning models have only recently begun to be rigorously characterized. One prominent issue is the \emph{curse of dimensionality}: it is commonly believed that the marginal likelihood should be reminiscent of cross-validation metrics and that both should deteriorate with larger input dimensions. 
We prove that by tuning hyperparameters to maximize marginal likelihood (the empirical Bayes procedure), the performance, as measured by the marginal likelihood, \emph{improves monotonically} with the input dimension.
On the other hand, we prove that cross-validation metrics exhibit qualitatively different behavior that is characteristic of \emph{double descent}. 
Cold posteriors, which have recently attracted interest due to their improved performance in certain settings, appear to exacerbate these phenomena. 
We verify empirically that our results hold for real data, beyond our considered assumptions, and we explore consequences involving synthetic covariates.

\end{abstract}

\section{Introduction}

With the recent success of overparameterized and nonparametric models for many predictive tasks in machine learning (ML), the development of the corresponding uncertainty quantification (UQ) has unsurprisingly become a topic of significant interest. 
Na\"{i}ve approaches for forward propagation of error and other methods for inverse uncertainty problems typically apply Monte Carlo methods under a Bayesian framework \citep{zhang2021modern}. 
However, the large-scale nature of many problems of interest results in significant computational challenges. 
One of the most successful approaches for solving inverse uncertainty problems is the use of \emph{Gaussian processes} (GP) \citep{williams2006gaussian}.
This is now frequently used for many predictive tasks, including  time-series analysis \citep{roberts2013gaussian}, regression and classification \citep{williams1998bayesian,williams2006gaussian}. GPs are also valuable in deep learning theory due to their appearance in the infinite-width limits of Bayesian neural networks \cite{N96,JGH18}.

A prominent feature of modern ML tasks is their large number of attributes: for example, in computer vision and natural language tasks, input dimensions can easily scale into the tens of thousands. This is concerning in light of the prevailing theory that GP performance often deteriorates in higher input dimensions.
This \emph{curse of dimensionality} for GPs has been rigorously demonstrated through error estimates for the kernel estimator \citep{von2004distance,jin2021learning}, showing that test error for most kernels scales in the number of data points as $\mathcal{O}(n^{-\alpha/d})$ for some $\alpha > 0$, where $d$ is the input dimension. This is further supported by empirical evidence \citep{spigler2020asymptotic}. However, it is well-known that Bayesian methods can perform well in high dimensions \citep{de2021high}, even outperforming their low-dimensional counterparts when properly tuned \citep{wilson2020bayesian}. Developments in the \emph{double descent} literature have helped to close this theory-practice gap by demonstrating that different behavior occurs when $n$ and $d$ scale \emph{proportionally}, and performance may actually \emph{improve} with larger input dimensions \cite{liu2021kernel}. Fortunately, ML datasets often fall into this regime.

Although the theoretical understanding of the predictive capacity of high-dimensional ML models continues to advance rapidly, analogous theoretical treatments for measures of uncertainty have only recently begun to bear fruit  \citep{clarte2022study,clarte2022theoretical}. Several common measures of model quality which incorporate inverse uncertainty quantification are Bayesian in nature, the most prominent of which are the \emph{marginal likelihood} and various forms of \emph{cross-validation}. Marginal likelihood and posterior distributions are often intractable for arbitrary models (e.g., Bayesian neural networks \citep{goan2020bayesian}),
yet their explicit forms are well known for GPs \citep{williams2006gaussian}. It is generally believed that performance under the marginal likelihood \emph{should not improve} with the addition of spurious covariates \cite{lotfi2022bayesian}. 
The celebrated work of \citet{fong2020marginal} relating marginal likelihood to cross-validation error would suggest that the marginal likelihood should behave similarly to test error, yet earlier work in statistical physics \cite{bruce1994statistical} suggests otherwise. The situation is further complicated as hyperparameters are not often fixed in practice, but are tuned relative to data, in a process known as \emph{empirical Bayes}.

An adjacent phenomenon is the \emph{cold posterior effect} (CPE): Bayesian neural networks exhibit optimal performance when the posterior is \emph{tempered} \cite{wenzel2020good}. As this effect has been observed in GPs as well \cite{adlam2020cold}, we focus our attention onto choices of hyperparameters which induce tempered posteriors. While we only encounter CPE in a limited capacity, we find that the cold posterior setting exacerbates more interesting qualitative behavior. Our main results (see Theorem~\ref{thm:Main} and Proposition~\ref{prop:DD}) are summarized as follows.
\emph{\begin{itemize}[leftmargin=*]
    \item \textbf{Monotonicity}: For two optimally regularized scalar GPs, both fit to a sufficiently large set of iid normalized and whitened input-output pairs, the better performing model under marginal likelihood is the one with larger input dimension.%
    \item \textbf{Double Descent}: For sufficiently small temperatures, GP cross-validation metrics exhibit double descent if and only if the mean squared error for the corresponding kernel regression task exhibits double descent (see \citet{liang2020just,liu2021kernel} for sufficient conditions).
\end{itemize}}

Along the way, we identify optimal choices of temperature (which can be interpreted as noise in the data) under a tempered posterior setup --- see Table \ref{tab:Summary} for a summary. 
In line with previous work on double descent curves \citep{belkin2019reconciling}, our objective is to investigate the behavior of the marginal likelihood with respect to model complexity, which is often given by the number of parameters in parametric settings \citep{d2020triple,derezinski2020exact,hastie2022surprises}). GPs are non-parametric, and while notions of \emph{effective dimension} do exist \citep{zhang2005learning,alaoui2015fast}, it is common to instead consider the input dimension in place of the number of parameters in this context  \citep{liang2020just,liu2021kernel}. We stress that the distinction between input dimension and model complexity should be taken into account when contrasting our results with existing work. 

Our results highlight that the common curse of dimensionality heuristic can be bypassed through an empirical Bayes procedure. Furthermore, the performance of optimally regularized GPs (under several metrics), can be improved with additional covariates (including synthetic ones). Our theory is supported by experiments performed on real large datasets. %
Our results also highlight that marginal likelihood and cross-validation metrics exhibit fundamentally different behavior for GPs, and requires separate analyses. 
Additional experiments, including the effect of ill-conditioned inputs, alternative data distributions, and choice of underlying kernel, are conducted in Appendix A. Details of the setup for each experiment are listed in Appendix B.

\begin{table*}
\centering
\begin{tabular}{|l|c|c|}
\hline
\rowcolor{Gray}\textbf{Performance Metric} & \textbf{Error Curve} & \textbf{Optimal $\gamma$}\\
\hline
Marginal Likelihood / Free Energy (\ref{eq:BayesEntropyGP}) & Monotone (Thm. \ref{thm:Main}) & eqn. (\ref{eq:OptGamma}) \\
Posterior Predictive $L^2$ Loss (\ref{eq:PPL2}) & Double Descent (Prop. \ref{prop:DD}) & $0$ \\
Posterior Predictive NLL (\ref{eq:PPNLL}) & Double Descent (Prop. \ref{prop:DD}) & $\mathbb{E}\|\bar{f}(x) - y\|^2$\\
\hline
\end{tabular}
\caption{\label{tab:Summary}Behavior of UQ performance metrics and optimal posterior temperature $\gamma$.\vspace{-.5cm}}
\end{table*}

\section{Background}
\label{sxn:background}

\subsection{Gaussian Processes}

A \emph{Gaussian process} is a stochastic process $f$ on $\mathbb{R}^{d}$
such that for any set of points $x_{1},\dots,x_{k}\in\mathbb{R}^{d}$,
$(f(x_{1}),\dots,f(x_{k}))$ is distributed as a multivariate Gaussian
random vector \citep[\S2.2]{williams2006gaussian}. Gaussian processes are completely determined by their
\emph{mean} and \emph{covariance functions}: if for any $x,x^\prime\in\mathbb{R}^{d}$,
$\mathbb{E}f(x)=m(x)$ and $\cov(f(x),f(x^\prime))=k(x,x^\prime)$, then we say
that $f\sim\GP(m,k)$. Inference for GPs is
informed by Bayes' rule: letting $(X_{i},Y_{i})_{i=1}^{n}$ denote
a collection of iid input-output
pairs, we impose the assumption that $Y_{i} =  f(X_{i})+\epsilon_{i}$ where each $\epsilon_{i}\sim\mathcal{N}(0,\gamma)$,
\new{yielding a Gaussian likelihood
$
p(Y\vert f,X)=(2\pi\gamma)^{-n/2}\exp(-\tfrac{1}{2\gamma}\| Y-f(X)\| ^{2}).
$
The parameter $\gamma$ is the \emph{temperature} of the model, and dictates the perceived accuracy of the labels. For example, taking $\gamma \to 0^+$ considers a model where the labels are noise-free. }

For the prior, we assume that $f\sim\GP(0,\lambda^{-1}k)$ for a
fixed covariance kernel $k$ and regularization parameter $\lambda>0$.
While other mean
functions $m$ can be considered, in the sequel we will
consider the case where $m\equiv0$. Indeed, if $m \neq 0$, then one can instead consider $\tilde{Y}_i = Y_i - m(X_i)$, so that $\tilde{Y}_i = \tilde{f}(X_i) + \epsilon_i$ and the corresponding prior for $\tilde{f}$ is zero-mean.
The Gram matrix $K_X \in \mathbb{R}^{n\times n}$ for $X$ has elements $K_X^{ij} = k(X_i,X_j)$. Let $\boldsymbol{x} = (x_1,\dots,x_m)$ denote a collection of $N$ points in $\mathbb{R}^d$,  $f(\boldsymbol{x}) = (f(x_1),\dots,f(x_m))$ and denote by $K_{\boldsymbol{x}} \in \mathbb{R}^{m\times m}$ and $k_{\boldsymbol{x}} \in \mathbb{R}^{n\times m}$ the matrices with elements $K_{\boldsymbol{x}}^{ij} = k(x_i,x_j)$ and $k_{\boldsymbol{x}}^{ij} = k(X_i,x_j)$. 

Given this setup, we are interested in several cross-validation metrics which quantify the uncertainty of the model.
The \textbf{posterior predictive distribution} (PPD) of $f(\boldsymbol{x})$ given $X,Y$ is \citep[pg. 16]{williams2006gaussian}
\[
f(\boldsymbol{x})\vert X, Y \sim \mathcal{N}(\bar{f}(\boldsymbol{x}),\lambda^{-1}\Sigma(\boldsymbol{x})),
\]
where $\bar{f}(\boldsymbol{x}) = k_{\boldsymbol{x}}^\top (K_X + \lambda \gamma I)^{-1} Y$ and $\Sigma(\boldsymbol{x}) = K_{\boldsymbol{x}} - k_{\boldsymbol{x}}^\top (K_X + \lambda \gamma I)^{-1} k_{\boldsymbol{x}}$.
This defines a posterior predictive distribution $\rho^\gamma$ on the GP $f$ given $X,Y$ (so $f~\vert~X,Y~\sim~\rho^\gamma$). 
If we let $\boldsymbol{y} = (y_1,\dots,y_m)$ denote a collection of $m$ associated \emph{test labels} corresponding to our test data $\boldsymbol{x}$, the \textbf{posterior predictive $L^2$ loss} (PPL2) 
is the quantity
\begin{equation}
\label{eq:PPL2}
\ell(\boldsymbol{x},\boldsymbol{y}) \coloneqq \mathbb{E}_{f\sim\rho^{\gamma}}\| f(\boldsymbol{x})-\boldsymbol{y}\| ^{2}=\| \bar{f}(\boldsymbol{x})-\boldsymbol{y}\|^{2}+\tfrac{1}{\lambda}\tr(\Sigma(\boldsymbol{x})).
\end{equation}
Closely related is the \textbf{posterior predictive negative log-likelihood} (PPNLL), given by
\ifdefined\techreport
\begin{equation}
\label{eq:PPNLL}
L(\boldsymbol{x},\boldsymbol{y}\vert X,Y)\coloneqq -\mathbb{E}_{f\sim\rho^{\gamma}}\log p(\boldsymbol{y}\vert f,\boldsymbol{x})=\tfrac{1}{2\gamma}\|\bar{f}(\boldsymbol{x})-\boldsymbol{y}\| ^{2}+\tfrac{1}{2\lambda\gamma}\tr(\Sigma(\boldsymbol{x}))+\tfrac{m}{2}  \log(2\pi\gamma).
\end{equation}
\else
\begin{multline}
\label{eq:PPNLL}
L(\boldsymbol{x},\boldsymbol{y}\vert X,Y)\coloneqq -\mathbb{E}_{f\sim\rho^{\gamma}}\log p(\boldsymbol{y}\vert f,\boldsymbol{x})\\=\tfrac{1}{2\gamma}\|\bar{f}(\boldsymbol{x})-\boldsymbol{y}\| ^{2}+\tfrac{1}{2\lambda\gamma}\tr(\Sigma(\boldsymbol{x}))+\tfrac{m}{2}  \log(2\pi\gamma).
\end{multline}
\fi

\subsection{Marginal Likelihood}

\label{sec:MarginalLikelihood}

The fundamental measure of model performance in Bayesian statistics is the \emph{marginal likelihood} (also known as the \emph{partition function} in statistical mechanics). 
Integrating the likelihood over the prior distribution $\pi$ provides a probability density of data under the prescribed model. 
Evaluating this density at the training data gives an indication of model suitability before posterior inference.
Hence, the marginal likelihood is $\mathcal{Z}_n = \mathbb{E}_{f \sim \pi} p(Y \vert f,X)$. 
A larger marginal likelihood is typically understood as an indication of superior model quality. 
The \textbf{Bayes free energy} $\mathcal{F}_n = -\log \mathcal{Z}_n$ is interpreted as an analogue of the test error, where smaller $\mathcal{F}_n$ is desired. 

The \textbf{marginal likelihood for a Gaussian process} is straightforward
to compute: since $Y_{i} = f(X_{i})+\epsilon_{i}$ under the likelihood, 
and $(f(X_{i}))_{i=1}^{n}\sim\mathcal{N}(0,\lambda^{-1}K_{X})$ under
the GP prior $\pi = \mathcal{GP}(0,\lambda^{-1}k)$, we have $Y_{i}\vert X\sim\mathcal{N}(0,\lambda^{-1}K_{X}+\gamma I)$,
and hence the Bayes free energy is given by \citep[eqn. (2.30)]{williams2006gaussian}
\ifdefined\techreport
\begin{equation}
\label{eq:BayesEntropyGP}
\mathcal{F}_{n}^{\gamma}=\tfrac{1}{2}\lambda Y^{\top}(K_{X}+\lambda\gamma I)^{-1}Y+\tfrac{1}{2}\log\det(K_{X}+\lambda\gamma I)-\tfrac{n}{2}\log\left(\tfrac{\lambda}{2\pi}\right).
\end{equation}
\else
\begin{multline}
\label{eq:BayesEntropyGP}
\mathcal{F}_{n}^{\gamma}=\tfrac{1}{2}\lambda Y^{\top}(K_{X}+\lambda\gamma I)^{-1}Y\\+\tfrac{1}{2}\log\det(K_{X}+\lambda\gamma I)-\tfrac{n}{2}\log\left(\tfrac{\lambda}{2\pi}\right).
\end{multline}
\fi
In practice, the hyperparameters $\lambda,\gamma$ are often tuned to minimize the Bayes free energy. This is an \emph{empirical Bayes procedure}, and typically achieves excellent results \citep{krivoruchko2019evaluation}.

The relationship between PPNLL and the marginal likelihood is perhaps best shown using cross-validation measures. 
Let $I$ be uniform on $\{1,\dots,k\}$ and let $\mathcal{T}$ be a random choice of $k$ indices from $\{1,\dots,n\}$
(the \emph{test set}). Let $\bar{\mathcal{T}}=\{1,\dots,n\}\backslash\mathcal{T}$
denote the corresponding \emph{training set}.
The leave-$k$-out cross-validation score under the PPNLL is defined by $S_{k}^\rho(X,Y)=\mathbb{E}L(X_{\mathcal{T}_{I}},Y_{\mathcal{T}_{I}}\vert X_{\bar{\mathcal{T}}},Y_{\bar{\mathcal{T}}})$. Letting $S_k(X,Y)$ denote the same quantity with the expectation in (\ref{eq:PPNLL}) over $\rho^\gamma$ replaced with an expectation over the prior, the Bayes free energy is the sum of all leave-$k$-out cross-validation scores \citep{fong2020marginal}, that is $\mathcal{F}_{n}^{\gamma}=\sum_{k=1}^{n}S_{k}(X,Y)$.
Therefore, the \textbf{mean Bayes free energy} (or mean free energy for brevity) $n^{-1}\mathcal{F}_{n}^{\gamma}$ can be interpreted as the average cross-validation score in the prior, instead of the posterior prediction. Similar connections can also be formulated in the PAC-Bayes framework \citep{germain2016pac}.

\subsection{Bayesian Linear Regression}

One of the most important special cases of GP regression is \emph{Bayesian linear regression}, obtained by taking $k_{\text{lin}}(x,x^\prime) = x^\top x^\prime$. 
As a special case of GPs, our results apply to Bayesian linear regression, directly extending double descent analysis into the Bayesian setting. 
By Mercer's Theorem \citep[\S4.3.1]{williams2006gaussian}, a realization of a GP $f$ has a series expansion in terms of the eigenfunctions of the kernel $k$. 
As the eigenfunctions of $k_{\text{lin}}$ are linear, $f\sim\mathcal{GP}(0,\lambda^{-1}k_{\text{lin}})$ if and only if 
\[
f(x)=w^{\top}x,\qquad w\sim\mathcal{N}(0,\lambda^{-1}).
\]
More generally, if $\phi:\mathbb{R}^{d}\to\mathbb{R}^{p}$ is a finite-dimensional feature map, then $f(x)=w^{\top}\phi(x)$, $w\sim\mathcal{N}(0,\lambda^{-1})$ is a GP with covariance kernel $k_{\phi}(x,y)=\phi(x)^{\top}\phi(y)$. 
This is the weight-space interpretation of Gaussian processes. 
In this setting, the posterior distribution over the weights satisfies $\rho^\gamma(w) = p(w \vert X,Y) \propto \exp(-\frac1{2\gamma}\|Y - \phi(X)w\|^2-\frac{\lambda}{2}\|w\|^2)$ and the marginal likelihood becomes
\ifdefined\techreport
\begin{equation}
\label{eq:MarginalLikelihoodBLR}
\mathcal{Z}_{n}^\gamma=\int_{\mathbb{R}^{p}}p(Y\vert X,w)\pi(w)\dd w
=\frac{\lambda^{d/2}}{\gamma^{n/2}(2\pi)^{\frac{1}{2}(n+d)}}\int_{\mathbb{R}^{p}}e^{-\frac{1}{2\gamma}\| Y-\phi(X)w\| ^{2}}e^{-\frac{\lambda}{2}\| w\| ^{2}}\dd w,
\end{equation}
\else
\begin{multline}
\label{eq:MarginalLikelihoodBLR}
\mathcal{Z}_{n}^\gamma=\int_{\mathbb{R}^{p}}p(Y\vert X,w)\pi(w)\dd w
\\=\frac{\lambda^{d/2}}{\gamma^{n/2}(2\pi)^{\frac{1}{2}(n+d)}}\int_{\mathbb{R}^{p}}e^{-\frac{1}{2\gamma}\| Y-\phi(X)w\| ^{2}}e^{-\frac{\lambda}{2}\| w\| ^{2}}\dd w,
\end{multline}
\fi
where $\phi(X)=(\phi(X_{i}))_{i=1}^{n}\in\mathbb{R}^{n\times p}$.
Under this interpretation, the role of $\lambda$ as a regularization parameter is clear. 
Note also that if $\lambda = \mu / \gamma$ for some $\mu > 0$, then the posterior $\rho^\gamma(w)$ depends on $\gamma$ as $(\rho^1(w))^{1/\gamma}$ (excluding normalizing constants). 
This is called a \emph{tempered posterior}; if $\gamma < 1$, the posterior is \emph{cold}, and it is \emph{warm} whenever $\gamma > 1$.

\section{Related Work}

\paragraph{Double Descent and Multiple Descent.} \emph{Double descent} is an observed phenomenon in the error curves of kernel regression, where the classical (U-shaped) bias-variance tradeoff in underparameterized regimes is accompanied by a curious monotone improvement in test error as the ratio $c$ of the number of datapoints to the ambient data dimension increases beyond $c=1$. The term was popularized in \citet{belkin2018understand,belkin2019reconciling}.
However, it had been observed in earlier reports \citep{opper1990ability,krogh1992generalization,dobriban2018high,loog2020brief}, and the existence of such non-monotonic behavior as a function of system control parameters should not be unexpected, given general considerations about different phases of learning that are well-known  from the statistical mechanics of learning \citep{EB01_BOOK,MM17_TR}. 
\new{An early precursor to double descent analysis came in the form of the \emph{Stein effect}, which establishes uniformly reduced risk when some degree of regularization is added \citep{strawderman2021charles}. Stein effects have been established for kernel regression in \citet{muandet2014kernel,chang2017data}.} Subsequent theoretical developments proved the existence of double descent error curves on various forms of linear regression \citep{bartlett2020benign,tsigler2020benign,hastie2022surprises,muthukumar2020harmless}, random features models \citep{gerace2020generalisation,liao2020random,holzmuller2020universality,mei2022generalization}, kernel regression \citep{liang2020just,liu2021kernel}, and classification tasks \citep{frei2022benign,wang2021benign,mignacco2020,deng2022model}, and other general feature maps \cite{loureiro2021learning}. 
For non-asymptotic results, subgaussian data is commonly assumed, yet other data distributions have also been considered \citep{derezinski2020exact}. 
Double descent error curves have also been observed in nearest neighbor models \citep{belkin2018overfitting}, decision trees \citep{belkin2019reconciling}, and state-of-the-art neural networks \citep{nakkiran2021deep,spigler2019jamming,geiger2020scaling}.
More recent developments have identified a large number of possible curves in kernel regression \citep{liu2021kernel}, including triple descent
\citep{adlam2020neural,d2020triple} and multiple descent for related volume-based metrics \citep{DKM20_CSSP_neurips}. 
Similar to our results, an optimal choice of regularization parameter can negate the double descent singularity and result in a monotone error curve  \citep{NIPS1991_8eefcfdf,liu2021kernel,nakkiran2020optimal,wu2020optimal}. 
While there does not appear to be clear consensus on a \emph{precise} definition of ``double descent,'' for our purposes, we say that an error curve $E(t)$ exhibits double descent if it contains a single global maximum away from zero at $t^\ast$, and decreases monotonically thereafter. 
This encompasses double descent as it appears in the works above, while excluding some misspecification settings and forms of multiple descent.

\paragraph{Learning Curves for Gaussian Processes.}
The study of error curves for GPs under posterior predictive losses has a long history (see \citet[\S7.3]{williams2006gaussian} and \citet{viering2021shape}). However, most results focus on rates of convergence of posterior predictive loss in the large data regime $n \to \infty$. The resulting error curve is called a \emph{learning curve}, as it tracks how fast the model learns with more data \citep{sollich1998learning,sollich2002learning,le2015asymptotic}. Of particular note are classical upper and lower bounds on posterior predictive loss \citep{opper1998general,sollich2002learning,williams2000upper}, which are similar in form to counterparts in the double descent literature \citep{holzmuller2020universality}. For example, some upper bounds have been obtained with respect to forms of \emph{effective dimension}, defined in terms of the Gram matrix \citep{zhang2005learning,alaoui2015fast}. Contraction rates in the posterior have also been examined \citep{lederer2019posterior}. In our work, we consider error curves over dimension rather than data, but we note that our techniques could also be used to study learning curves.

\paragraph{Cold Posteriors.}

Among the recently emergent phenomena encountered in Bayesian deep learning is the \emph{cold posterior effect} (CPE): the performance of Bayesian neural networks (BNNs) appears to improve for tempered posteriors when $\gamma \to 0^+$. 
This presents a challenge for uncertainty prediction: taking $\gamma \to 0^+$ concentrates the posterior around the \emph{maximum a posteriori} (MAP) point estimator, and so the CPE implies that optimal performance is achieved when there is little or no predicted uncertainty. Consequences in the setting of ensembling were discussed in \citet{wenzel2020good}, several authors have since sought to explain the phenomenon through data curation \citep{aitchison2020statistical}, data augmentation \citep{izmailov2021bayesian,fortuin2021bayesian}, and misspecified priors \citep{wenzel2020good}, although the CPE can still arise in isolation of each of these factors \citep{noci2021disentangling}. While our setup is too simple to examine the CPE at large, we find some common forms of posterior predictive loss are optimized as $\gamma \to 0^+$. %
\vspace{.25cm}

\section{Monotonicity in Bayes Free Energy}
\label{sec:Monotonicity}
\begin{figure*}[t]
\centering
\includegraphics[width=0.8\textwidth]{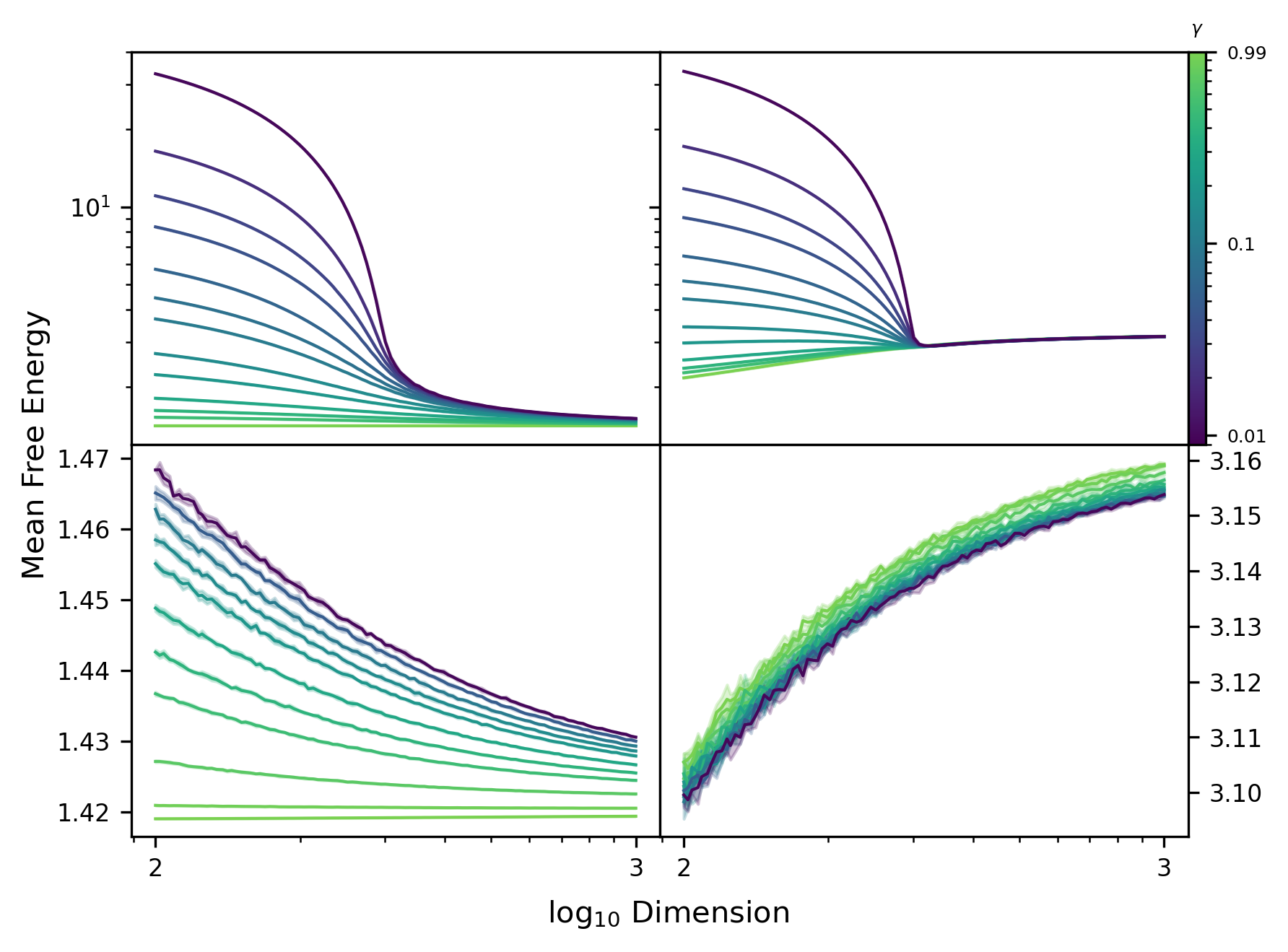}
\caption{\centering\label{fig:Monotone}Error curves for mean Bayes free energy $n^{-1}\mathcal{F}_n^\gamma$ for \textbf{synthetic data} under linear (top) and Gaussian (bottom) kernels, with $\lambda = \lambda^\ast$ (left; \textbf{monotone decreasing}) and $\lambda = 0.01$ (right; \textbf{increases at higher input dimensions}).\vspace{-.25cm}}%
\end{figure*}

In this section, we investigate the behavior of the Bayes free energy using the explicit expression in (\ref{eq:BayesEntropyGP}). %
First, to facilitate our analysis, we require the following assumption on the kernel $k$.
\vspace{.35cm}
\begin{assumption*}
The kernel $k$ is formed by a function $\kappa:\mathbb{R}\to\mathbb{R}$ that is continuously differentiable on $(0,\infty)$ and is one of the following two types:
\vspace{1cm}
\begin{enumerate}[label=(\Roman*)]
    \item \textbf{Inner product kernel:} $k(x,x^\prime) = \kappa(x^\top x^\prime / d)$ for $x,x^\prime\in \mathbb{R}^d$, where $\kappa$ is three-times continuously differentiable in a neighbourhood of zero, with $\kappa'(0) > 0$. Let
    \[
    \alpha = \kappa'(0),\qquad \beta = \kappa(1) - \kappa(0) - \kappa'(0). 
    \]
    \item \textbf{Radial basis kernel:} $k(x,x^\prime) = \kappa(\|x - x^\prime\|^2 / d)$ for $x,x^\prime\in\mathbb{R}^d$, where $\kappa$ is three-times continuously differentiable on $(0,\infty)$, with $\kappa'(2) < 0$. Let
    \[
    \alpha = -2\kappa'(2),\qquad \beta = \kappa(0) + 2\kappa'(2) - \kappa(2).
    \]
\end{enumerate}
\end{assumption*}
This assumption allows for many common covariance kernels used for GPs, including polynomial kernels $k(x,x^\prime)=(c+x^\top x^\prime /d)^p$, the exponential kernel $k(x,x^\prime)=\exp(x^\top x^\prime /d)$, the Gaussian kernel $k(x,x^\prime)=\exp(-\|x-x^\prime\|^2 / d)$, the multiquadric kernel $k(x,x^\prime) = (c+\|x-x^\prime\|^2 / d)^p$, the inverse multiquadric $k(x,x^\prime) = (c+\|x-x^\prime\|^2 / d)^{-p}$ kernels, and the Mat\'{e}rn kernels 
$$k(x,x^\prime)~=~(2^{\nu-1}\Gamma(\nu))^{-1} \|x~-~x^\prime\|^\nu K_{\nu}(\|x-x^\prime\|)$$
(where $K_\nu$ is the Bessel-$K$ function). Different bandwidths can also be incorporated through the choice of $\kappa$. Changing bandwidths between input dimensions can be incorporated into the variances of the data; to see the effect of this, we refer to Figure \ref{fig:IllConditioned} in Appendix A.
However, it does exclude some of the more recent and sophisticated kernel families, e.g., neural tangent kernels. Due to a result of \citet{karoui2010}, the Gram matrices of kernels satisfying this assumption exhibit limiting spectral behavior reminiscent of that for the linear kernel, $k(x,x^\prime) = c + x^\top x^\prime / d$. Roughly speaking, from the perspective of the marginal likelihood, we can treat GPs as Bayesian linear regression.

For our theory, we first consider the ``best-case scenario,'' where the prior is perfectly specified and its mean function $m$ is used to generate $Y$: $Y_i = m(X_i) + \epsilon_i$, where each $\epsilon_i$ is iid with zero mean and unit variance. By a change of variables, we can assume (without loss of generality) that $m \equiv 0$, so that $Y_i = \epsilon_i$, and is therefore independent of $X$. 
To apply the Marchenko-Pastur law from random matrix theory, we consider the large dataset -- large input dimension limit, where $n$ and $d$ scale linearly so that $d / n \to c \in (0,\infty)$. The inputs are assumed to have been \emph{whitened} and to be independent zero-mean random vectors with unit covariance. 
Under this limit, the sequence of mean Bayes entropies $n^{-1}\mathcal{F}_n^\gamma$, for each $n=1,2,\dots$, converges in expectation over the training set to a quantity $\mathcal{F}_\infty^\gamma$ which is more convenient to study. 
Our main result is presented in Theorem \ref{thm:Main}; the proof is delayed to 
Appendix G.

\begin{figure*}[t]
\centering

  \includegraphics[width=.45\textwidth]{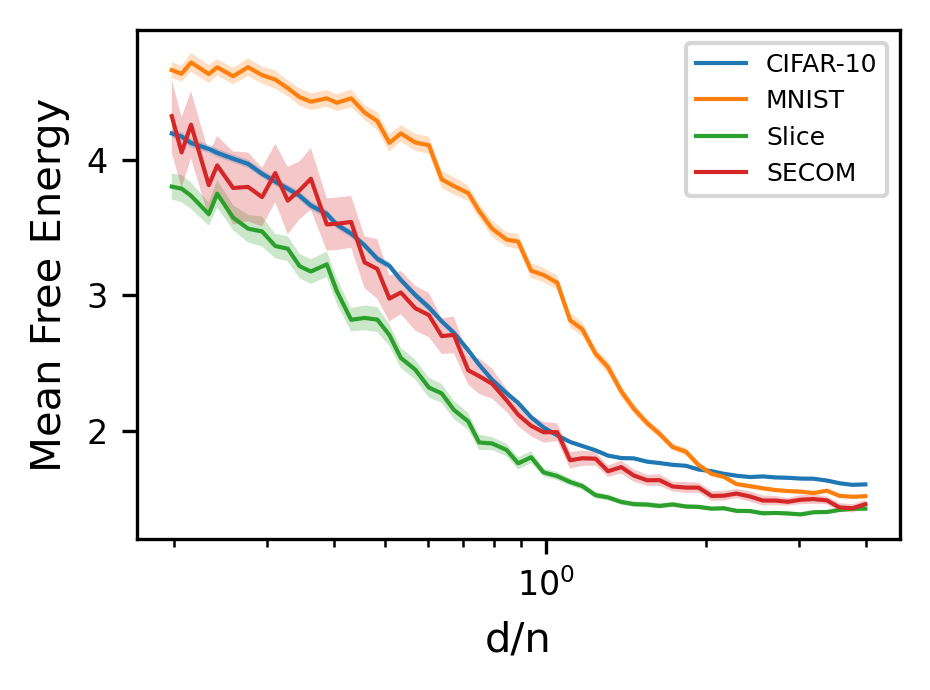}
  \includegraphics[width=.45\textwidth]{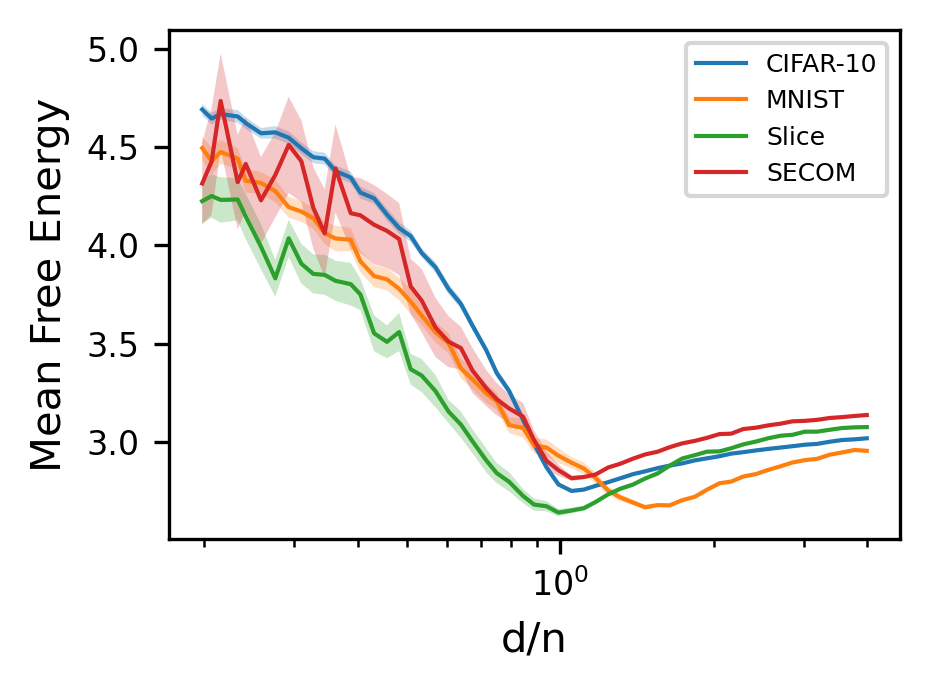}
\caption{\label{fig:RealBayesCIFAR}\label{fig:RealLinBayesCIFAR}\centering Error curves for mean Bayes free energy with $\gamma = 0.1$ under a range of datasets; linear kernel with $\lambda = \lambda^\ast$ (left), and $\lambda = 0.01$ (right); \textbf{curves for real data match Figure \ref{fig:Monotone} (top)}.}
\vspace{-.5cm}
\end{figure*}

\vspace{.5cm}
\begin{theorem}[Limiting Bayes Free Energy]
\label{thm:Main}
Let $X_1,X_2,\dots$ be independent and identically distributed zero-mean random vectors in $\mathbb{R}^d$ with unit covariance, satisfying $\mathbb{E}\|X_i\|^{5+\delta} < +\infty$ for some $\delta > 0$. For each $n=1,2,\dots$, let $\mathcal{F}_n^\gamma$ denote (\ref{eq:BayesEntropyGP}) applied to $X = (X_i)_{i=1}^n$ and $Y = (Y_i)_{i=1}^n$, with each $Y_i \sim \mathcal{N}(0,1)$. If $n,d\to\infty$ such that $d/n \to c \in (0,\infty)$, then
\[
\mathcal{F}_\infty^\gamma \coloneqq \lim_{n\to\infty} n^{-1} \mathbb{E}\mathcal{F}_n^\gamma,
\]
is well-defined. 
In this case,
\begin{enumerate}[label=(\alph*), ref=\ref{thm:Main}\alph*]
    \item \label{thm:MainGamma}If $\lambda = \mu / \gamma$ for some $\mu > 0$, there exists an optimal temperature $\gamma^\ast$ which minimizes $\mathcal{F}_\infty^\gamma$, which is given by
    \begin{equation}
    \label{eq:OptGamma}
    \gamma^\ast = c - 1 - \tfrac{c}{\alpha}(\beta + \mu) + \sqrt{(1 + \tfrac{c}{\alpha}(\beta+\mu+\alpha))^2 - 4c}.
    \end{equation}
\end{enumerate}
If the kernel $k$ depends on $\lambda$ such that $\alpha$ is constant in $\lambda$ and $\beta = \beta_0\lambda$ for $\beta_0 \in [0,1)$, then
\begin{enumerate}[label=(\alph*),resume, ref=\ref{thm:Main}\alph*]
    \item \label{thm:MainLambda}If $\gamma \in (0,1-\beta_0)$, there exists a unique optimal $\lambda^\ast > 0$ minimizing $\mathcal{F}_\infty^\gamma$ satisfying
    \begin{equation}
    \label{eq:OptLambda}
    \lambda^\ast = \frac{\alpha[(c+1)(\gamma+\beta_0) + \sqrt{(c-1)^2 + 4c(\gamma+\beta_0)^2}]}{c(1-(\gamma+\beta_0)^2)}.
    \end{equation}
    If $\gamma \geq 1-\beta_0$, then no such optimal $\lambda^\ast$ exists.
    \item \label{thm:MainMonotone}For any temperature $0 < \gamma < 1 - \beta_0$, at $\lambda = \lambda^\ast$, $\mathcal{F}_\infty^\gamma$ is \textbf{monotone decreasing} in $c \in (0,\infty)$.
\end{enumerate}
\end{theorem}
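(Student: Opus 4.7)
My strategy is to first reduce $\mathbb{E}_Y\mathcal{F}_n^\gamma$ to a deterministic limit $\mathcal{F}_\infty^\gamma$ via random matrix theory, then to do calculus for (a) and (b), and finally a sign analysis for (c). Since $Y\sim\mathcal{N}(0,I)$ is independent of $X$ in the theorem's setting, integrating over $Y$ first gives
\[
\mathbb{E}_Y\mathcal{F}_n^\gamma = \tfrac{\lambda}{2}\,\tr(K_X+\lambda\gamma I)^{-1} + \tfrac{1}{2}\log\det(K_X+\lambda\gamma I) - \tfrac{n}{2}\log\tfrac{\lambda}{2\pi}.
\]
The El Karoui theorem (already invoked in the text preceding the statement for exactly this purpose) tells us that, under the Assumption, $K_X$ equals $\alpha\,\tfrac{1}{d}XX^\top + \beta I$ plus a rank-one piece and a vanishing operator-norm remainder, so the Marchenko-Pastur law applied to $\tfrac{1}{d}XX^\top$ yields the deterministic limit
\[
\mathcal{F}_\infty^\gamma = \tfrac{\lambda}{2}\,m_c(\tilde\lambda) + \tfrac{1}{2}\,\ell_c(\tilde\lambda) + \tfrac{1}{2}\log\alpha - \tfrac{1}{2}\log\tfrac{\lambda}{2\pi}, \qquad \tilde\lambda := \tfrac{\beta+\lambda\gamma}{\alpha},
\]
where $m_c(z) = \int(t+z)^{-1}\,\dd\mu_c(t)$ and $\ell_c(z) = \int\log(t+z)\,\dd\mu_c(t)$ against the Marchenko-Pastur law $\mu_c$ of ratio $c$. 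The $5+\delta$-moment assumption is exactly what is required to pass the limit inside the trace and log-determinant via uniform integrability.

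For parts (a) and (b), I would substitute the explicit formula $m_c(z) = \tfrac{1}{2cz}\bigl[\sqrt{(z+c-1)^2+4cz}-(z+c-1)\bigr]$, together with the corresponding closed form for $\ell_c$ obtained from $\ell_c'(z)=m_c(z)$, and then impose either $\lambda=\mu/\gamma$ or $\beta=\beta_0\lambda$. In each case, the first-order condition reduces, after rationalizing the square root, to a quadratic whose positive root recovers (\ref{eq:OptGamma}) or (\ref{eq:OptLambda}) respectively. Uniqueness of the optimum follows from strict convexity of $\mathcal{F}_\infty^\gamma$ in the relevant parameter---which can be verified by a second-derivative computation or inherited from log-concavity of the underlying Gaussian integral---and the restriction $\gamma<1-\beta_0$ in (b) is precisely the condition that prevents the critical point from being pushed to $\lambda=\infty$.

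For part (c), the envelope theorem gives $\tfrac{\dd}{\dd c}\mathcal{F}_\infty^\gamma\big|_{\lambda=\lambda^\ast} = \partial_c\mathcal{F}_\infty^\gamma\big|_{\lambda=\lambda^\ast}$, which removes the implicit $c$-dependence of $\lambda^\ast$. The $c$-derivatives of $m_c$ and $\ell_c$ can be extracted by differentiating the Marchenko-Pastur Silverstein equation, and substituting $\lambda^\ast$ should collapse the result to a rational function of the auxiliary variable $s=\gamma+\beta_0\in(0,1)$ and $c$ alone, whose sign I then need to check. The \emph{main obstacle} is this final algebraic simplification: the square root in (\ref{eq:OptLambda}) obstructs transparency, and I anticipate needing a careful substitution (likely $s=\gamma+\beta_0$ together with a rationalization of the square root) to reduce monotonicity in $c$ to a single elementary inequality. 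A secondary technical issue is ensuring that the convergence $n^{-1}\mathbb{E}\mathcal{F}_n^\gamma\to\mathcal{F}_\infty^\gamma$ is uniform enough in the parameters that the same limiting object serves all three parts, but the $5+\delta$-moment bound appears tailored precisely to this.
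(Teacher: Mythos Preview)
Your overall architecture matches the paper's: reduce to trace plus log-determinant via $\mathbb{E}[Y^\top AY]=\tr A$, invoke El Karoui and Marchenko--Pastur to obtain a closed-form $\mathcal{F}_\infty^\gamma$, then do calculus for (a)--(c). For (a) and (b) your plan is essentially what the paper does; one small correction is that $\mathcal{F}_\infty^\gamma$ is \emph{not} strictly convex in either $\gamma$ or $\lambda$ (e.g.\ in (a) the function has the form $A/\gamma+\tfrac12\log\gamma+C$, whose second derivative changes sign), so uniqueness comes instead from boundary behaviour ($\mathcal{F}_\infty^\gamma\to\infty$ at both endpoints) together with a unique interior critical point.

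For (c) there is a concrete gap. After the envelope theorem, $\partial_c\mathcal{F}_\infty^\gamma$ does \emph{not} collapse to a rational function upon substituting $\lambda^\ast$: the log-determinant contributes a term of the form $\log(c/(\mu T))$ (equivalently $\log(1+\tilde T/c)$ in the $c>1$ case) that survives the substitution. The paper's device, which your plan does not anticipate, is to replace this logarithm by its tangent-line bound $\log x<x-1$ (strict for $x\neq 1$); this produces a genuinely rational upper bound $M$ for $2\partial_c\mathcal{F}_\infty^\gamma$. The punchline is that $M$ factors with the quadratic $c^2(1-\gamma_0^2)\mu^2-2c(c+1)\gamma_0^2\mu-(c-1)^2\gamma_0^2$ in the numerator---exactly the first-order condition defining $\mu^\ast$---so $M=0$ at the optimum and the strict inequality from the log bound gives $\partial_c\mathcal{F}_\infty^\gamma<0$. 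Without this trick, you would be trying to determine the sign of a transcendental expression, and your proposed ``careful substitution'' alone will not remove the logarithm.
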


\begin{figure*}[t]
    \centering
    \includegraphics[width=\textwidth]{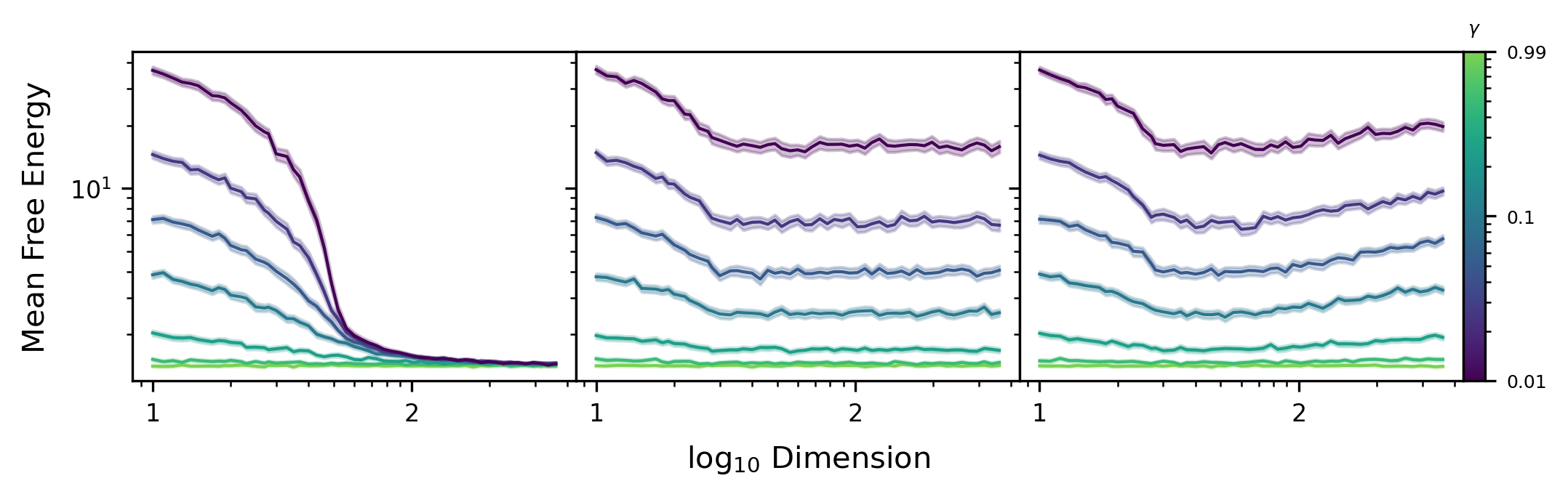}
    \caption{\centering Error curves for mean Bayes free energy under real data with Gaussian (left); repeated data (center); and zeroed data (right), under the linear kernel and $\lambda = \lambda^\ast$. \textbf{Only adding non-zero iid covariates improves model performance.}}
    \label{fig:RealDataAugmented}
\end{figure*}

The expression for the asymptotic Bayes free energy $\mathcal{F}_\infty^\gamma$ is provided in 
Appendix G.
To summarize, first, in the spirit of empirical Bayes, there exists an optimal $\lambda^\ast$ for the Gaussian prior which minimizes the asymptotic mean free energy. %
Under this setup, the choice of $\lambda$ which maximizes the marginal likelihood for a particular realization of $X,Y$ will converge almost surely to $\lambda^\ast$ as $n,d\to\infty$. %
Similar to \citet{nakkiran2020optimal,wu2020optimal}, we find that model performance under marginal likelihood improves monotonically with input dimension when $\lambda = \lambda^\ast$ %
{for a fixed amount of data.}
Indeed, for large $n, d$, $\mathbb{E}\mathcal{F}_n^\gamma \approx n \mathcal{F}_\infty^\gamma$ and $c \approx d / n$, so Theorem \ref{thm:MainMonotone} implies that the expected Bayes free energy decreases (approximately) monotonically with the input dimension, provided $n$ is fixed and the optimal regularizer $\lambda^\ast$ is chosen. %

\paragraph{Discussion of assumptions.} 
The assumption that the kernel scales with $\lambda$ is necessary using our techniques, as $\lambda^\ast$ cannot be computed explicitly otherwise. 
This trivially holds for the linear kernel ($\beta_0 = 0$), but most other choices of $\kappa$ can be made to satisfy the conditions of Theorem \ref{thm:Main} by taking $\kappa(x) \mapsto \eta^{-1} \kappa(\eta x)$, for appropriately chosen bandwidth $\eta \equiv \eta(\lambda)$.
For example, for the quadratic kernel, this gives $k(x,x^\prime) = (\lambda^{-1/2}+\lambda^{1/2}x^{\top}x^\prime)^{2}$. 
Effectively, this causes the regularization parameter to scale non-linearly in the prior kernel.
Even though this is required for our theory, we can empirically demonstrate this monotonicity also holds under the typical setup where $k$ does not change with $\lambda$.
In Figure~\ref{fig:Monotone}, we plot the mean free energy for synthetic Gaussian datasets of increasing dimension at both optimal and fixed values of $\lambda$ for the linear and Gaussian kernels. 
{Since $n$ is fixed,} in line with Theorem \ref{thm:MainMonotone}, the curves with optimally chosen $\lambda$ decrease monotonically with input dimension, while the curves for fixed $\lambda$ appear to increase when the dimension is large. 
Note, however, that the larger $\beta$ for the Gaussian kernel induces a significant regularizing effect. 
A light CPE appears for the Gaussian kernel when $\lambda$ is fixed, but does not seem to occur under $\lambda^\ast$.

While the assumption that $m = 0$ may appear too restrictive, in Appendix C, we show that $m$ is necessarily small when the data is normalized and whitened. Consequently, under a zero-mean prior, the marginal likelihood behaves similarly to our assumed scenario. This translates well in practice: under a similar setup to Figure~\ref{fig:Monotone}, the error curves corresponding to the linear kernel under a range of whitened benchmark datasets %
exhibit the predicted behavior (Figure~\ref{fig:RealLinBayesCIFAR}).

\paragraph{Synthetic covariates.} Since Theorem \ref{thm:Main} implies that performance under the marginal likelihood can improve as covariates are added, it is natural to ask whether an improvement will be seen if the data is augmented with synthetic covariates. To test this, we considered the first 30 covariates of the whitened \texttt{CT Slices} dataset obtained from the UCI Machine Learning Repository \citep{graf20112d}, and we augmented them with synthetic (iid standard normal) covariates; the first 30 covariates repeated; and zeros (for more details, see Appendix A).
While the first of these scenarios satisfies the conditions of Theorem \ref{thm:Main}, the second two do not, since the new data cannot be whitened such that its rows have unit covariance. %
Consequently, the behavior of the mean free energy reflects whether the assumptions of Theorem \ref{thm:Main} are satisfied: only the data with Gaussian covariates exhibits the same monotone decay. From a practical point of view, a surprising conclusion is reached: after optimal regularization, performance under marginal likelihood can be further improved by concatenating Gaussian noise to the~input.

\section{Double Descent in Posterior Predictive Loss}
\label{sec:DoubleDescent}

\begin{figure*}[t]
\centering
\includegraphics[width=0.8\textwidth]{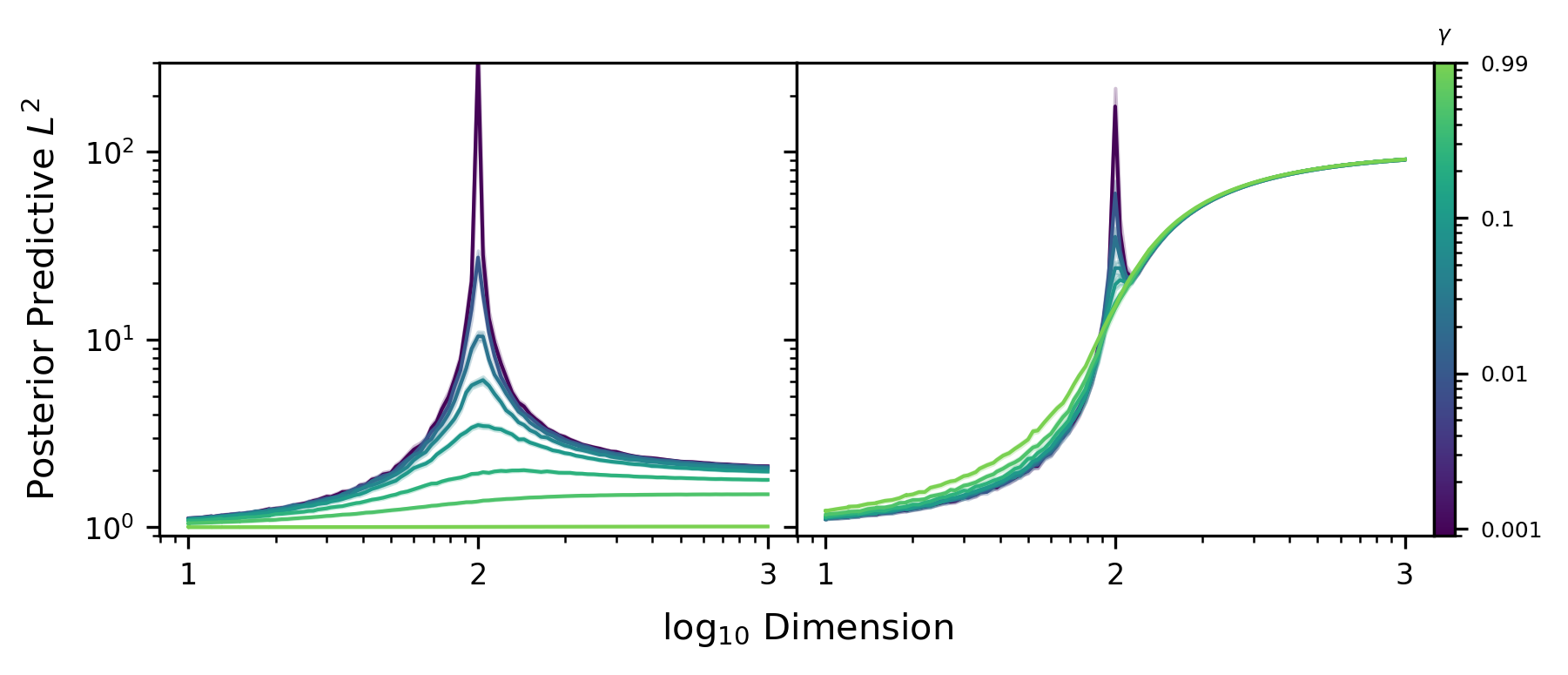}
\caption{\label{fig:PPL2}\centering Posterior predictive $L^2$ loss error curves for \textbf{synthetic data} exhibiting perturbed / tempered double descent under the linear kernel with $\lambda = \lambda^\ast$ (left), and $\lambda = 0.01$ (right).\vspace{-.25cm}}
\end{figure*}
\begin{figure*}[t]
\centering

  \centering
  \includegraphics[width=.45\textwidth]{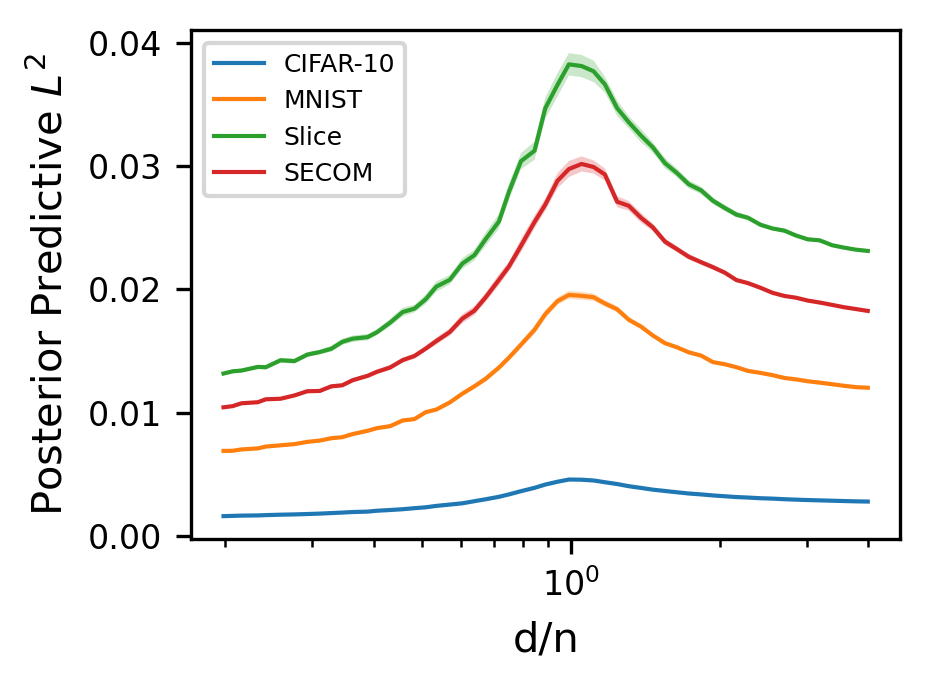}
  \includegraphics[width=.45\textwidth]{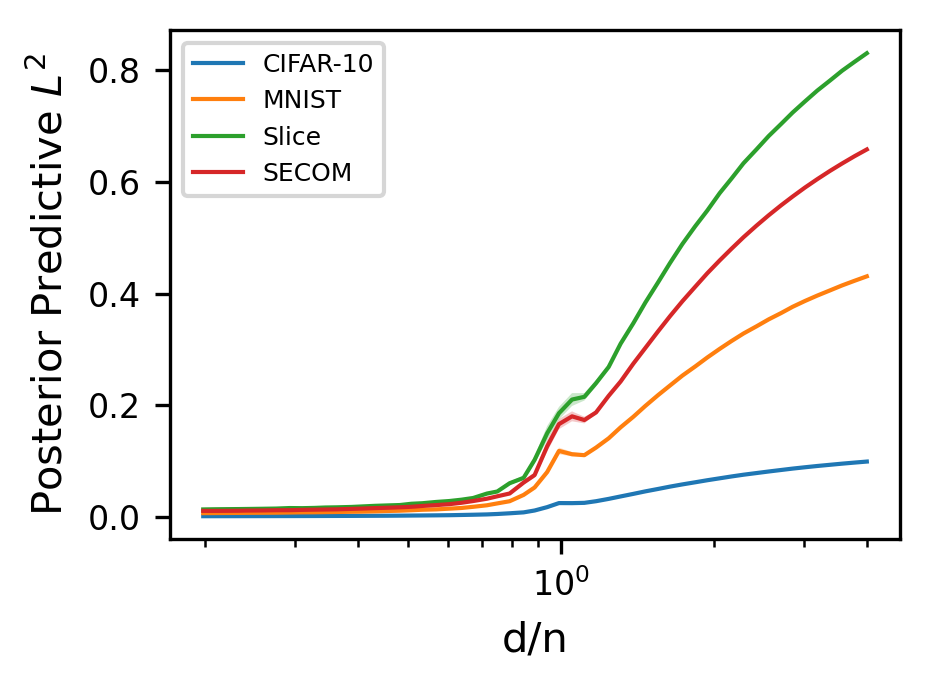}

\captionof{figure}{\label{fig:RealLinPPLCIFAR}\centering PPL2 loss with $\gamma = 0.1$ under the linear kernel with $\lambda = \lambda^\ast$ (left) and $\lambda = 0.01$ (right) on a range of datasets; \textbf{curves for real data match Figure \ref{fig:PPL2}.}}

\end{figure*}

In this section, we will demonstrate that, despite the connections between them, the marginal likelihood and posterior predictive loss can exhibit different qualitative behavior, with the posterior predictive losses potentially exhibiting a double descent phenomenon. 
Observe that the two forms of posterior predictive loss defined in (\ref{eq:PPL2}) and (\ref{eq:PPNLL}) can both be expressed in the form
\[
\mathcal{L} = c_1(\gamma) \underset{\text{MSE}}{\underbrace{\mathbb{E}\|\bar{f}(\boldsymbol{x}) - \boldsymbol{y}\|^2}} + c_2(\lambda,\gamma) \underset{\text{volume}}{\underbrace{\mathbb{E}\tr(\Sigma(\boldsymbol{x}))}} + c_3(\gamma).
\]
The first term is the mean-squared error (MSE) of the predictor $\bar{f}$, and is a well-studied object in the literature. In particular, \textbf{the MSE can exhibit double descent}, or other types of multiple descent error curves depending on $k$, in both ridgeless \citep{holzmuller2020universality,liang2020just} and general \citep{liu2021kernel} settings. 
On the other hand, the volume term has the uniform bound $\mathbb{E}\tr(\Sigma(\boldsymbol{x})) \leq m \mathbb{E} k(x,x)$, so provided $c_2$ is sufficiently small, the volume term should have little qualitative effect. 
The following is immediate.
\begin{proposition}
\label{prop:DD}
Assume that the MSE $\mathbb{E}\|\bar{f}(\boldsymbol{x}) - \boldsymbol{y}\|^2$ for Gaussian inputs $\boldsymbol{x}$ and labels $\boldsymbol{y}$ converges to an error curve $E(c)$ that exhibits double descent as $n \to\infty$ with $d \equiv d(n)$ satisfying $d(n) / n \to c \in (0,\infty)$. If there exists a function $\lambda(\gamma)$ such that $c_2(\lambda(\gamma),\gamma) / c_1(\gamma) \to 0$ as $\gamma \to 0^+$, then for any $\epsilon > 0$, there exists an error curve $\bar{E}(c)$ exhibiting double descent, a positive integer $N$, and $\gamma_0 > 0$ such that for any $0 < \gamma < \gamma_0$ and $n > N$, $|\mathcal{L}/c_1 - \bar{E}| < \epsilon$ at $d = d(n)$ and $\lambda = \lambda(\gamma)$. %
\end{proposition}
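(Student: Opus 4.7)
The plan is to rewrite $\mathcal{L}/c_1$ as MSE plus two remainder terms, argue each remainder is uniformly small under the hypotheses, and conclude via the triangle inequality that $\mathcal{L}/c_1$ is close to (a vertical translate of) $E(c)$, which then inherits double descent.

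Concretely, at $\lambda = \lambda(\gamma)$ and $d = d(n)$,
\[
\frac{\mathcal{L}}{c_1(\gamma)} = \mathbb{E}\|\bar{f}(\boldsymbol{x}) - \boldsymbol{y}\|^2 + \frac{c_2(\lambda(\gamma),\gamma)}{c_1(\gamma)} \mathbb{E}\tr(\Sigma(\boldsymbol{x})) + \frac{c_3(\gamma)}{c_1(\gamma)}.
\]
Setting $\bar{E}(c) := E(c)$ and applying the triangle inequality, I would bound the first piece by the hypothesis: convergence $\mathrm{MSE} \to E(c)$ yields some $N$ with $|\mathrm{MSE} - E(c)| < \epsilon/3$ for all $n > N$. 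The second piece uses the uniform bound $\mathbb{E}\tr(\Sigma(\boldsymbol{x})) \leq m\,\mathbb{E} k(x,x)$ (finite whenever $k$ is bounded on the diagonal, as under the Assumption) together with the hypothesis $c_2/c_1 \to 0$, giving some $\gamma_1 > 0$ below which this term is under $\epsilon/3$. For the third piece, direct inspection shows $c_3/c_1 \equiv 0$ for PPL2 and $c_3/c_1 = m\gamma \log(2\pi\gamma) \to 0$ for PPNLL, giving a $\gamma_2 > 0$; taking $\gamma_0 := \min(\gamma_1,\gamma_2)$ closes the estimate.

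No real obstacle should arise: once the three contributions are separated, each is controlled either by a direct hypothesis or by an elementary bound. The only mild subtlety is that in a more general template the constant shift $c_3/c_1$ could fail to vanish, in which case one would instead take $\bar{E}(c) := E(c) + \lim_{\gamma\to 0^+} c_3(\gamma)/c_1(\gamma)$, which inherits double descent since the definition (a single global maximum away from zero followed by monotone decrease) is invariant under vertical translation. The substantive content of the result is really deferred to the double descent hypothesis on the MSE itself, for which \citet{liang2020just,liu2021kernel} supply sufficient conditions; the proposition merely translates that statement from MSE to the posterior predictive losses in the cold-posterior limit, where the volume contribution becomes perturbative.
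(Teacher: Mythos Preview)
Your proposal is correct and follows essentially the same route as the paper: decompose $\mathcal{L}/c_1$, bound the volume term via $\mathbb{E}\tr(\Sigma(\boldsymbol{x}))\le m\,\mathbb{E}k(x,x)$ together with $c_2/c_1\to 0$, and use convergence of the MSE to $E(c)$. The only cosmetic difference is the handling of $c_3/c_1$: the paper absorbs it into the curve by setting $\bar{E}(c)=E(c)+c_3(\gamma)/c_1(\gamma)$ and runs an $\epsilon/2$ split, whereas you take $\bar{E}=E$, verify $c_3/c_1\to 0$ explicitly for PPL2 and PPNLL, and run an $\epsilon/3$ split---your alternative remark about absorbing a nonvanishing limit of $c_3/c_1$ into $\bar{E}$ is exactly the paper's choice.
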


For \textbf{posterior predictive $L^2$ loss}, in the tempered posterior scenario where $\lambda = \mu / \gamma$, the MSE remains constant in $\gamma$, while $c_2/c_1 = \gamma / \mu$. Since the predictor $\bar{f}$ depends only on $\mu$, the optimal $\gamma$ in the tempered posterior scenario is realised as $\gamma \to 0^+$. 
In other words, under the posterior predictive $L^2$ loss, \emph{the best prediction of uncertainty is none at all}. This highlights a trivial form of CPE for PPL2 losses, suggesting it may not be suitable as a UQ metric. 
Here we shall empirically examine the linear kernel case; similar experiments for more general kernels are conducted in 
Appendix A.
In Figure~\ref{fig:PPL2}(right), we plot posterior predictive $L^2$ loss under the linear kernel on synthetic Gaussian data by varying $\mu$ while keeping $\gamma$ fixed.
We find that colder posteriors induce double descent on the error curves. %
Similar plots on a range of datasets are shown in Figure~\ref{fig:RealLinPPLCIFAR}(right), demonstrating that this behavior carries over to real data.
Choosing $\lambda = \lambda^\ast$ (the optimal $\lambda$ according to marginal likelihood) reveals a more typical set of regularized double descent curves; this is shown in Figure~\ref{fig:PPL2}(left) for synthetic data and Figure~\ref{fig:RealLinPPLCIFAR}(left) for a range of datasets. %
This is due to the monotone relationship between the volume term and $\lambda$, hence the error curve inherits its shape from the behavior of $\lambda^\ast$ (see Appendix A). This should be contrasted with the behavior of classification tasks observed by \citet{clarte2022study}, where the empirical Bayes estimator \emph{mitigates} double descent. 

In contrast, this phenomenon is not the case for \textbf{posterior predictive negative log-likelihood}.
Indeed, letting $\lambda = \mu / \gamma$ and optimizing the expectation of (\ref{eq:PPNLL}) in $\gamma$, the optimal $\gamma^\ast = m^{-1} \mathbb{E}\|\bar{f}(\boldsymbol{x}) - \boldsymbol{y}\|^2$. The expected optimal PPNLL is therefore
\ifdefined\techreport
\begin{equation}
\label{eq:PPNLLOpt}
-\mathbb{E}_{\boldsymbol{x},\boldsymbol{y}}\mathbb{E}_{f\sim\rho^{\gamma^\ast}}\log p(\boldsymbol{y}\vert f,\boldsymbol{x}) = \tfrac{1}{2} m [1 + \log(2\pi \mathbb{E}\|\bar{f}(\boldsymbol{x}) - \boldsymbol{y}\|^2)] + (2\mu)^{-1} \tr(\Sigma(\boldsymbol{x})).
\end{equation}
\else
\begin{multline}
\label{eq:PPNLLOpt}
-\mathbb{E}_{\boldsymbol{x},\boldsymbol{y}}\mathbb{E}_{f\sim\rho^{\gamma^\ast}}\log p(\boldsymbol{y}\vert f,\boldsymbol{x}) \\= \tfrac{1}{2} m [1 + \log(2\pi \mathbb{E}\|\bar{f}(\boldsymbol{x}) - \boldsymbol{y}\|^2)] + (2\mu)^{-1} \tr(\Sigma(\boldsymbol{x})).
\end{multline}
\fi
Otherwise, the PPNLL displays similar behavior to PPL2, as the two are related linearly.

\section{Conclusion}
Motivated by understanding the uncertainty properties of prediction from GP models, we have applied random matrix theory arguments and conducted several experiments to study the error curves of three UQ metrics for GPs. 
Contrary to classical heuristics, model performance under marginal likelihood/Bayes free energy improves monotonically with {input dimension} %
under appropriate regularization (Theorem~\ref{thm:Main}). 
However, Bayes free energy does not exhibit double descent. 
Instead, cross-validation loss inherits a double descent curve from non-UQ settings when the variance in the posterior distribution is sufficiently small (Proposition~\ref{prop:DD}). This was recently pointed out by \citet{lotfi2022bayesian}, where consequences and alternative metrics were proposed.
While our analysis was conducted under the assumption of a perfectly chosen prior mean, similar error curves appear to hold under small perturbations, which always holds for large whitened datasets.

\new{Although our contributions are predominantly theoretical, our results also have noteworthy practical consequences:
\begin{itemize}[leftmargin=*]
\item Tuning hyperparameters according to marginal likelihood is \textbf{essential} to ensuring good performance in higher dimensions, and it \textbf{completely negates the curse of dimensionality}.
\item When using $L^2$ losses as UQ metrics, care should be taken in view of the CPE. As such, \textbf{we do not recommend the use of this metric in lieu of other alternatives}. 
\item In agreement with the conjecture of \citet{wilson2020bayesian}, \textbf{increasing temperature mitigates the double descent singularity}.
\item Our experiments suggest that \textbf{further improvements beyond the optimization of hyperparameters may be possible with the addition of synthetic covariates}, although further investigation is needed before such a procedure can be universally recommended.
\end{itemize}}
RMT techniques are finding increasing adoption in machine learning settings \cite{couillet2011random,liao2021hessian,derezinski2021sparse}. 
In light of the surprisingly complex behavior on display, the fine-scale behavior our results demonstrate, and a surprising absence of UQ metrics in the double descent literature, we encourage increasing adoption of random matrix techniques for studying UQ / Bayesian metrics in double descent contexts and beyond. There are numerous avenues available for future work, including the incorporation of more general kernels (e.g., using results from \citet{fan2020spectra} to treat neural tangent kernels, which are commonly used as approximations for large-width neural networks), and different limiting regimes \citep{lu2022equivalence}.

\textbf{Acknowledgements.}
MM would like to acknowledge the IARPA (contract W911NF20C0035), NSF, and ONR for providing partial support of this work. This research was also partially
supported by the Australian Research Council through an Industrial Transformation Training Centre for Information Resilience (IC200100022) and the Australian Centre of Excellence for Mathematical and Statistical Frontiers (CE140100049).

\ifdefined\keepappendix

\clearpage

\ifdefined\techreport
\else
\onecolumn
\begin{center}

\Large \bf Monotonicity and Double Descent in Uncertainty Quantification with Gaussian Processes \vspace{3pt}\\ {\normalsize SUPPLEMENTARY DOCUMENT}

\end{center}
\fi

\appendix

\section{Additional Empirical Results}
\label{sec:Experiments}

In this section, we consider other factors not covered by our analysis in the main body of the paper. Full experimental details are given in Appendix G. 
\ifdefined\techreport
\else
\fi

\paragraph{\texttt{CT Slices} dataset.}
To demonstrate our procedure for working with real data, we first consider the \texttt{CT Slices} dataset obtained from the UCI Machine Learning Repository \citep{graf20112d}, comprised of $n = 53500$ images $X_1,\dots,X_n \in \mathbb{R}^d$ with $d = 385$ features, and corresponding scalar-valued labels $Y_1,\dots,Y_n \in \mathbb{R}$. This dataset is also used in Figure \ref{fig:RealDataAugmented}. The data was preprocessed in the following way: first, 17 features were observed to be linearly dependent on the others, and were removed to reveal $d = 368$ features. The sample mean $\mu_X$ and sample covariance matrix $\Sigma_X$ of $X_1,\dots,X_n$ were computed, and the input normalized by $X_i \mapsto \Sigma_X^{-1/2} (X_i - \mu_X)$. The labels were similarly normalized as $Y_i \mapsto (Y_i - \mu_Y)/\sigma_Y$, where $\mu_Y$ and $\sigma_Y$ are the sample mean and standard deviation of the labels, respectively. Under this preprocessing, $X$ and $Y$ are assumed to satisfy the conditions of Theorem \ref{thm:Main}. 

Figure \ref{fig:RealBayes} examines the mean Bayes free energy for the linear and Gaussian kernels, under the optimal choice of $\lambda^\ast$ from Theorem \ref{thm:Main}. This figure should be compared to the synthetic data examples shown in Figure \ref{fig:Monotone} (upper left and bottom left). Similarly, Figure \ref{fig:RealLinPPL} is the corresponding version of Figure \ref{fig:PPL2}. Notably, the characteristic behavior of all four plots is still prominent in the real data example.

\begin{figure}[b]
\centering
  \includegraphics[width=.45\textwidth]{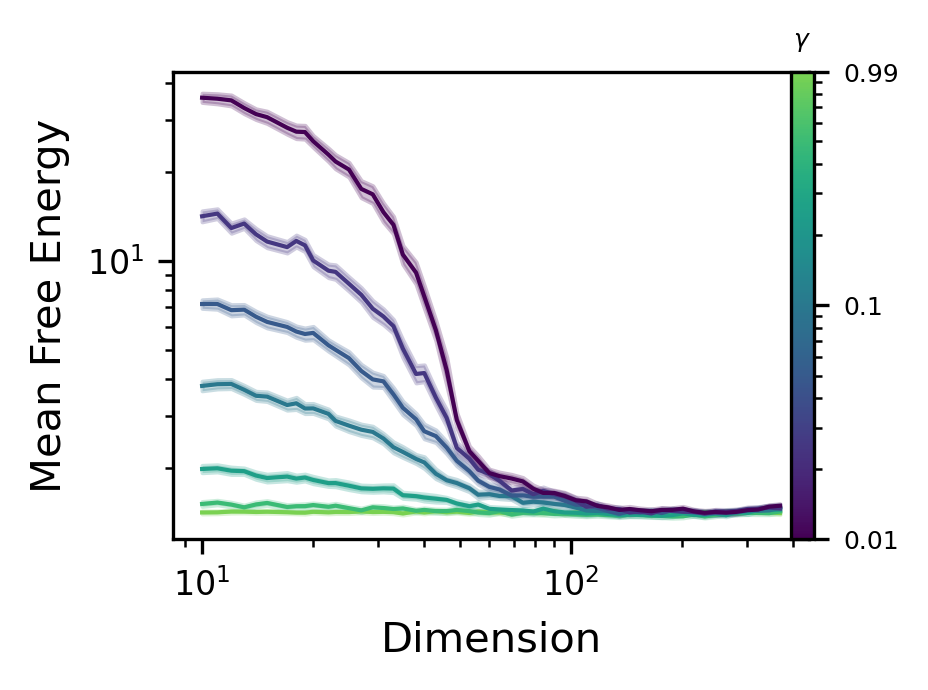}
  \includegraphics[width=.45\textwidth]{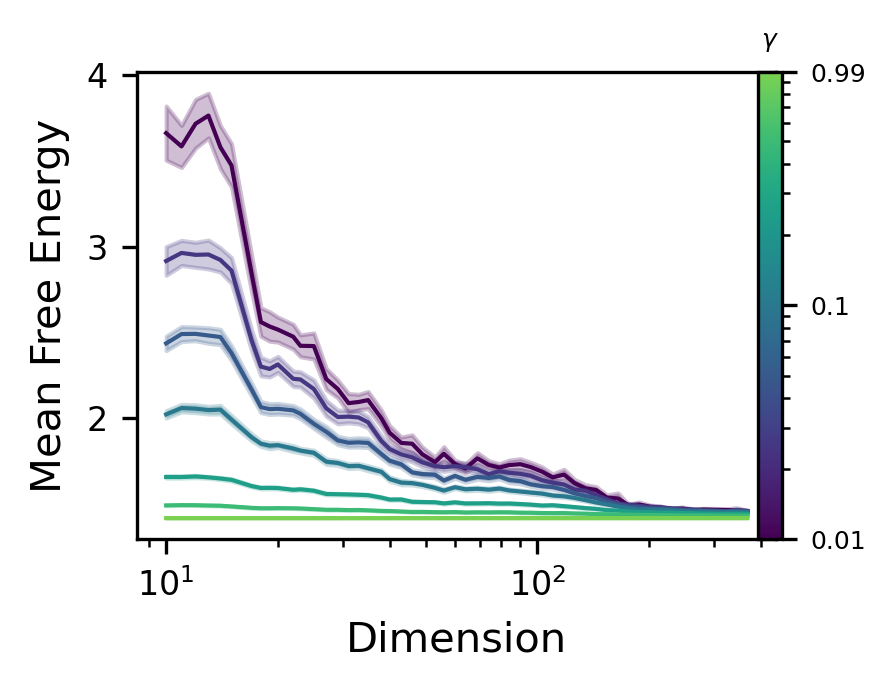}
  \caption{\label{fig:RealBayes}\centering Error curves for mean Bayes free energy under the CT Slices dataset; linear (left) and Gaussian (right) kernels; $\lambda = \lambda^\ast$}
\vspace{-.5cm}
\end{figure}

\begin{figure}[t]
\centering
  \includegraphics[width=.45\textwidth]{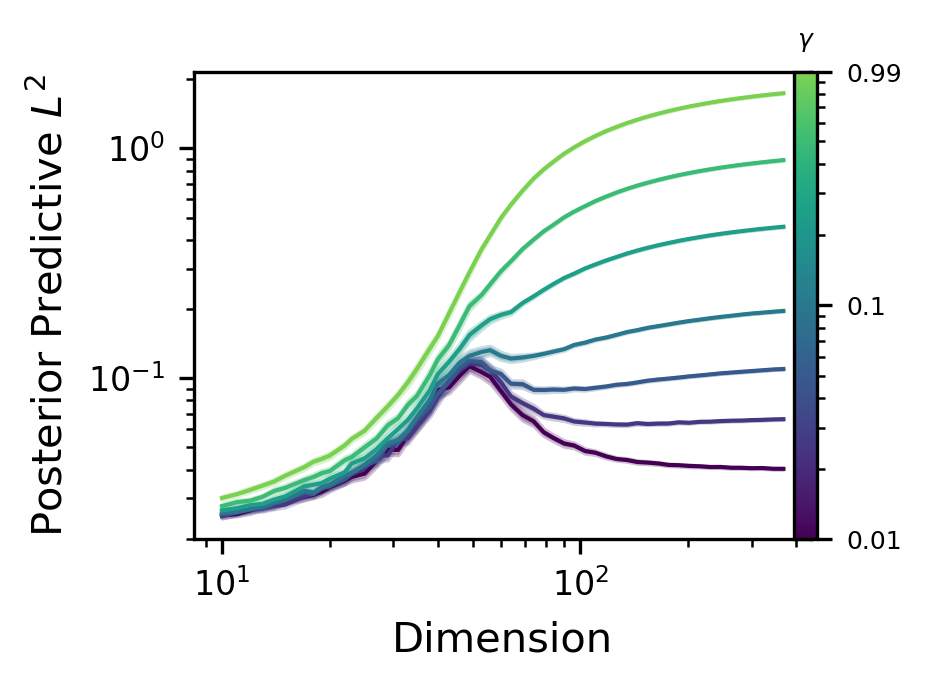}
  \includegraphics[width=.45\textwidth]{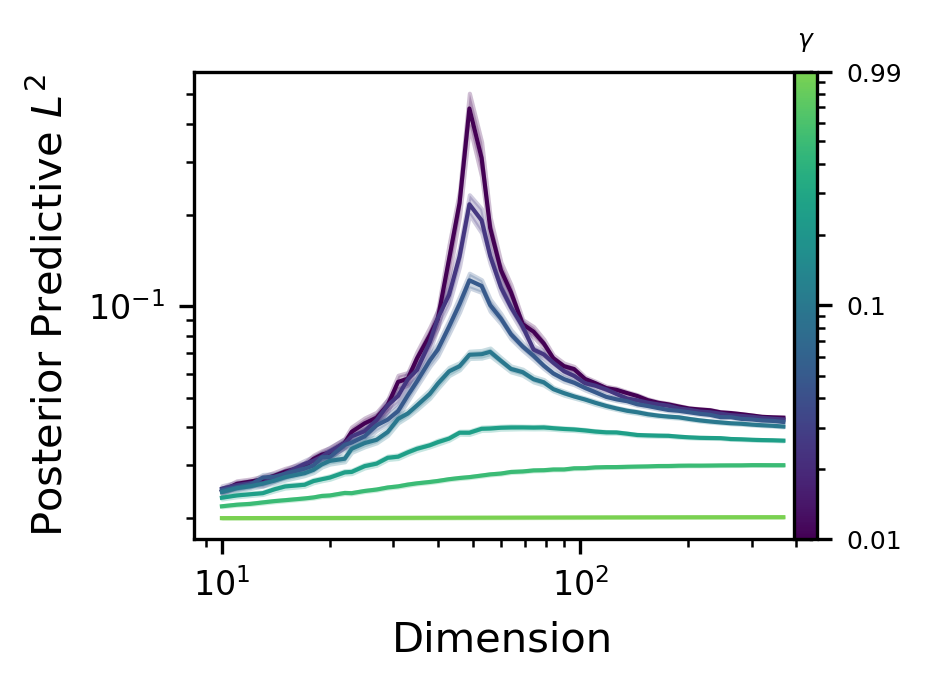}
  \caption{\label{fig:RealLinPPL}\centering PPL2 loss under the linear kernel with $\lambda = 0.01/\gamma$ (left) and $\lambda = \lambda^\ast$ (right) on the CT Slices dataset.}
\vspace{-.5cm}
\end{figure}

\paragraph{Image classification datasets}
We conducted parallel experiments on two larger benchmark datasets that are ubiquitous in the literature --- \texttt{MNIST} \cite{lecun1998gradient} and \texttt{CIFAR10} \cite{krizhevsky2009learning}. To this end, the \texttt{MNIST} and \texttt{CIFAR10} datasets were preprocessed in the same manner as the \texttt{CT Slices} dataset. Both datasets correspond to classification problems with $10$ class labels, however, for our purposes we consider the analogous regression problems over the class labels.

The \texttt{MNIST} training set is comprised of $60,000$ different $28\times28$ grayscale images of handwritten digits from 0-9. After preprocessing, $d=706$ of the 768 features were retained, and $n=175$ images were randomly sampled for use as the dataset. The mean free energy curves under the linear and Gaussian kernel under the optimal $\lambda = \lambda^\ast$, as well as the PPL2 curves for the optimal $\lambda$ and fixed $\mu$ are shown in Figures~\ref{fig:RealBayesMNIST} and \ref{fig:RealLinPPLMNIST}, respectively. Similarly, the \texttt{CIFAR10} training dataset contains $50,000$ different $32\times32$ color images, each with 3 channels. This corresponds to $3072$ features, of which $d = 3003$ were retained after preprocessing, and $n = 900$ images were randomly sampled as the for use as the dataset. Analogous images are presented in Figures~\ref{fig:RealBayesCIFAR} and \ref{fig:RealLinPPLCIFAR}.

It may seem surprising that the behavior of these models is so close to those of well-specified models, since there is no a priori reason to assume the mean of the data-generating process is zero. However, in Appendix B we demonstrate that this is merely a consequence of normalization of the response variables, and that such normalization forces tight control over the gradients of the mean function under expectation.
\begin{figure}[h]
\centering
  \centering
  \includegraphics[width=.45\textwidth]{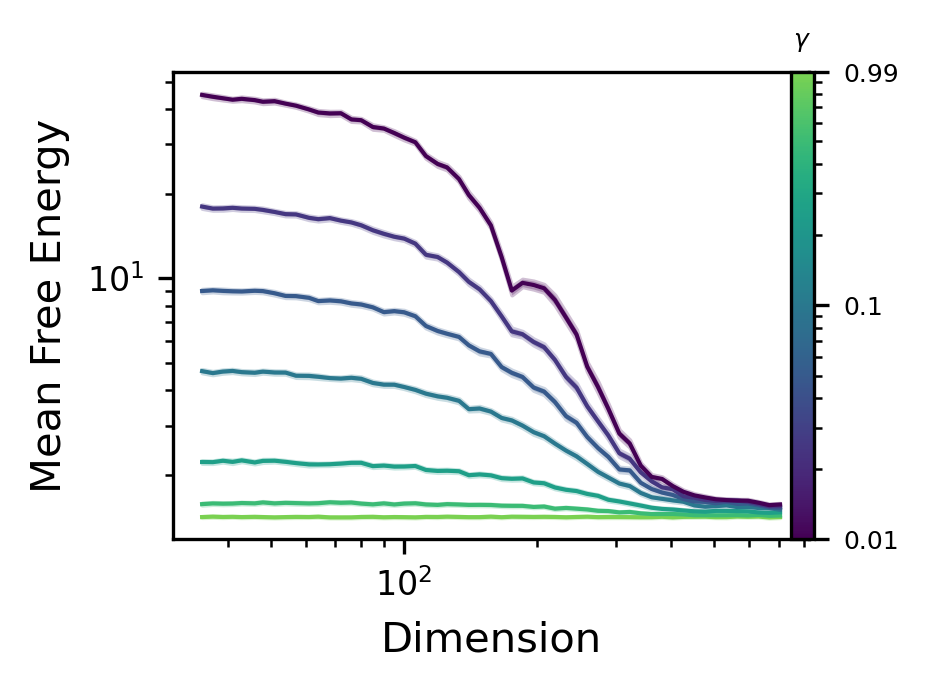}
  \includegraphics[width=.45\textwidth]{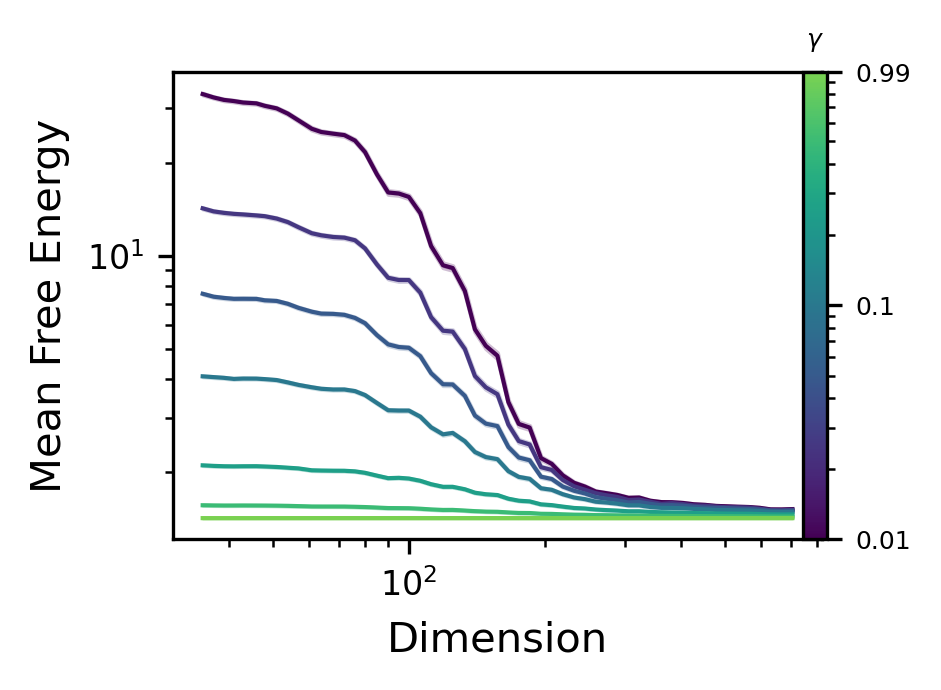}
  \captionof{figure}{\label{fig:RealBayesMNIST}\centering Error curves for mean Bayes free energy under the MNIST dataset; linear (left) and Gaussian (right) kernels; $\lambda = \lambda^\ast$}
\vspace{-.5cm}
\end{figure}

\begin{figure}[h]
\centering

  \includegraphics[width=.45\textwidth]{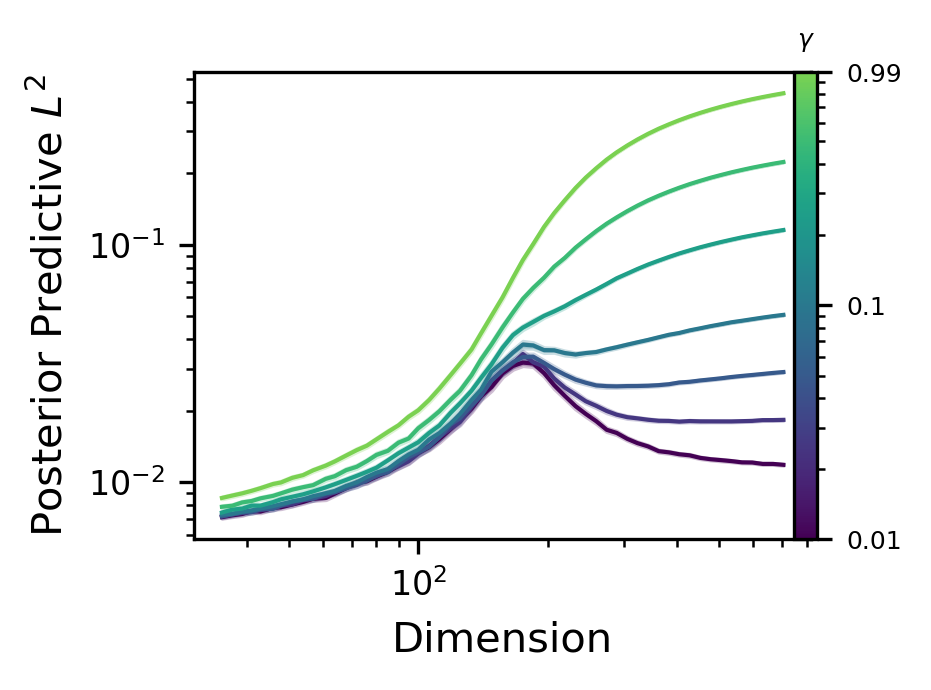}
  \includegraphics[width=.45\textwidth]{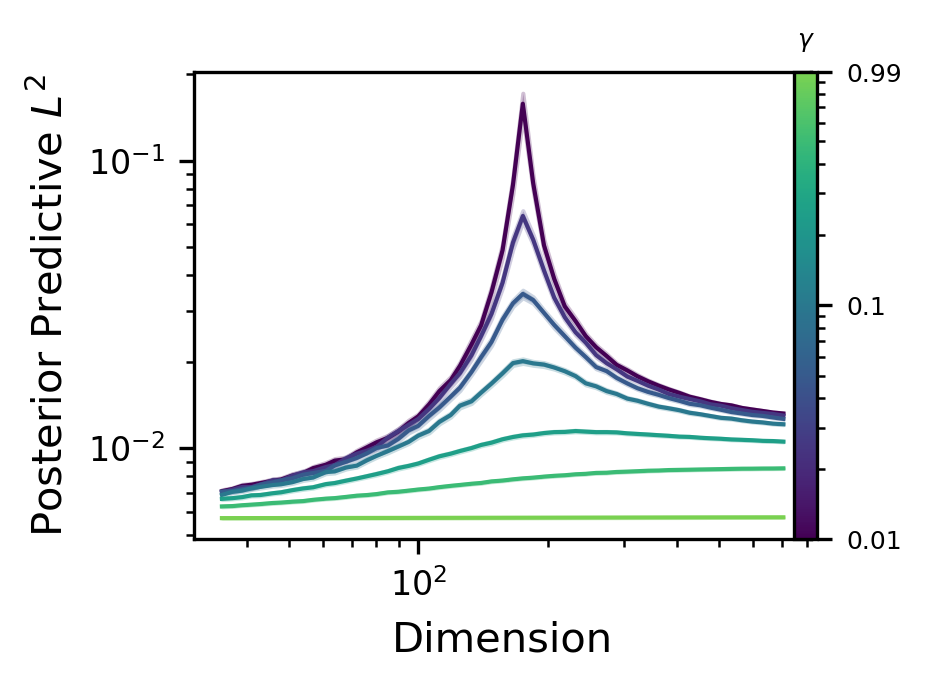}
  \captionof{figure}{\label{fig:RealLinPPLMNIST}\centering PPL2 loss under the linear kernel with $\lambda = 0.01/\gamma$ (left) and $\lambda = \lambda^\ast$ (right) on the MNIST dataset.}
\vspace{-.5cm}
\end{figure}

 \paragraph{Synthetic covariates.}
 From Theorem \ref{thm:Main}, one can conclude that performance under the marginal likelihood can increase as covariates are added. This begs the question: if the data is augmented with synthetic covariates, will this still result in a higher marginal likelihood? We have considered adding three different forms of synthetic covariates to the first 30 covariates of the whitened \texttt{CT Slices} dataset: 
 \begin{enumerate}[label=(\roman*)]
     \item \textbf{Gaussian white noise}: each $X_{ij}$ for $j > 30$ is drawn as an iid standard normal random variable;
     \item \textbf{Copied data}: the first 30 covariates are repeated, that is, for $j > 30$, each $X_{ij} = X_{i,(j-1) \text{ mod } 30 + 1}$, where mod denotes the modulus operator; and
     \item \textbf{Padded data}: each $X_{ij} = 0$ for $j > 30$.
 \end{enumerate}
 While case (i) satisfies the conditions of Theorem \ref{thm:Main}, cases (ii) and (iii) do not, as neither case can be whitened such that the rows of $X$ have unit covariance. In Figure \ref{fig:RealDataAugmented}, we repeat the experiment in the top left of Figure \ref{fig:Monotone} using these augmented datasets. The behavior of the mean Bayes free energy reflects whether the assumptions of Theorem \ref{thm:Main} are satisfied: while case (i) exhibits the same monotone decay, cases (ii) and (iii) do not.

\paragraph{Monotonicity in posterior predictive metrics.}

In these experiments, we consider posterior predictive metrics for synthetic data under optimal parameter choices. First, in Figure~\ref{fig:PPL2Opt}, the posterior predictive $L^2$ loss is optimized in $\lambda$, revealing a monotone decay in the dimension, analogous to \citet{nakkiran2020optimal,wu2020optimal}. In Figure~\ref{fig:PPNLL}, we plot error curves for the optimally tempered PPNLL metric (\ref{eq:PPNLLOpt}) under the linear kernel, revealing a monotonically increasing curve with input dimension when $\mu$ is fixed, and highlighting the need for appropriate regularization. If PPNLL is optimized in both $\gamma$ and $\mu$ simultaneously, the error curve becomes flat. 

\begin{figure}[b]
\centering
\begin{minipage}{.48\textwidth}
  \centering
  \includegraphics[width=.9\textwidth]{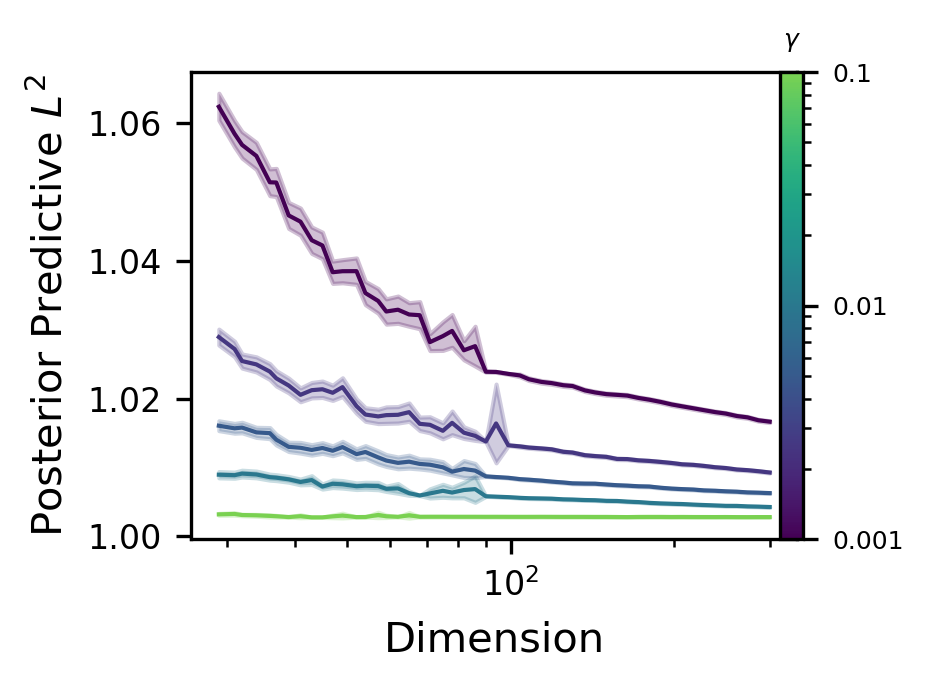}
  \captionof{figure}{\centering PPL2 optimized in $\lambda$; varying $\gamma$.}
  \label{fig:PPL2Opt}
\end{minipage}
\begin{minipage}{.48\textwidth}
  \centering
  \includegraphics[width=.9\textwidth]{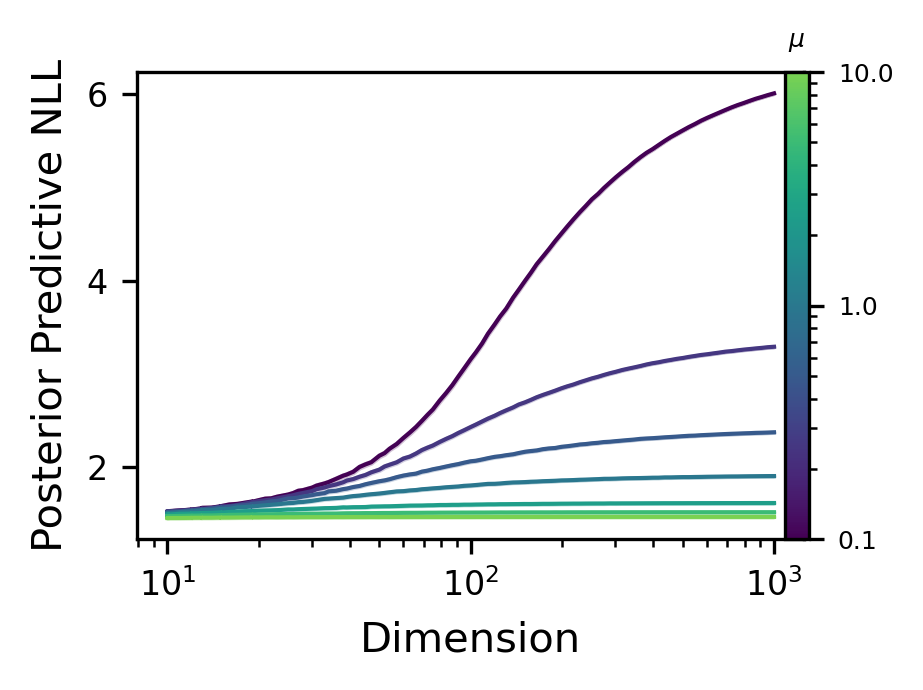}
  \captionof{figure}{\centering PPNLL optimized in $\gamma$ with $\lambda = \mu/\gamma$; varying $\mu$.}
  \label{fig:PPNLL}
\end{minipage}
\vspace{-.5cm}
\end{figure}

\paragraph{Prior misspecification.}
In our analysis, we have considered a practically optimal scenario where the prior is centered on the mean function of the labels (in other words, our prior concentrates on the correct solution). 
For more complex setups, where the prior is implicit and data-dependent, this may be possible, but is unlikely in general. For example, if the prior dictates \emph{a priori} knowledge, then a perfectly specified prior implies the underlying generative model for the labels is known in advance. 
Here, we assume that the mean function of the labels is nonzero, emulating a more realistic scenario. 
We restrict ourselves to the linear setting here, and we consider $Y_i = \theta_0 X_i + \epsilon_i$, ensuring that the correct mean function lies in the RKHS of the kernel. 
Figure \ref{fig:Misspec} illustrates the effect on Bayes free energy (with optimal $\lambda^\ast$). 
From left to right, small $\theta_0 = d^{-1/2} \boldsymbol{1}$, large $\theta_0 = n d^{-1/2} \boldsymbol{1}$, and growing $\theta_0 = \boldsymbol{1}$ perturbations are considered. 
For small values of the perturbation, the monotonicity of the error curve is not affected in a meaningful way. While the zero-mean assumption may seem restrictive, we demonstrate that this scenario will always hold asymptotically, provided the data is normalized and whitened (see Appendix B).
For larger perturbations, however, we see a horizontal ``double-ascent'' (or reverse double descent) error curve. 
A growing perturbation also results in a double-ascent curve, but with increasing Bayes free energy once the input dimension is sufficiently~large. 

\begin{figure}[t]
\includegraphics[width=\textwidth]{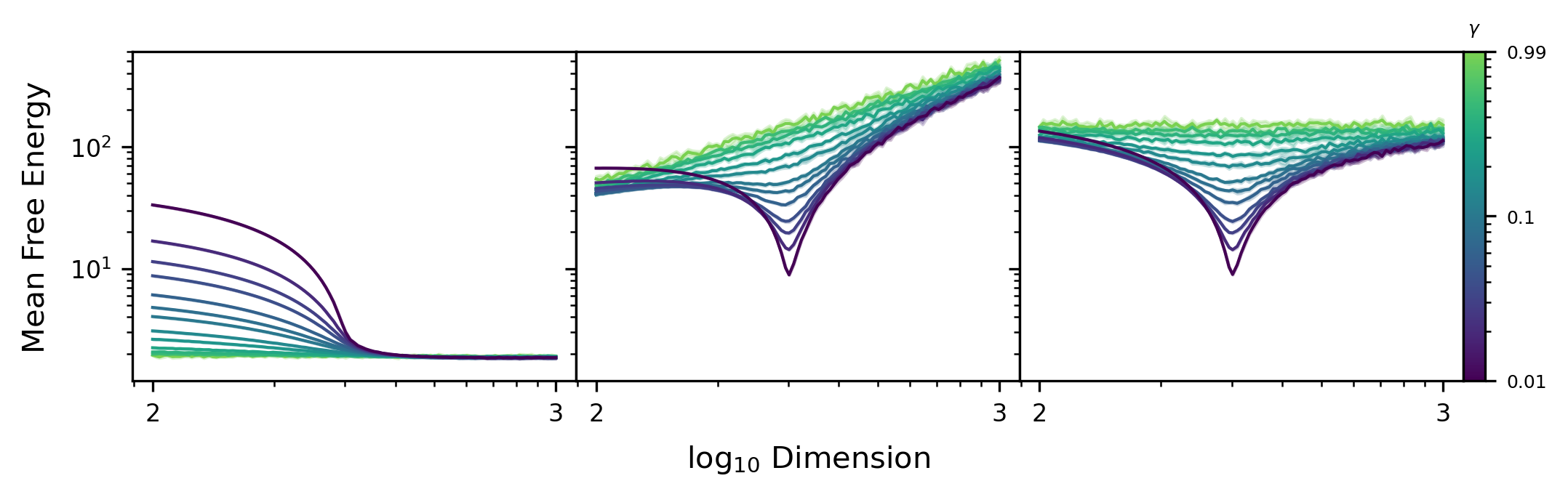}
\caption{\label{fig:Misspec}Optimal mean Bayes free energy with low (left), increasing (center) and high (right) levels of prior misspecification under the linear kernel.}
\end{figure}

\paragraph{Regularity of the kernel.}
The regularity of the kernel plays a key role in the regularity, and consequently, the quality of the predictor.
In particular, less regular predictors tend to revert to the prior more quickly away from the training data. 
The Mat\'{e}rn kernel family is noteworthy for its capacity to adjust the regularity of predictors through the parameter $\nu$, whereby realizations of a Gaussian process with Mat\'{e}rn covariance are at most $[\nu]$-times differentiable (see page 85 of \cite{williams2006gaussian}). 
In Figure \ref{fig:Matern}, we plot the Bayes free energy for fixed $n$ = 300 and $\gamma = 0.01$ with optimal $\lambda^\ast$ over input dimensions $d \in [100,1000]$ and $\nu \in [0.5,100]$. 
As $\nu$ decreases, the curves become flatter, suggesting the effect of dimension is reduced. 

\begin{figure}[t]
\centering
\begin{minipage}{.48\textwidth}
  \centering
  \includegraphics[width=.9\textwidth]{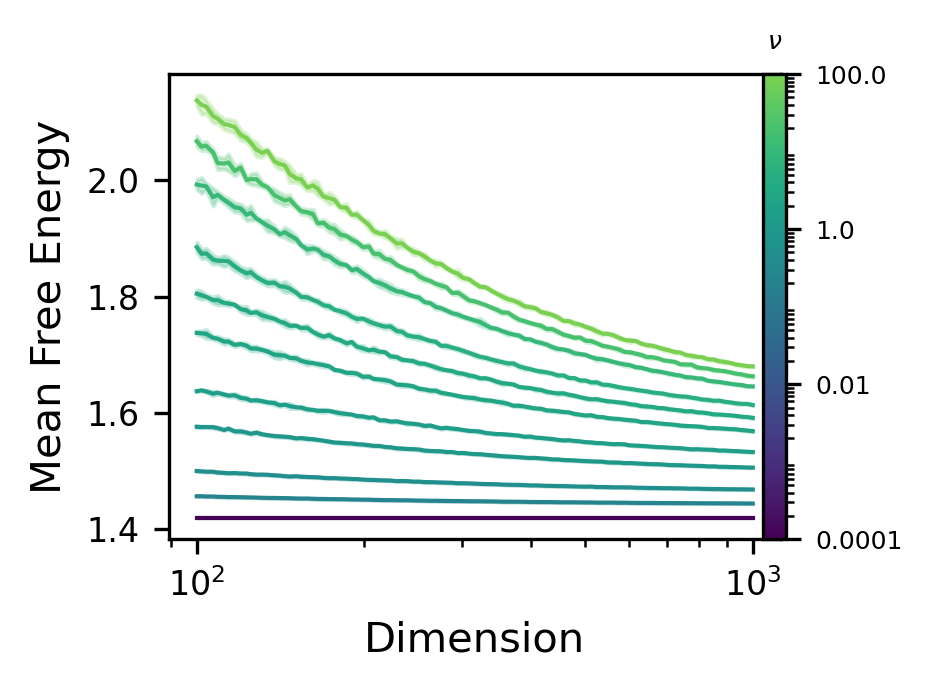}
  \captionof{figure}{\centering Effect of varying the regularity parameter $\nu$ in the Mat\'ern kernel}
  \label{fig:Matern}
\end{minipage}
\begin{minipage}{.48\textwidth}
  \centering
  \includegraphics[width=.9\textwidth]{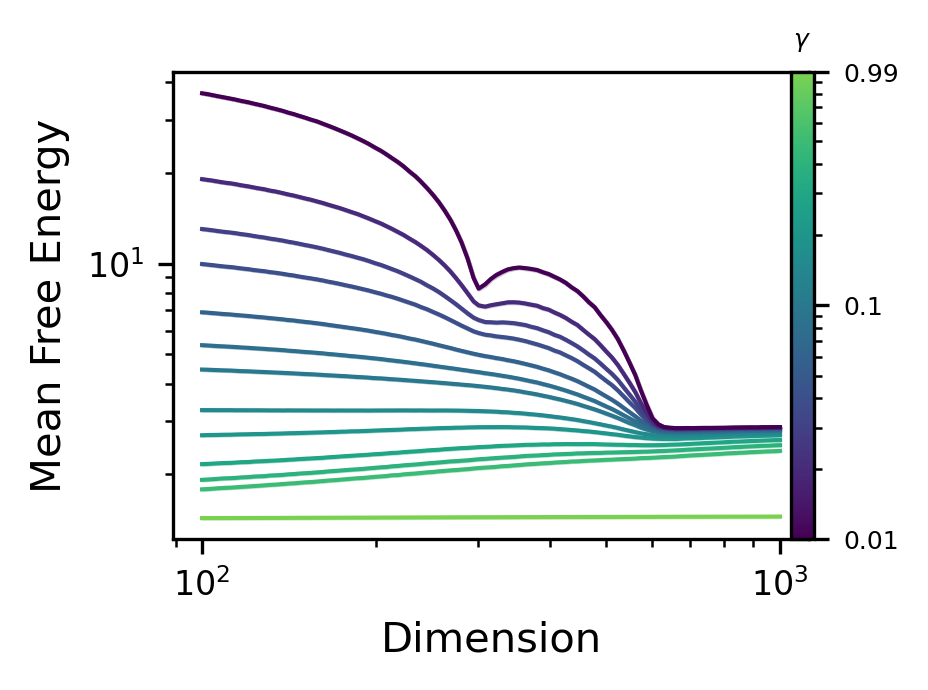}
  \captionof{figure}{\centering Effect of irregular spectra in $X$ under the linear kernel}
  \label{fig:IllConditioned}
\end{minipage}
\end{figure}

\paragraph{Ill-conditioned data.}
Our theoretical analysis considers only the case where the data has been whitened, that is, where each row of $X$ has unit covariance. 
It is known that more interesting behavior can occur depending on the spectrum of eigenvalues of the covariance matrix, including multiple descent \citep{nakkiran2020optimal,hastie2022surprises}, and this appears to be robust to other volume-based objectives \citep{DKM20_CSSP_neurips}. Recent work has also tied model performance to particular classes of spectral distributions, including power laws \cite{liao2021hessian,martin2020heavy,mahoney2019traditional}.
In Figure \ref{fig:IllConditioned}, we consider an isotropic ill-conditioned covariance matrix $\cov(X_i) = \Sigma$ where $\Sigma = \mbox{diag}((10)_{i=1}^{d/2},(1/10)_{i=1}^{d/2})$. 
Under the linear kernel, for fixed $\lambda$, the error curve is similar to the isotropic setting. 
However, at $\lambda=\lambda^\ast$, we find that the mean Bayes free energy can exhibit non-monotonic behavior at low temperatures. %

\paragraph{Scaling dimension nonlinearly with data.}
An interesting consequence of the monotonic error curve in the Bayes free energy is that the inclusion of additional data may be harmful if the input dimension is increased at a slower rate $d = \mathcal{O}(n^{\xi})$ for $\xi < 1$ (or beneficial if $\xi > 1$). This effect is illustrated in Figure \ref{fig:Spider}, where the normalized Bayes free energy $n^{-1}\mathcal{F}_n^\gamma$ is plotted for the linear and Gaussian kernels at the optimal $\lambda^\ast$ over $n \in [300,3000]$ with $d = 2^{10(1-\xi)} n^\xi$.
\begin{figure}[t]
\centering
\includegraphics[width=0.8\textwidth]{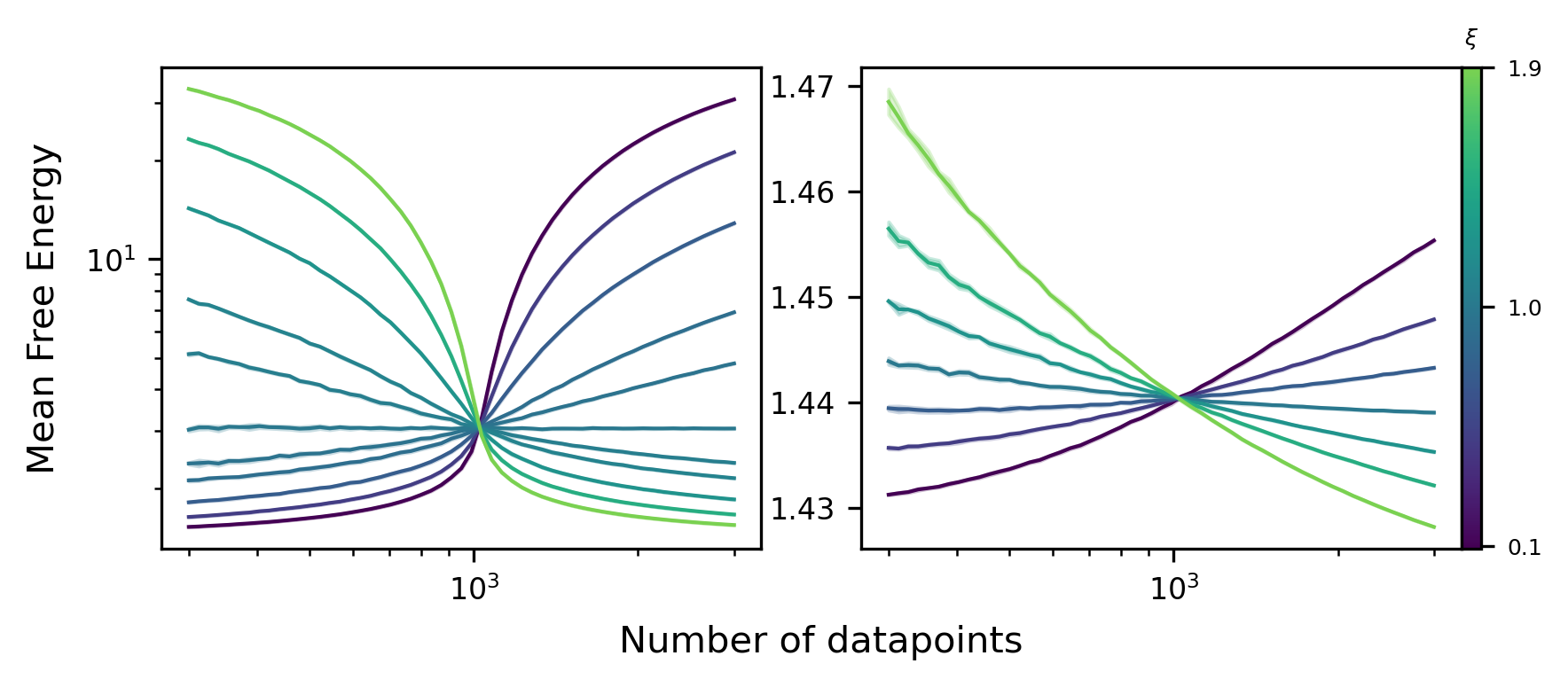}
\caption{\centering\label{fig:Spider}Error curves for mean Bayes free energy $n^{-1} \mathcal{F}_n^\gamma$ under linear (left) and Gaussian (right) kernels and $\lambda = \lambda^\ast$, for dimension scaling with data as $\mathcal{O}(n^\xi)$\vspace{-.25cm}}
\end{figure}

\paragraph{Effect of noise distribution}

Each experiment has also assumed that the labels are standard normal. If this is not the case, but the labels are still assumed to be iid, have zero mean and are uncorrelated with the inputs (correctly specified prior), then the expected mean Bayes free energy satisfies
\begin{align*}
n^{-1}\mathbb{E}\mathcal{F}_n^\gamma &= \frac{\lambda}{2n} \sum_{i,j=1}^n \mathbb{E}[Y_i Y_j Q_{ij}] + \frac1{2n} \mathbb{E}\log \det (K_X + \lambda\gamma I) - \frac12 \log\left(\frac{\lambda}{2\pi}\right),\\
&= \frac{\lambda}{2n} \sigma^2  \mathbb{E}\tr(Q) + \frac1{2n} \mathbb{E}\log \det (K_X + \lambda\gamma I) - \frac12 \log\left(\frac{\lambda}{2\pi}\right),
\end{align*}
where $Q = (K_X + \lambda\gamma I)^{-1}$ and $\sigma^2 = \mathbb{E}[Y_i^2]$. Therefore, only the variance in the labels contributes to $n^{-1}\mathbb{E}\mathcal{F}_n^\gamma$ (other features of the distribution of the noise contribute to the higher order moments of $\mathcal{F}_n^\gamma$). In Figure \ref{fig:VarLin}, we examine the effect that different variances in the label noise have on the mean Bayes free energy. %
Normally distributed $Y_i$ were considered, with variances ranging from $0.1$ to $10$.

\begin{figure}[h]
\centering
  \includegraphics[width=.45\textwidth]{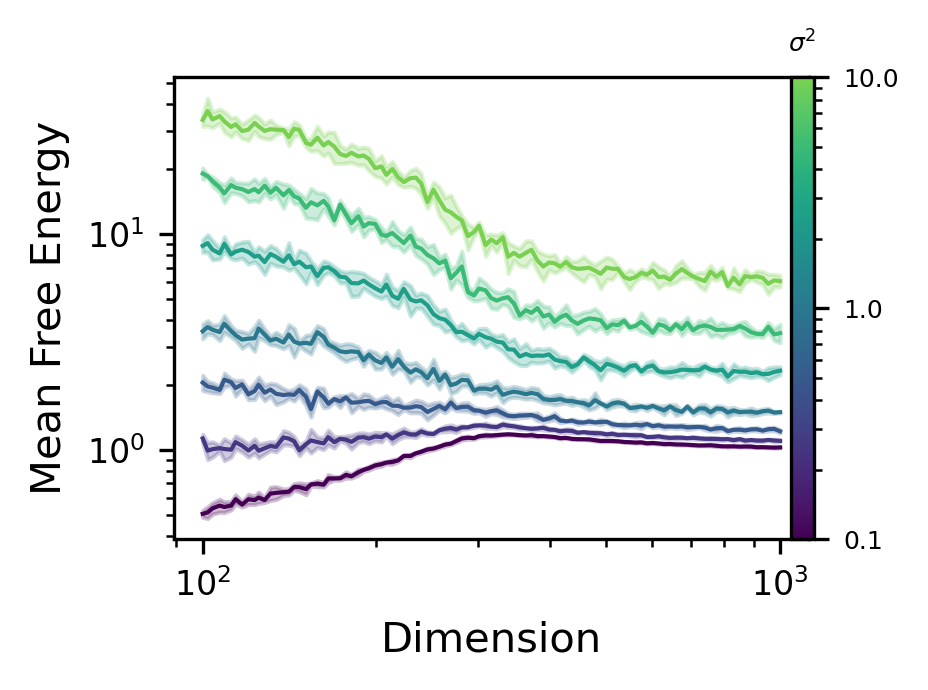}
  \includegraphics[width=.45\textwidth]{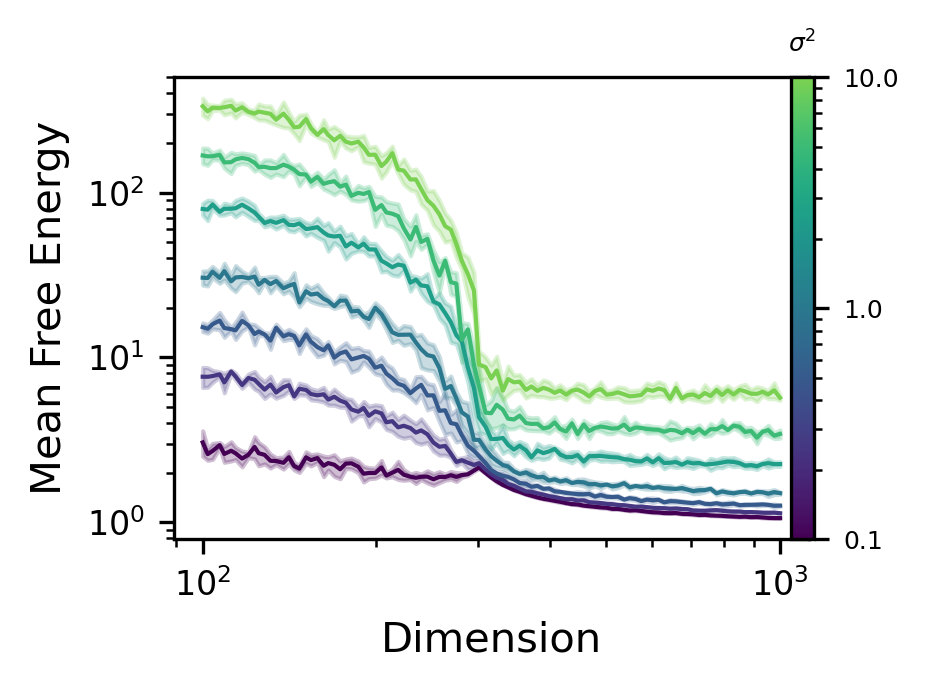}
  \caption{\label{fig:VarLin}\centering Error curves for mean Bayes free energy under linear kernel with $\lambda = \lambda^\ast$, $\gamma = 0.1$ (left) / $\gamma = 0.01$ (right) and different variances in the label data.}
\vspace{-.5cm}
\end{figure}

\paragraph{Posterior predictive loss with Gaussian kernel}

Figures \ref{fig:PPL2GaussMu} and \ref{fig:PPL2GaussOpt} examine the effect of varying $\gamma$ on the posterior predictive $L^2$ loss varying over $d$, under the Gaussian kernel. These figures should be contrasted with the linear kernel case presented as Figure \ref{fig:PPL2} (left and right, respectively). Note that the significant regularizing effect when $\beta > 0$ prohibits the double descent behavior found in the linear kernel case.

\begin{figure}[h]
\centering
\begin{minipage}{.48\textwidth}
  \centering
  \includegraphics[width=.9\textwidth]{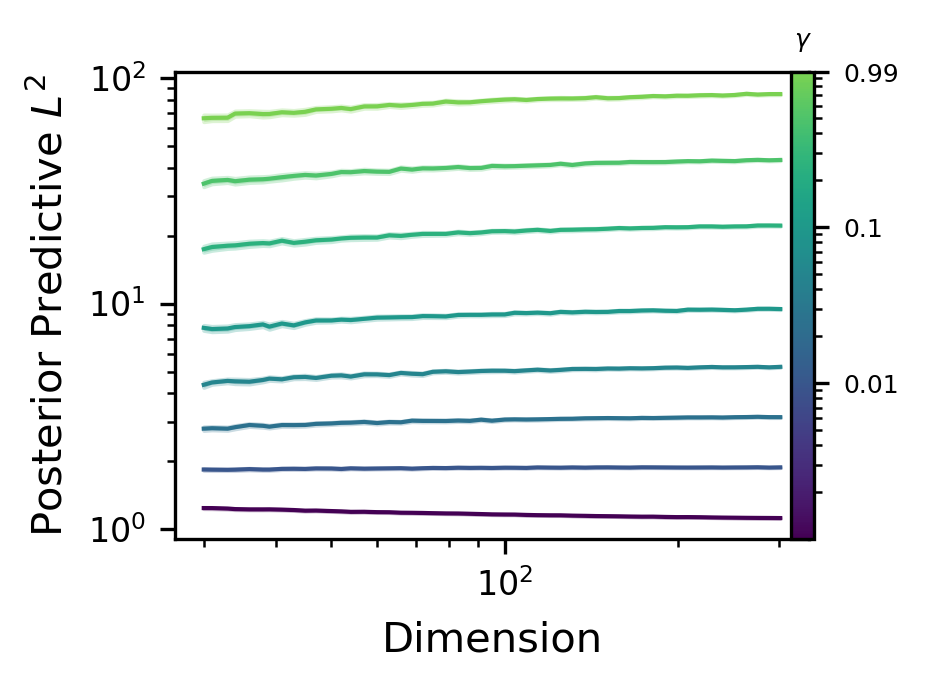}
  \captionof{figure}{\label{fig:PPL2GaussMu}\centering Posterior predictive $L^2$ loss under the Gaussian kernel with $\lambda = 0.01/\gamma$}
\end{minipage}
\begin{minipage}{.48\textwidth}
  \centering
  \includegraphics[width=.9\textwidth]{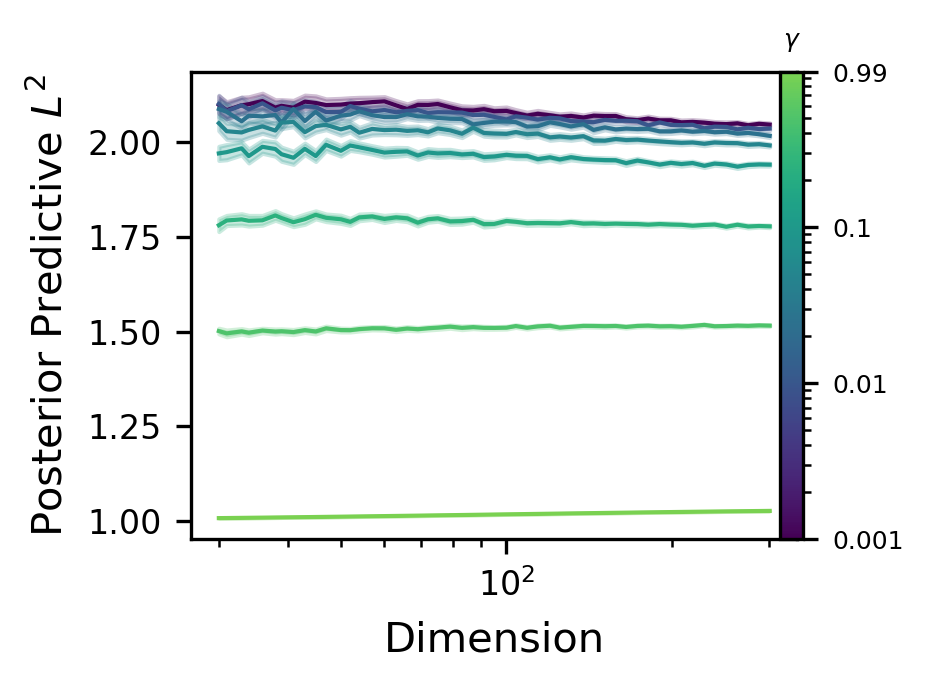}
  \captionof{figure}{\label{fig:PPL2GaussOpt}\centering Posterior predictive $L^2$ loss under the Gaussian kernel with $\lambda = \lambda^\ast$}
\end{minipage}
\vspace{-.5cm}
\end{figure}

\paragraph{Visualizing $\lambda^\ast$}
Figures \ref{fig:LambdaOptLin} and \ref{fig:LambdaOptGauss} plot the values of $\lambda^\ast$ versus $c$ over different values of $\gamma$, for the linear and Gaussian kernels, respectively. Once again, the sharp trough formed at $d=n$ in Figure \ref{fig:LambdaOptGauss} is significantly dampened by the regularizing effect of $\beta > 0$.
\begin{figure}[h]
\centering
\begin{minipage}{.48\textwidth}
  \centering
  \includegraphics[width=.9\textwidth]{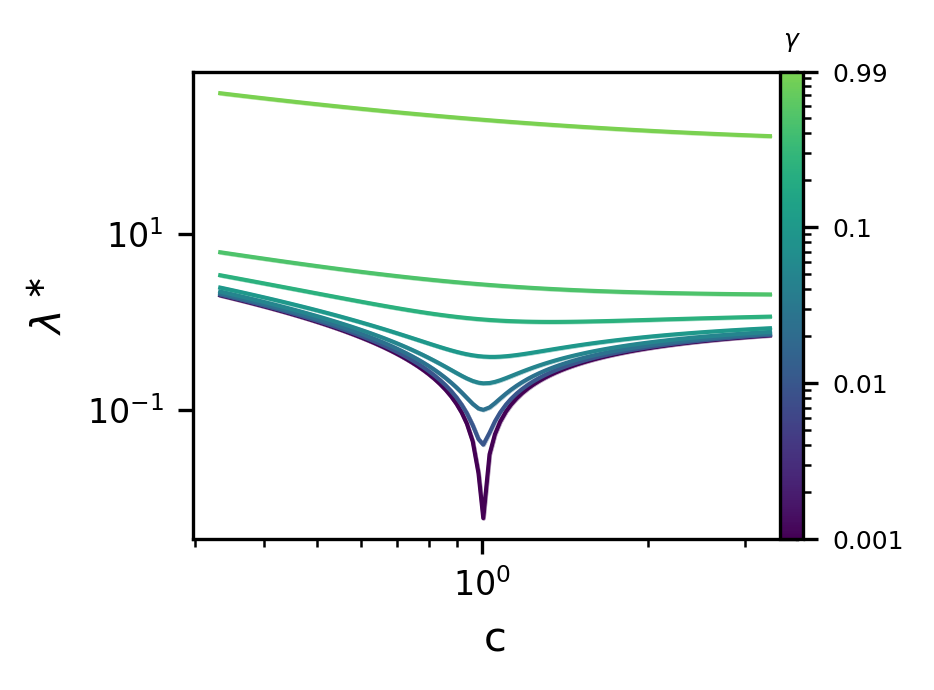}
  \captionof{figure}{\label{fig:LambdaOptLin}\centering Values of $\lambda^\ast$ versus $c$ varying $\gamma$ for the linear kernel}
\end{minipage}
\begin{minipage}{.48\textwidth}
  \centering
  \includegraphics[width=.9\textwidth]{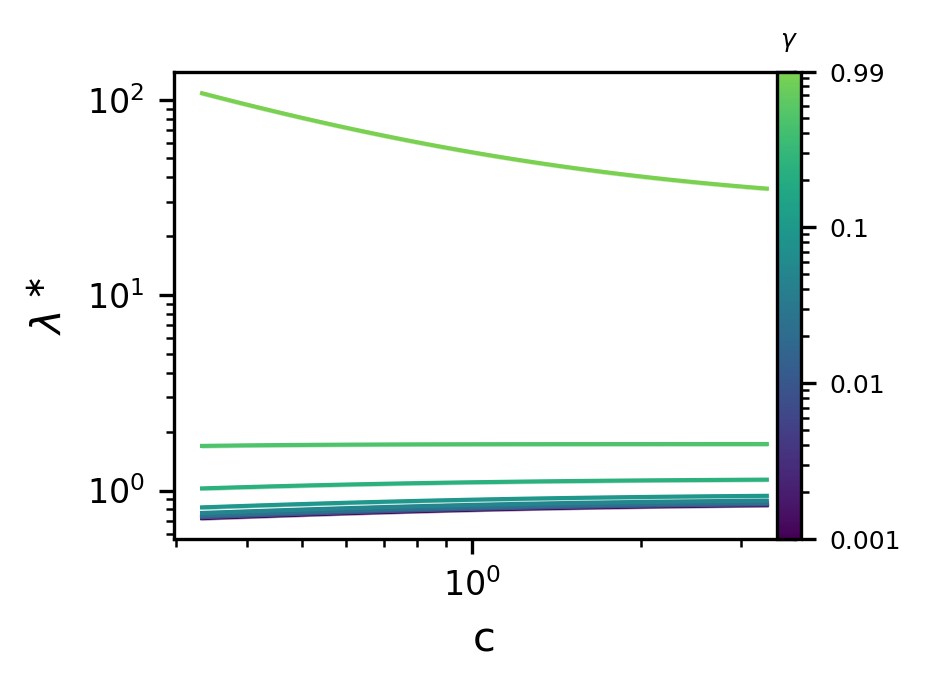}
  \captionof{figure}{\label{fig:LambdaOptGauss}\centering Values of $\lambda^\ast$ versus $c$ varying $\gamma$ for the Gaussian kernel}
\end{minipage}
\vspace{-.5cm}
\end{figure}

\paragraph{Real data without whitening.} To examine the effect that whitening has on the error curves, we reconsider the experiments producing Figures 3 (left; MNIST) and 9 (left; CIFAR10) where $X$ and $Y$ are only \emph{normalised}, that is, we subtract the sample means and divide by the sample deviation. The results are reported in Figure \ref{fig:NoWhitening}. As expected from \citet{couillet2011random} and the results of Figure \ref{fig:IllConditioned}, the curves resemble their whitened counterparts with some spurious ``bumps''.

\begin{figure}
    \centering
    \includegraphics[width=0.48\textwidth]{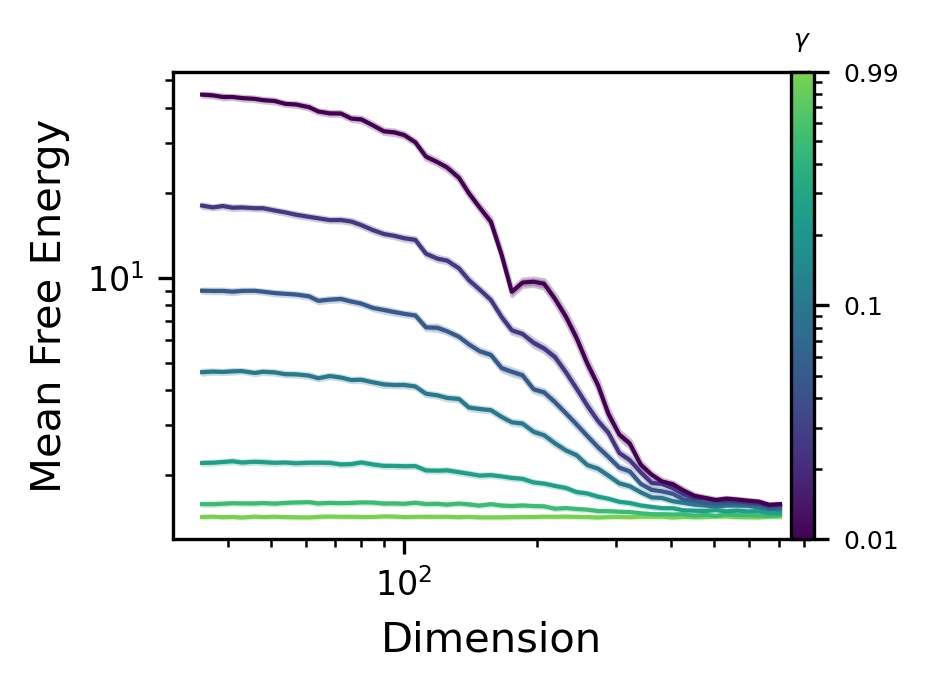}\includegraphics[width=0.48\textwidth]{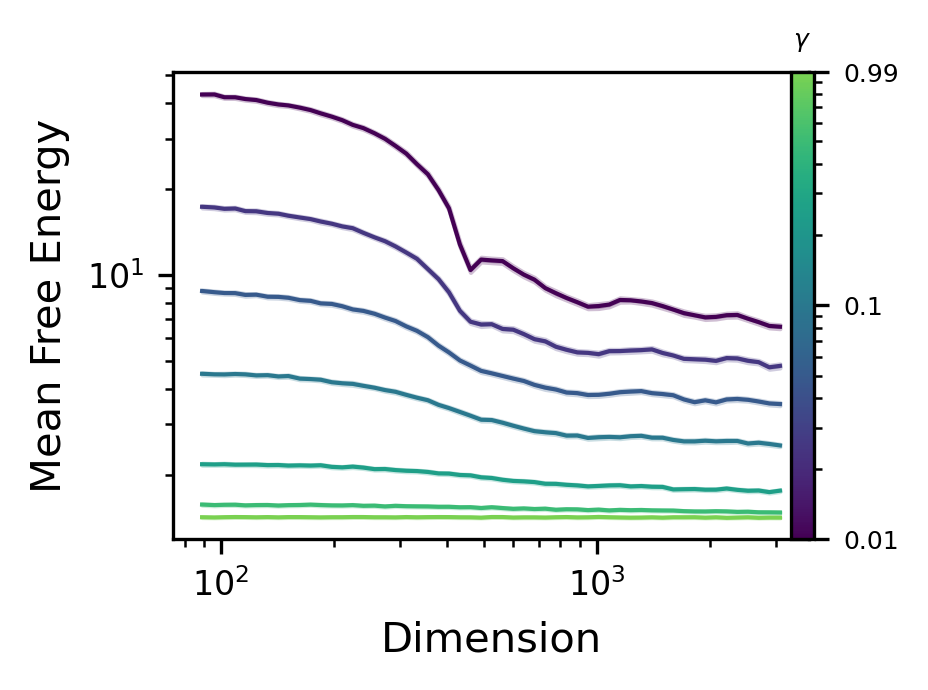}
    \caption{\centering Error curves for mean Bayes free energy for the MNIST (left) and CIFAR10 (right) datasets, with linear kernels; $\lambda = \lambda^\ast$.}
    \label{fig:NoWhitening}
\end{figure}

\section{Details of Experiments}
\label{sec:ExpDetails}

In each figure shown throughout this work, a performance metric has been calculated for varying dataset size $n$, input dimension $d$, and hyperparameters $\gamma,\lambda$. For experiments involving synthetic data, $X \in \mathbb{R}^{n\times d}$ has iid rows drawn from $\mathcal{N}(0,\Sigma)$, and $Y = (Y_i)_{i=1}^n$ is comprised of iid samples from $\mathcal{N}(0,\sigma^2)$ (where $\Sigma = I$ and $\sigma = 1$ unless specified otherwise). For PPL2 and PPNLL, the expectation is computed over iid scalar test points $x, y \sim \mathcal{N}(0,1)$. Runs are averaged over a number of iterations, and 95\% confidence intervals (under the central limit theorem) are highlighted. In Table 2 
we present the parameters used for each figure.

\begin{figure}\centering\includegraphics[height=\textwidth,angle=90]{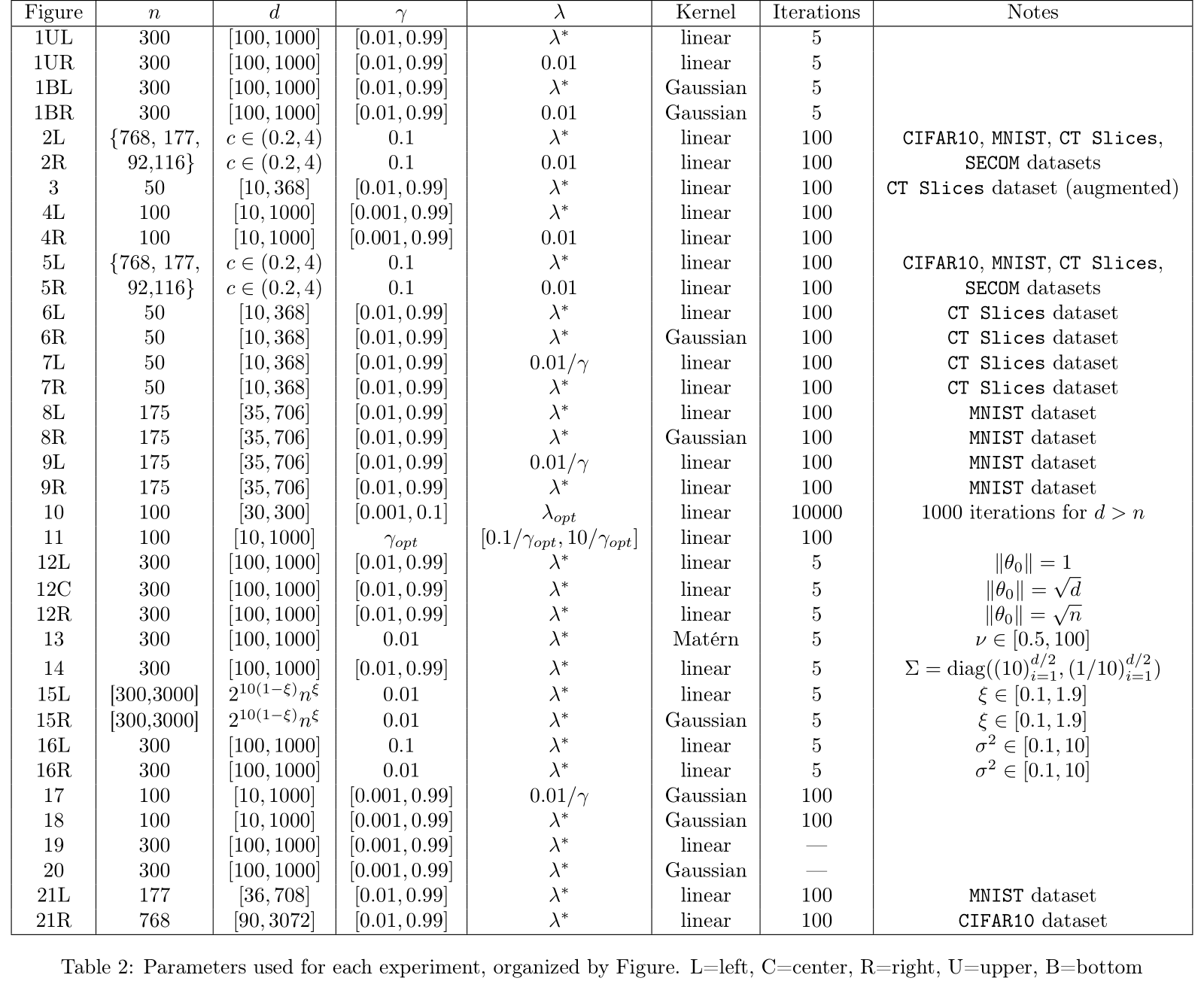}\end{figure}

\section{Normalized Data implies Small Prior Mispecification}
\label{sec:Whitening}

In Figure~\ref{fig:Misspec}, we explored the effect of changing the mean of the data-generating process from that of the prior. It was found that provided the mean of the data-generating process did not differ too significantly from that of the prior, the monotonicity of the error curves in Bayes free energy appeared unaffected. Here we show that when the data is normalized and whitened, and a zero-mean prior is chosen, the mean of the data-generating process will \emph{never} differ too significantly from the prior.

As above, assume that the labels satisfy $Y_i = f(X_i) + \epsilon_i$ for some $f:\mathbb{R}^d \to \mathbb{R}$ and zero-mean iid $\epsilon_i$. Now, we also assume that $Y$ has been normalized so that $\mbox{Var}(Y) = 1$. Similarly, we assume that $X$ has been normalized and whitened, so that it has zero mean and unit covariance. For simplicity of argument, assume further that $X_i$ are normal, that is, $X_i \overset{\text{iid}}{\sim} Z \sim \mathcal{N}(0,I)$. In the linear case where $f(x) = \theta \cdot x$, since 
\[
1 = \mbox{Var}(Y) \geq \mbox{Var}f(Z) = \|\theta\|^2,
\]
this implies that the magnitude of the components of $\theta$ are bounded on average by $d^{-1/2}$. This is the scenario seen in Figure~\ref{fig:Misspec}(left). Indeed, the scenarios in Figure~\ref{fig:Misspec}(center) and \ref{fig:Misspec}(right), which exhibit different error curves, satisfy $\mbox{Var}(Y) = n$ and $\mbox{Var}(Y) = d$ respectively, both of which are considerably larger than $1$.

The same principle holds for more general $f$. By a reverse Gaussian Poincar\'{e} inequality \citep[Proposition 3.5]{cacoullos1982upper},
\[
    1 = \mbox{Var}(Y) \geq \mbox{Var}f(Z) \geq \frac{1}{d} \left(\sum_{i=1}^d \mathbb{E}\partial_i f(Z)\right)^2,
\]
where $\partial_i$ denotes the $i$-th partial derivative. Therefore, the average coordinate-wise gradient of $f$, $\mathbb{E}\partial_I f(X)$ (where $I$ is uniform over $\{1,\dots,d\}$), is bounded above and below by
\[
-\sqrt\frac{1}{d}\leq \mathbb{E}\partial_I f(X) = \frac{1}{d} \sum_{i=1}^d \mathbb{E}\partial_i f(X) \leq \sqrt\frac{1}{d}.
\]

\section{Digamma Function}

Before treating the random matrix theory, we will need some auxiliary results concerning the \emph{digamma function}. Let $\Gamma(z)$ be the Gamma function, defined for $z > 0$ by $\Gamma(z) = \int_0^\infty t^{z-1} e^{-t} \dd t$. The \emph{digamma function} $\psi(z)$ is the derivative of the logarithm of the Gamma function, that is $\psi(z) = \frac{\dd}{\dd z} \log \Gamma(z)$. The digamma function satisfies the following properties:
\begin{itemize}
    \item $\psi(z + 1) = \psi(z) + z^{-1}$ for any $z > 0$;
    \item as $z \to \infty$, $\psi(z) / \log z \to 1$;
    \item letting $\eulermac = \psi(1)$ denote the Euler-Mascheroni constant, \[
    \psi(z + 1) = -\eulermac + \int_0^1 \left(\frac{1 - t^z}{1 - t}\right) \dd t.
    \]
\end{itemize}
The digamma function behaves well under summation. In particular, we have the following lemma.
\begin{lemma}
For any positive integer $n$ and any real number $z > -1$,
\[
\sum_{i=1}^{n}\psi(z+i)=(n+z)\psi(n+z)-z\psi(z)-n.
\]
\end{lemma}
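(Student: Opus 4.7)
The plan is to prove this identity by a telescoping argument driven by the functional equation $\psi(z+1) = \psi(z) + 1/z$ already stated in the excerpt. The key observation I would aim to establish first is that if we define $f(i) = (z+i)\psi(z+i)$, then the forward difference $f(i) - f(i-1)$ reduces, after one application of the functional equation, to $\psi(z+i) + 1$. Explicitly, I would expand
\[
f(i) - f(i-1) = (z+i)\psi(z+i) - (z+i-1)\psi(z+i-1),
\]
substitute $\psi(z+i-1) = \psi(z+i) - 1/(z+i-1)$ into the second term, and simplify. The $(z+i-1)$ prefactor cancels the denominator, leaving $(z+i)\psi(z+i) - (z+i)\psi(z+i) + \psi(z+i) + 1 = \psi(z+i) + 1$.

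Once that one-line identity is in hand, the summation is immediate: rearranging gives $\psi(z+i) = f(i) - f(i-1) - 1$, and summing from $i=1$ to $n$ telescopes the $f$ terms to yield
\[
\sum_{i=1}^{n}\psi(z+i) = f(n) - f(0) - n = (n+z)\psi(n+z) - z\psi(z) - n,
\]
which is exactly the claimed formula. The hypothesis $z > -1$ is needed only to ensure $\psi$ and the functional equation are well-defined on all the points $z, z+1, \dots, z+n$ (the Gamma function's poles occur at non-positive integers, so $z > -1$ keeps us safely to the right of them for $i \geq 1$, and $f(0) = z\psi(z)$ is interpreted as $0$ in the limit $z \to 0$ if needed).

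If the telescoping presentation were for some reason unavailable, an equivalent backup plan would be straightforward induction on $n$: the base case $n=1$ reduces directly to the functional equation, and the inductive step requires showing $\psi(z+n+1) + 1 = (n+1+z)\psi(n+1+z) - (n+z)\psi(n+z)$, which is again just the functional equation applied once. I do not anticipate any real obstacle here — the result is essentially an Abel summation (summation by parts) identity for the digamma function, and both routes are short and mechanical. The only minor subtlety is the bookkeeping when $z$ is close to $0$ so that the $-z\psi(z)$ term is interpreted correctly, which is handled by continuity since $z\psi(z) \to -1$ as $z \to 0^+$.
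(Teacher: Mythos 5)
Your telescoping argument is correct: with $f(i) = (z+i)\psi(z+i)$, the functional equation $\psi(w+1)=\psi(w)+1/w$ applied at $w=z+i-1$ does give $f(i)-f(i-1)=\psi(z+i)+1$, and summing telescopes to the stated identity. This is a genuinely different route from the paper, which instead starts from the integral representation $\psi(z)=-\eulermac+\int_0^1\frac{1-t^{z-1}}{1-t}\,\dd t$, sums the geometric series $\sum_{i=1}^n t^{z+i-1}$ under the integral, and then integrates by parts to recover the digamma terms. Your approach is shorter, more elementary (it uses only the recurrence, which the paper lists among the digamma properties anyway), and avoids the convergence bookkeeping near $t=0$ and $t=1$ that the integral route entails; the paper's approach has the advantage of staying within the integral-representation toolkit it sets up for the subsequent Wishart log-determinant computations, but nothing in the later lemmas actually depends on that. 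One small correction to your side remark: you first say $f(0)=z\psi(z)$ should be ``interpreted as $0$'' in the limit $z\to 0$, and then correctly state $z\psi(z)\to -1$ as $z\to 0^+$; the latter is right (since $\psi(z)\sim -1/z$ near the pole at $0$), so the first parenthetical should be deleted. For $z>-1$ with $z\neq 0$ every point $z, z+1,\dots,z+n$ avoids the poles of $\psi$ and the argument goes through verbatim, so this does not affect the validity of the proof.
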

\begin{proof}
From the integral representation for the digamma function:
\[
\psi(z)=-\eulermac+\int_{0}^{1}\left(\frac{1-t^{z-1}}{1-t}\right)\dd t,
\]
since $\sum_{i=1}^n t^{z+i-1} = t^z \sum_{i=0}^{n-1} t^i = t^z \frac{1 - t^n}{1 - t}$ for $0 \leq t < 1$,
\[
\sum_{i=1}^{n}\psi(z+i)=-n\eulermac+\int_{0}^{1}\frac{n(1-t)-t^{z}(1-t^{n})}{(1-t)^{2}}\dd t.
\]
Focusing on the integral term, note that by letting $f(t) = n(1-t) - t^z (1 - t^n)$ and $g(t) = (1-t)^{-1}$, since $g'(t) = (1-t)^{-2}$,
\begin{align*}
\int_{0}^{1}\frac{n(1-t)-t^{z}(1-t^{n})}{(1-t)^{2}}\dd t	&=\int_{0}^{1}f(t)g'(t)\dd t\\
&=\lim_{t\to 1^{-}}f(t)g(t)-f(0)g(0)-\int_{0}^{1}f'(t)g(t)\dd t.
\end{align*}
Since $\lim_{t\to1^{-}} (1 - t^n)/(1-t) = n$, $\lim_{t \to 1^{-}} f(t)g(t) = 0$, and so
\begin{align*}
\int_{0}^{1}\frac{n(1-t)-t^{z}(1-t^{n})}{(1-t)^{2}}\dd t
&=-n-\int_{0}^{1}\frac{-n-zt^{z-1}+(n+z)t^{n+z-1}}{1-t}\dd t\\
&=-n-\int_{0}^{1}\frac{(n+z)(t^{n+z-1}-1)-z(t^{z-1}-1)}{1-t}\dd t\\
&=-n+(n+z)\int_{0}^{1}\frac{1-t^{n+z-1}}{1-t}\dd t-z\int_{0}^{1}\frac{1-t^{z-1}}{1-t}\dd t\\
&=-n+(n+z)\left[\psi(n+z)+\gamma_{\text{EM}}\right]-z\left[\psi(z)+\eulermac\right]\\
&=-n+n\eulermac+(n+z)\psi(n+z)-z\psi(z).
\end{align*}
The result immediately follows
\end{proof}
Using this lemma, we can obtain an explicit expression for the sum of digamma functions with increment $\frac12$. This will be particularly useful for computing determinants of Wishart matrices. 
\begin{lemma}
For any positive integers $n$ and $d$ with $n > d$,
\begin{align*}
\sum_{i=1}^{d}\psi\left(\frac{n-i+1}{2}\right)	&=\frac{n}{2}\psi\left(\frac{n}{2}\right)-\left(\frac{n-d}{2}\right)\psi\left(\frac{n-d}{2}\right)-d\\
&\qquad+\left(\frac{n-1}{2}\right)\psi\left(\frac{n-1}{2}\right)\\
&\qquad-\left(\frac{n-d-1}{2}\right)\psi\left(\frac{n-d-1}{2}\right).
\end{align*}
\end{lemma}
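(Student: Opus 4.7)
The key observation is that the arguments of $\psi$ in the sum are
\[
\frac{n}{2},\ \frac{n-1}{2},\ \frac{n-2}{2},\ \dots,\ \frac{n-d+1}{2},
\]
i.e.\ consecutive half-integer steps. After reindexing $\ell = i-1$, the sum becomes $\sum_{\ell=0}^{d-1}\psi\!\left(\tfrac{n-\ell}{2}\right)$. The plan is to split this according to the parity of $\ell$: the even-$\ell$ terms form an arithmetic progression of shifts by $1$ starting at $\tfrac{n}{2}$, and the odd-$\ell$ terms form one starting at $\tfrac{n-1}{2}$. On each subsum the increments are now \emph{integer}, so the previously established identity $\sum_{i=1}^{m}\psi(z+i)=(m+z)\psi(m+z)-z\psi(z)-m$ applies directly.

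The calculation proceeds as follows. Suppose first that $d=2m$ is even. Then the even-$\ell$ part is
\[
\sum_{k=0}^{m-1}\psi\!\left(\tfrac{n}{2}-k\right)=\sum_{j=1}^{m}\psi\!\left(\tfrac{n-2m}{2}+j\right),
\]
to which the integer-increment lemma with $z=\tfrac{n-d}{2}$ yields $\tfrac{n}{2}\psi(\tfrac{n}{2})-\tfrac{n-d}{2}\psi(\tfrac{n-d}{2})-\tfrac{d}{2}$. The odd-$\ell$ part is handled identically with $z=\tfrac{n-d-1}{2}$, giving $\tfrac{n-1}{2}\psi(\tfrac{n-1}{2})-\tfrac{n-d-1}{2}\psi(\tfrac{n-d-1}{2})-\tfrac{d}{2}$. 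Adding the two and collecting the $-d/2-d/2=-d$ constants reproduces the claimed right-hand side exactly. For $d=2m+1$ odd, the even-$\ell$ sub-sum contains $m+1$ terms and the odd-$\ell$ sub-sum contains $m$ terms; applying the integer-step lemma to each with the appropriately shifted base point again yields the same endpoint terms $\tfrac{n}{2}\psi(\tfrac{n}{2})$, $\tfrac{n-1}{2}\psi(\tfrac{n-1}{2})$, $-\tfrac{n-d}{2}\psi(\tfrac{n-d}{2})$, $-\tfrac{n-d-1}{2}\psi(\tfrac{n-d-1}{2})$, and the two constant contributions sum to $(m+1)+m=d$.

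There is no real obstacle here: the previous lemma does all the work, and the only thing requiring care is tracking the parity-dependent endpoints so that the interior telescoping cancels correctly. In particular, one should verify that in both parity cases for $d$ the endpoints land on the same four values $\tfrac{n}{2},\tfrac{n-1}{2},\tfrac{n-d}{2},\tfrac{n-d-1}{2}$; a convenient unified way to write this is to note that the even-$\ell$ subsum always runs from shift $0$ down to $-\lceil d/2\rceil+1$ and the odd-$\ell$ subsum from $0$ down to $-\lfloor d/2\rfloor+1$, so the lower endpoints are $\tfrac{n}{2}-\lceil d/2\rceil$ and $\tfrac{n-1}{2}-\lfloor d/2\rfloor$, which in both parities simplify to $\tfrac{n-d}{2}$ and $\tfrac{n-d-1}{2}$ (swapping roles between cases but preserving the overall pair). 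Since $n>d$, both endpoints have positive argument and the digamma evaluations are all well-defined, completing the proof.
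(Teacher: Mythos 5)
Your proof is correct and follows essentially the same route as the paper's: reindex the sum, split it by parity into two integer-increment subsums, and apply the preceding lemma $\sum_{i=1}^{m}\psi(z+i)=(m+z)\psi(m+z)-z\psi(z)-m$ to each, treating even and odd $d$ separately. The only cosmetic difference is that for odd $d$ the paper peels off one term and invokes the recurrence $z\psi(z+1)=z\psi(z)+1$, whereas you apply the integer-step lemma directly to both subsums and track the swapped endpoints; both yield the same result.
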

\begin{proof}
First, note that
\[
\sum_{i=1}^{d}\psi\left(\frac{n-i+1}{2}\right)=\sum_{i=1}^{d}\psi\left(\frac{n-d+i}{2}\right).
\]
We consider the cases where $d$ is even and odd separately. When $d$ is even,
\begin{align*}
\sum_{i=1}^{d}\psi\left(\frac{n-d}{2}+\frac{i}{2}\right)
&=\sum_{i=1}^{d/2}\psi\left(\frac{n-d}{2}+i\right)+\psi\left(\frac{n-d}{2}+i-\frac{1}{2}\right)\\
&=\frac{n}{2}\psi\left(\frac{n}{2}\right)-\left(\frac{n-d}{2}\right)\psi\left(\frac{n-d}{2}\right)-d\\
&\qquad+\left(\frac{n-1}{2}\right)\psi\left(\frac{n-1}{2}\right)\\
&\qquad-\left(\frac{n-d-1}{2}\right)\psi\left(\frac{n-d-1}{2}\right).
\end{align*}
Now assume that $d$ is odd. Then
\begin{align*}
\sum_{i=1}^{d}\psi\left(\frac{n-d}{2}+\frac{i}{2}\right)
&=\psi\left(\frac{n}{2}\right)+\sum_{i=1}^{d-1}\psi\left(\frac{(n-1)-(d-1)}{2}+\frac{i}{2}\right)\\
&=\psi\left(\frac{n}{2}\right)+\frac{n-1}{2}\psi\left(\frac{n-1}{2}\right)-\left(\frac{n-d}{2}\right)\psi\left(\frac{n-d}{2}\right)-d+1\\
&\qquad+\left(\frac{n}{2}-1\right)\psi\left(\frac{n}{2}-1\right)-\left(\frac{n-d-1}{2}\right)\psi\left(\frac{n-d-1}{2}\right).
\end{align*}
But now, since $z\psi(z+1) = z\psi(z)+1$, $(\frac{n}2-1)\psi(\frac{n}2-1) = (\frac{n}2-1)\psi(\frac{n}2)-1$, and so
\[
\psi\left(\frac{n}{2}\right)+\left(\frac{n}{2}-1\right)\psi\left(\frac{n}{2}-1\right)=\frac{n}{2}\psi\left(\frac{n}{2}\right)-1.
\]
The result now follows.

\end{proof}

\section{Marchenko-Pastur Theory}

In this section, we prove several lemmas concerning limiting traces and log-determinants of Wishart matrices that will prove foundational for proving our main results. The fundamental theorem in this section is the Marchenko-Pastur Theorem, which describes the limiting spectral distribution of Wishart matrices. The following can be obtained from pg. 51 of \citet{couillet2011random}. 

\begin{theorem}[Marchenko-Pastur Theorem]
\label{thm:MP}
For each $n=1,2,\dots$, let $X_n \in \mathbb{R}^{n\times d}$ be a matrix of iid random variables with zero mean and unit variance. If $n, d \to \infty$ with $d / n \to c \in (0,\infty)$, then for every $z \in \mathbb{C}\backslash\{0\}$,
\[
d^{-1}\mathbb{E}\tr((n^{-1}X_n^\top X_n - zI)^{-1}) \to m(z) \coloneqq \frac{1 - c - z - \sqrt{(z-c-1)^2-4c}}{2 c z},
\]
noting that $m(z)$ satisfies $m = 1/(1-c-z-czm)$.
\end{theorem}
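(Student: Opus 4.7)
The plan is to prove convergence via the Stieltjes-transform / resolvent method. For $z \in \mathbb{C}^+$, set $Q(z) = (n^{-1}X_n^\top X_n - zI)^{-1}$, so that $m_n(z) := d^{-1}\tr Q(z)$ is the Stieltjes transform of the empirical spectral distribution of $n^{-1}X_n^\top X_n$. The bound $\|Q(z)\| \le |\mathrm{Im}\,z|^{-1}$ makes $\{m_n\}$ and $\{\bar m_n := \mathbb{E} m_n\}$ uniformly bounded families of holomorphic functions on compact subsets of $\mathbb{C}^+$. The strategy is to derive a self-consistent equation for the limit via a leave-one-out argument, solve it, pick the correct branch, and then extend by analytic continuation to the stated domain.

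The central computation combines the resolvent identity $Q(z) = -z^{-1}I + z^{-1}(n^{-1}X_n^\top X_n)Q(z)$ with the Sherman-Morrison formula. Writing the rows of $X_n$ as $Z_1,\dots,Z_n \in \mathbb{R}^d$ and setting $Q_i(z) = (n^{-1}\sum_{j\neq i}Z_jZ_j^\top - zI)^{-1}$, Sherman-Morrison yields $n^{-1}Z_i^\top Q(z)Z_i = \tau_i/(1+\tau_i)$ with $\tau_i := n^{-1}Z_i^\top Q_i(z)Z_i$. Taking the normalized trace produces
\[
m_n(z) = -\frac{1}{z} + \frac{1}{dz}\sum_{i=1}^{n}\frac{\tau_i}{1+\tau_i}.
\]
Concentration of quadratic forms (using independence of $Z_i$ from $Q_i$ together with $\|Q_i\|\le|\mathrm{Im}\,z|^{-1}$) gives $\mathbb{E}|\tau_i - n^{-1}\tr Q_i(z)|^2 \to 0$, and the rank-one perturbation bound $|\tr Q(z) - \tr Q_i(z)| \le |\mathrm{Im}\,z|^{-1}$ lets us replace $n^{-1}\tr Q_i(z)$ by $(d/n)\,m_n(z)$, whose limit along any subsequential limit $m$ of $\bar m_n$ is $c\,m(z)$. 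Passing to the limit yields
\[
m(z) = -\frac{1}{z} + \frac{1}{z}\cdot\frac{m(z)}{1 + c\,m(z)},
\]
which rearranges to the stated identity $m = 1/(1 - c - z - czm)$ and to the quadratic $czm^2 + (z+c-1)m + 1 = 0$. The constraint $\mathrm{Im}\,m(z) > 0$ on $\mathbb{C}^+$, inherited by every subsequential limit from $m_n$, singles out the minus branch in the quadratic formula and produces exactly the stated expression. Uniqueness of Stieltjes-class solutions upgrades subsequential convergence to full convergence of $\bar m_n$, and Vitali's convergence theorem extends the convergence by analytic continuation from $\mathbb{C}^+$ to $\mathbb{C}\setminus\{0\}$ away from the spectral support.

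The principal obstacle is the concentration step under the bare hypothesis of zero mean and unit variance for the entries. A Bai-Silverstein-style truncation argument handles this: truncate the entries at a slowly growing level, control the trace-norm perturbation (using Hoffman-Wielandt-type inequalities), and apply Chebyshev to the truncated $\tau_i$, for which uniform fourth-moment bounds now hold. A secondary point is ensuring the nonlinear map $x \mapsto x/(1+x)$ behaves well under the limit; this is guaranteed because $\mathrm{Im}\,\tau_i \ge 0$ (by positivity of $\mathrm{Im}\,Q_i(z)$ on $\mathbb{C}^+$) forces $|1+\tau_i| \ge 1$, making the map $1$-Lipschitz on the relevant domain and letting $L^2$ control on $\tau_i - n^{-1}\tr Q_i$ transfer directly into $L^2$ control of the summand. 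The remaining ingredients—branch selection, identification of the Marchenko-Pastur measure via Stieltjes inversion, and analytic extension beyond the spectral support—are routine.
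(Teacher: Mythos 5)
The paper does not actually prove this statement: Theorem \ref{thm:MP} is quoted as a known result from \citet{couillet2011random} (p.~51) and used as a black box, so there is no in-paper argument to compare against. Your sketch is the standard Bai--Silverstein resolvent and leave-one-out derivation, and as an outline it is sound: the Sherman--Morrison step, the self-consistent equation $m = -z^{-1} + z^{-1}m/(1+cm)$, the resulting quadratic $czm^{2} + (z+c-1)m + 1 = 0$, and branch selection via $\mathrm{Im}\,m > 0$ on $\mathbb{C}^{+}$ all reproduce the stated formula, since $(z+c-1)^{2}-4cz = (z-c-1)^{2}-4c$. Two points deserve correction. First, your claim that $\mathrm{Im}\,\tau_i \ge 0$ forces $|1+\tau_i|\ge 1$ is false: writing $\tau_i = \sum_k w_k/(\lambda_k - z)$ with $w_k \ge 0$ and $\lambda_k \ge 0$, one can arrange $\mathrm{Re}\,\tau_i$ near $-1$ with small positive imaginary part, so that $|1+\tau_i|<1$. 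The correct and standard bound is $|1+\tau_i| \ge \mathrm{Im}(z)/|z|$, obtained from $\mathrm{Im}\bigl(z(1+\tau_i)\bigr) \ge \mathrm{Im}(z)$ using $\lambda_k \ge 0$; this still yields a $z$-dependent Lipschitz constant for $x\mapsto x/(1+x)$, so your argument survives with that repair. Second, the theorem as stated claims convergence for every $z\in\mathbb{C}\setminus\{0\}$, which cannot hold on the real interval inside the support of the Marchenko--Pastur law (and at $z=0$ with an atom when $c>1$); your implicit restriction to $z$ away from the spectral support in the analytic-continuation step is the correct reading, and it covers the only way the paper uses the result, namely at $z=-c\mu<0$ in Lemma \ref{lem:TraceInv}.
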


For the remainder of this section, we assume the conditions of Theorem \ref{thm:MP}, that is, for each $n=1,2,\dots$, we let $X_n \in \mathbb{R}^{n\times d}$ be a matrix of iid random variables with zero mean and unit~variance.

\begin{lemma}[Trace of Inverse Matrix]
\label{lem:TraceInv}
Let $n, d \to \infty$ with $d/n \to c \in (0,1]$ and assume that $\mu_n$ is a sequence of real numbers such that $\mu_n \to \mu \in (0,\infty)$ as $n \to \infty$. Then
\[
n^{-1} \mathbb{E}\tr((d^{-1} X_n^\top X_n + \mu_n I)^{-1}) \to T(\mu,c) \coloneqq \frac{c-1-c\mu+\sqrt{(c\mu+c+1)^{2}-4c}}{2\mu},
\]
and $T(\mu,c)$ satisfies $T = c^2 / (1 - c + c\mu + \mu T)$. Similarly, if $d/n \to c \in (1,\infty)$, then
\[
n^{-1} \mathbb{E}\tr((d^{-1} X_n X_n^\top + \mu_n I)^{-1}) \to \tilde{T}(\mu,c) \coloneqq \frac{1-c-c\mu+\sqrt{(c\mu+c+1)^{2}-4c}}{2\mu},
\]
and $\tilde{T}(\mu,c)$ satisfies $\tilde{T}= c / (c-1+c\mu+\mu\tilde{T})$ and $\tilde{T}(\mu,c) = c^2 T(c\mu, c^{-1})$. 
\end{lemma}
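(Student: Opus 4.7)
The plan is to reduce both parts of the lemma to the Marchenko--Pastur theorem by a change of scale. For the first part ($c\in(0,1]$), write
\[
d^{-1}X_n^\top X_n + \mu_n I = (n/d)(n^{-1}X_n^\top X_n - z_n I), \qquad z_n = -(d/n)\mu_n,
\]
so that $z_n \to -c\mu$ as $n\to\infty$. Inverting and dividing by $n$ gives
\[
n^{-1}\tr((d^{-1}X_n^\top X_n + \mu_n I)^{-1}) = (d/n)^2 \cdot d^{-1}\tr((n^{-1}X_n^\top X_n - z_n I)^{-1}).
\]
Theorem~\ref{thm:MP} gives the limit of the right-hand side at a fixed spectral parameter; after a brief continuity argument (sketched in the last paragraph) to handle the drifting $z_n$, the expectation converges to $c^2 m(-c\mu)$. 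Substituting the closed form for $m$ and using $(c\mu+c+1)^2 = (-c\mu-c-1)^2$ yields exactly $T(\mu,c)$. The fixed-point identity is then immediate from the Marchenko--Pastur self-consistent equation $m = 1/(1-c-z-czm)$: substituting $z=-c\mu$ and $m = T/c^2$ collapses it to $T/c^2 = 1/(1-c+c\mu+\mu T)$, which rearranges to the stated form.

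For the second part ($c>1$), I would apply Theorem~\ref{thm:MP} to the transpose $\tilde{X}_n = X_n^\top \in \mathbb{R}^{d\times n}$, whose entries remain iid with zero mean and unit variance, and whose aspect ratio satisfies $n/d \to 1/c \in (0,1)$. Since $\tilde{X}_n^\top\tilde{X}_n = X_n X_n^\top$ and the normalizations already match the target expression (no aspect-ratio prefactor is needed this time), one obtains
\[
n^{-1}\mathbb{E}\tr((d^{-1}X_n X_n^\top + \mu_n I)^{-1}) \to m_{1/c}(-\mu),
\]
where $m_{1/c}$ denotes the Marchenko--Pastur transform at ratio $1/c$. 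Algebraic simplification using $c^2(\mu+1/c+1)^2 = (c\mu+c+1)^2$ yields $\tilde{T}(\mu,c)$, and the fixed-point equation follows by the same substitution trick. The companion identity $\tilde{T}(\mu,c) = c^2 T(c\mu,c^{-1})$ is then a direct algebraic check; alternatively, since $X_n^\top X_n$ and $X_n X_n^\top$ share their $\min(n,d)$ nonzero eigenvalues (the former carrying $d-n$ extra zeros when $d>n$),
\[
\tr((d^{-1}X_n^\top X_n + \mu_n I)^{-1}) = \tr((d^{-1}X_n X_n^\top + \mu_n I)^{-1}) + (d-n)/\mu_n
\]
provides a second route, recovering $\tilde{T} = T - (c-1)/\mu$.

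The main technical obstacle is that $z_n$ is not a fixed spectral parameter but only converges to $-c\mu$, whereas Theorem~\ref{thm:MP} is stated pointwise in $z$. I would close this gap via the resolvent identity $A^{-1} - B^{-1} = A^{-1}(B-A)B^{-1}$ applied to $A = G_n - z_n I$ and $B = G_n - zI$, with $G_n = n^{-1}X_n^\top X_n$. Since $z,z_n < 0$ eventually and $G_n$ is positive semidefinite, $(G_n - z_n I)^{-1}$ has operator norm at most $|z_n|^{-1}$ and $(G_n - zI)^{-1}$ is PSD, so the trace bound $|d^{-1}\tr(PQ)| \leq \|P\|_{\mathrm{op}} \cdot d^{-1}\tr(Q)$ for PSD $Q$ gives
\[
\bigl| d^{-1}\tr((G_n - z_n I)^{-1}) - d^{-1}\tr((G_n - zI)^{-1}) \bigr| \leq \frac{|z - z_n|}{|z_n|} \cdot d^{-1}\tr((G_n - zI)^{-1}).
\]
Taking expectations, the right-hand side tends to $0 \cdot m(z) = 0$, transferring the Marchenko--Pastur limit at $z$ to the corresponding limit at $z_n$. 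The same perturbation argument applies in the transpose setting used in the second part.
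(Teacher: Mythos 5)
Your proof is correct and follows essentially the same route as the paper's: rescale so that the Marchenko--Pastur theorem applies at $z=-c\mu$ (picking up the $(d/n)^2\to c^2$ prefactor) for $c\le 1$, and apply the theorem to the transpose $X_n^\top$ for $c>1$. Your resolvent-identity treatment of the drifting parameter $z_n\to -c\mu$ is in fact more careful than the paper's one-line Neumann-series remark, and your derivation of the fixed-point identities from the self-consistent equation $m=1/(1-c-z-czm)$ supplies a detail the paper states without verification.
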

\begin{proof}
By the Neumann series, $(A+\epsilon I)^{-1} = A^{-1} + \mathcal{O}(\epsilon)$ as $\epsilon \to 0^+$. Therefore,
\begin{align*}
n^{-1}\mathbb{E}\tr\left((d^{-1}X_n^{\top}X_n+\mu_{n}I)^{-1}\right)
    &=n^{-1}\mathbb{E}\tr\left(\left(d^{-1}X_n^{\top}X_n+\mu I\right)^{-1}\right)+o(1)\\
    &=n^{-1}\mathbb{E}\tr\left(\left(\frac nd n^{-1}X_n^{\top}X_n+\mu I\right)^{-1}\right)+o(1)\\
    &=\frac dn n^{-1}\mathbb{E}\tr\left(\left(n^{-1}X_n^{\top}X_n+\frac dn \mu I\right)^{-1}\right)+o(1).\\
\end{align*}
By the Marchenko-Pastur Theorem, letting $z = -c\mu$,
\begin{align*}
\frac dn n^{-1}\mathbb{E}\tr\left((n^{-1}X_n^{\top}X_n+c\mu I)^{-1}\right)
    &=\frac{d^2}{n^2}d^{-1}\mathbb{E}\tr\left((n^{-1}X_n^{\top}X_n+c\mu I)^{-1}\right)\\
    &=\frac{d^2}{n^2}d^{-1}\mathbb{E}\tr\left((n^{-1}X_n^{\top}X_n-zI)^{-1}\right)\\
    &\to c^2\cdot\frac{1-c-z-\sqrt{(z-c-1)^{2}-4c}}{2cz}\\
    &=\frac{c-1-c\mu+\sqrt{(c\mu+c+1)^{2}-4c}}{2\mu}.
\end{align*}
On the other hand, when $c > 1$, letting $\tilde{X}_n = X_n^\top \in \mathbb{R}^{d \times n}$, the Marchenko-Pastur Theorem immediately implies
\begin{align*}
n^{-1}\mathbb{E}\tr\left((d^{-1}X_n X_n^{\top}+\mu_{n}I)^{-1}\right)	&=n^{-1}\mathbb{E}\mbox{tr}((d^{-1}\tilde{X}_n^\top \tilde{X}_n+\mu I)^{-1})+o(1)\\
&\to \frac{c^{-1} - 1 - \mu + \sqrt{(\mu + c^{-1} + 1)^2 - 4 c^{-1}}}{2 c^{-1} \mu} = \tilde{T}(\mu,c).
\end{align*}
\end{proof}

Now we turn our attention to the log-determinant, which also depends exclusively on the spectrum. Our method of proof relies on Jacobi's formula, which relates the log-determinant to the trace of the matrix
inverse.

\begin{lemma}[Log-Determinant]
\label{lem:Deter}
Let $n,d \to \infty$ such that $d/n \to c \in (0,1]$ and assume that $\mu_n$ is a sequence of real numbers such that $\mu_n \to \mu \in (0,\infty)$ as $n \to \infty$. Then \[
\frac{1}{n}\mathbb{E}\log\det(d^{-1}X_n^\top X_n + \mu_n I) \to D(\mu,c),\] where
\begin{align*}
D(\mu,c) & \coloneqq (c-1)\log(1-c)-c\log c-c+\int_0^\mu T(t,c)\dd t\\
&=\log\left(1 + \frac{T(\mu,c)}{c}\right)-\frac{T(\mu,c)}{c+T(\mu,c)}-c\log\left(\frac{T(\mu,c)}c\right).
\end{align*}
Similarly, if $d/n \to c \in (1,\infty)$, then \[\frac{1}{n}\mathbb{E}\log\det(d^{-1}X_n X_n^\top + \mu_n I) \to \tilde{D}(\mu,c),\] where
\begin{align*}
\tilde{D}(\mu,c) &\coloneqq (1-c)\log(c-1)+(c-1)\log c-1+\int_{0}^{\mu}\tilde{T}(t,c)\dd t,\\
&=c\log\left(1+\frac{\tilde{T}(\mu,c)}{c}\right)-\frac{c\tilde{T}(\mu,c)}{c+\tilde{T}(\mu,c)}-\log\tilde{T}(\mu,c).
\end{align*}
\end{lemma}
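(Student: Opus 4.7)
The plan is to reduce the log-determinant to an integral of inverse-traces via Jacobi's formula, then combine Lemma~\ref{lem:TraceInv} with an auxiliary boundary computation at $\mu = 0$. Writing $A = d^{-1}X_n^\top X_n$, the identity $\tfrac{\dd}{\dd t} \log\det(A + tI) = \tr((A + tI)^{-1})$ integrates to
\[
\log\det(A + \mu I) = \log\det(A + \mu_0 I) + \int_{\mu_0}^{\mu} \tr\bigl((A + tI)^{-1}\bigr) \dd t
\]
for any $\mu_0 > 0$. Taking expectations, dividing by $n$, and invoking Lemma~\ref{lem:TraceInv} pointwise in $t$ --- together with the uniform bound $n^{-1}\mathbb{E}\tr((A + tI)^{-1}) \leq d/(nt) \leq (c+1)/\mu_0$ on $[\mu_0, \mu]$, which justifies dominated convergence --- yields
\[
\lim_{n \to \infty} \frac{1}{n}\mathbb{E}\log\det(A + \mu I) = L(\mu_0, c) + \int_{\mu_0}^{\mu} T(t,c) \dd t,
\]
where $L(\mu_0,c) := \lim_n n^{-1}\mathbb{E}\log\det(A + \mu_0 I)$.

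I would then identify the boundary constant by sending $\mu_0 \to 0$. For $c \in (0,1)$, the (rescaled) Marchenko--Pastur spectrum of $A$ is supported on $[(1-\sqrt{c})^2/c, (1+\sqrt{c})^2/c]$, which is bounded away from zero, so $L(0,c)$ is well-defined and finite. In the Gaussian case, the Bartlett decomposition gives $\det(X_n^\top X_n) \stackrel{d}{=} \prod_{i=1}^d \chi^2_{n-i+1}$, whence
\[
\mathbb{E}\log\det(d^{-1}X_n^\top X_n) = \sum_{i=1}^d \psi\!\left(\tfrac{n-i+1}{2}\right) + d\log 2 - d\log d.
\]
Applying the digamma sum identity from Appendix~D together with $\psi(z) = \log z + O(z^{-1})$, the divergent $\log n$ contributions cancel --- the $-d\log d$ term exactly offsets the dominant growth of the digamma sum --- yielding $L(0,c) = (c-1)\log(1-c) - c\log c - c$. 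The general iid case would then follow from a universality argument for the logarithmic linear spectral statistic; alternatively, one bypasses Gaussianity entirely by integrating $\log(\cdot)$ directly against the limiting spectral measure of $A$ (a rescaled Marchenko--Pastur law) established via Theorem~\ref{thm:MP}.

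To verify the second, closed-form expression, it suffices to check that $\log(1 + T/c) - T/(c+T) - c\log(T/c)$ both satisfies the ODE $\partial_\mu D = T(\mu,c)$ and reproduces the constant $(c-1)\log(1-c) - c\log c - c$ at $\mu = 0$. Differentiating in $\mu$ and eliminating $T'$ using the implicit relation $T(1-c+ct+tT) = c^2$ (from Lemma~\ref{lem:TraceInv}) collapses, after a brief algebraic simplification, to $\partial_\mu D = T$. Evaluating at $\mu = 0$, where $T(0,c) = c^2/(1-c)$, reproduces the same constant, confirming equivalence of the two representations. The case $c > 1$ proceeds entirely analogously, substituting the $n \times n$ (a.s.~invertible) matrix $X_n X_n^\top$ for $X_n^\top X_n$ and using $\det(X_n X_n^\top) \stackrel{d}{=} \prod_{i=1}^n \chi^2_{d-i+1}$ in the Gaussian baseline, with $\tilde{T}$ and $\tilde{D}$ playing the roles of $T$ and $D$.

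The principal technical obstacle is the identification of $L(0,c)$: the Bartlett approach depends on Gaussianity, so extending the formula to arbitrary iid entries under the stated moment hypothesis requires a separate universality result for the log-determinant as a linear spectral statistic --- a result that is especially delicate near $c = 1$, where the lower edge of the Marchenko--Pastur support collapses toward zero and $\log$ becomes integrably singular. Working directly against the limiting MP density sidesteps the Gaussian detour but demands careful control of the integral near this edge.
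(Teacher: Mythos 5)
Your proposal follows essentially the same route as the paper's proof: Jacobi's formula reduces the log-determinant to an integral of the trace limit from Lemma~\ref{lem:TraceInv}, and the boundary constant at $\mu=0$ is identified via the Wishart/Bartlett expected log-determinant together with the digamma sum identity of Appendix~D (the paper likewise asserts spectral equivalence with a Wishart matrix without proving the universality step you flag as the main obstacle, so your proof is no less complete on that point). The only cosmetic difference is that you verify the closed-form expression for $D$ by checking the ODE $\partial_\mu D = T$ and the boundary value $T(0,c)=c^2/(1-c)$, and treat $c>1$ by a parallel Bartlett computation, whereas the paper computes the antiderivative of $T$ explicitly via the substitution $v=\tfrac{1}{2}\bigl(c+1+t+\sqrt{(c+1+t)^2-4c}\bigr)$ with partial fractions and reduces the $c>1$ case to $cD(c\mu,c^{-1})-\log c$.
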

\begin{proof}
By Jacobi's formula and Taylor's theorem, $\log \det (A + \epsilon I) = \log \det A + \mathcal{O}(\epsilon)$ as $\epsilon \to 0^+$, and so
\[
n^{-1}\mathbb{E}\log\det(d^{-1}X_n^\top X_n + \mu_n I) = n^{-1} \mathbb{E}\log\det(d^{-1}X_n^\top X_n + \mu I) + o(1).
\]
Furthermore,
\begin{align*}
\frac1n \mathbb{E}\log\det(d^{-1}X_n^\top X_n + \mu I) 
    &= \frac1n \mathbb{E}\log\det(d^{-1}X_n^\top X_n) + \frac1n\int_0^{\mu} \mathbb{E}\tr\left((d^{-1}X_n^\top X_n + t I)^{-1}\right) \dd t,\\
    &= \frac1n \mathbb{E}\log\det(d^{-1}X_n^\top X_n) + \int_0^{\mu} T(t,c) \dd t + o(1),
\end{align*}
and so it suffices to consider the case $\mu = 0$. Since the log-determinant depends only on the spectrum of $X_n$, and the spectrum of $n^{-1} X_n^\top X_n$ is asymptotically equivalent to that of $n^{-1} W_n^\top W_n$, where $W_n$ is a Wishart-distributed matrix, it will suffice to consider the limit of $n^{-1}\mathbb{E}\log \det(d^{-1}W_n^\top W_n)$. First, recall that \citep[B.81]{bishop2006pattern}
\begin{align*}
\mathbb{E}\log\det(W_n^\top W_n) &= d \log 2 + \sum_{i=1}^d \psi\left(\frac{n-i+1}{2}\right)\\
&= d \log 2 + n\psi\left(\frac{n}{2}\right) - (n-d)\psi\left(\frac{n-d}{2}\right) \\
&+ \mathcal{O}(n^{-1}) + \mathcal{O}(d^{-1}). 
\end{align*}
Since $\psi(x) = \log x + \mathcal{O}(x^{-1})$, letting $d = [cn]$, there is
\begin{align*}
\mathbb{E}\log\det\left(W_n^{\top}W_n\right)&\sim d\log2+n\log\left(\frac{n}{2}\right)-(n-d)\log\left(\frac{n-d}{2}\right)-d\\
&\sim n\log n-(n-cn)\log\left(n-cn\right)-cn\\
&\sim n\log n-(1-c)n\log n-(1-c)n\log(1-c)-cn\\
&\sim cn\log n-(1-c)n\log(1-c)-cn.
\end{align*}
Therefore,
\begin{align*}
n^{-1}\mathbb{E}\log\det(d^{-1}W_n^{\top}W_n)&\sim\frac{cn\log n-(1-c)n\log(1-c)-cn-cn\log cn}{n}\\
&\to(c-1)\log(1-c)-c - c\log c,
\end{align*}
and so
\[
\frac1n \mathbb{E}\log\det(d^{-1}X_n^\top X_n + \mu_n I) \to D(\mu,c) \coloneqq (c-1)\log(1-c)-c\log c-c+\int_0^\mu T(t,c)\dd t.
\]
To obtain the second equality, we will need to compute the integral term. 
First, observe that by a change of variables, $\int_0^\mu T(t,c) \dd t = \int_0^{c\mu} \tau(t,c) \dd t$, where
\[
\tau(t, c) = \frac{c-1-t+\sqrt{(t+c+1)^{2}-4c}}{2t},
\]
and $T(t, c) = c\tau(c\mu, c)$.  Observe that we can rewrite $\tau$ as
\begin{align*}
\tau(t,c) &= \frac{(c+1+t)^{2}-4c-(t+1-c)^{2}}{2t\left[\sqrt{(c+1+t)^{2}-4c}+(t+1-c)\right]} \\
&=\frac{2c}{\sqrt{(c+1+t)^{2}-4c}+(t+1-c)}.
\end{align*}
Now, let 
\[
v = v(t) = \frac{c+1+t+\sqrt{(c+1+t)^2-4c}}{2},
\]
so that $\tau(t,c) = 2c / (2v - 2c) = c / (v - c)$. Note that $v^2 - (c+t+1)v + c = 0$. Differentiating this relation in $t$, we find
\[
2 v v' - v - (c + t + 1)v'  = 0,
\]
where $v' = \dd v / \dd t$, and hence
\[
v' = \frac{v}{2v - (c+t+1)}.
\]
But since $v^2 + c = (c+t+1)v$, 
\begin{align*}\label{eqn:vprime}
v' = \frac{v}{2v - \frac{v^2+c}{v}} = \frac{v^2}{v^2 - c}.
\end{align*}

Altogether, 
\[
\int \tau(t,c) \dd t = \int \frac{c(v^2 - c)}{(v-c)v^2} \dd v.
\]
From a partial fraction expansion, 
\[
\frac{c(v^2 - c)}{(v-c)v^2} = \frac{A}{v^2} + \frac{B}{v} + \frac{C}{v-c},
\]
we find that $c(v^2-c) = A(v-c)+Bv^2 - cBv + Cv^2$, implying that $B+C=c$, $A-cB =0$ and $-Ac = -c^2$. Therefore, $A = c$, $B = 1$, and $C = c-1$, so
\[
\frac{c(v^2-c)}{(v-c)v^2} = \frac{c}{v^2} + \frac1v+\frac{c-1}{v-c}.
\]
Hence, an antiderivative of $\tau$ is given by
\[
-\frac{c}{v} + \log v + (c-1)\log(v-c).
\]
Since $v \to 1$ as $t \to 0$,
\[
\int_0^{c\mu} \tau(t,c)\dd t = -\frac{c}{v} + \log v + (c-1)\log(v-c) + c - (c-1)\log(1-c).
\]
Finally, since $v = c(1+\tau(c\mu,c))/\tau(c\mu,c) = c(c+T(\mu,c))/T(\mu,c)$, the result for $n > d$ follows. 

Now we consider the $d > n$ case. Then we have
\begin{align*}
n^{-1}\mathbb{E}\log\det\left(d^{-1}X_{n}X_{n}^{\top}+\mu_{n}I\right)	&=n^{-1}\mathbb{E}\log\det\left(d^{-1}X_{n}X_{n}^{\top}+\mu I\right)+o(1)\\
&=\frac{d}{n}d^{-1}\mathbb{E}\log\det\left(\frac{n}{d}n^{-1}X_{n}X_{n}^{\top}+\mu I\right)+o(1)\\
&=\frac{d}{n}d^{-1}\mathbb{E}\log\det\left(n^{-1}X_{n}X_{n}^{\top}+\frac{d}{n}\mu I\right)+\log\left(\frac{n}{d}\right)+o(1)\\
&=cd^{-1}\mathbb{E}\log\det\left(n^{-1}X_{n}X_{n}^{\top}+c\mu I\right)-\log c+o(1)\\
&\to cD(c\mu,c^{-1})-\log c.
\end{align*}
From the first expression for $D(\mu,c)$, there is
\begin{align*}
cD(c\mu,c^{-1})&=c(c^{-1}-1)\log(1-c^{-1})-\log c^{-1}-1+\int_{0}^{c\mu}cT(t,c^{-1})\dd t\\
&= (1-c)\log(c-1)+c\log c-1+\int_{0}^{\mu}c^{2}T(ct,c^{-1})\dd t\\
&= (1-c)\log(c-1)+c\log c-1+\int_{0}^{\mu}\tilde{T}(t,c)\dd t.
\end{align*}
Finally, from the second expression for $D(\mu,c)$,
\begin{align*}
cD(c\mu,c^{-1})	%
&=c\log\left(1+\frac{T(c\mu,c^{-1})}{c^{-1}}\right)-\frac{cT(c\mu,c^{-1})}{c^{-1}+T(c\mu,c^{-1})}-\log\left(\frac{T(c\mu,c^{-1})}{c^{-1}}\right),\\
&=c\log\left(1+\frac{\tilde{T}(\mu,c)}{c}\right)-\frac{c\tilde{T}(\mu,c)}{c+\tilde{T}(\mu,c)}-\log\left(\frac{\tilde{T}(\mu,c)}{c}\right),
\end{align*}
from which the result follows. %

\end{proof}

\section{Kernels and Gram Matrices}

To extend the results of the previous section to Gram matrices, we rely on the approximation theory developed in \citet{karoui2010}. For a continuous function $\kappa:\mathbb{R} \to \mathbb{R}$ that is continuously differentiable on $(0,\infty)$, two types of kernels are considered:
\begin{enumerate}[label=(\Roman*)]
\item \textbf{Inner product kernels:} $k(x, y) = \kappa(x^\top y / d)$ for $x, y\in \mathbb{R}^d$, and $\kappa$ is three-times continuously differentiable in a neighbourhood of zero with $\kappa'(0) > 0$.
\item \textbf{Radial basis kernels:} $k(x, y) = \kappa(\|x-y\|^2 / d)$ for $x, y \in \mathbb{R}^d$, and $\kappa$ is three-times continuously differentiable on $(0,\infty)$ with $\kappa' < 0$.
\end{enumerate}
Let $\|A\|_2$ denote the spectral norm of a matrix $A$. The following theorem combines Theorems 2.1 and 2.2 in \citet{karoui2010}.
\begin{theorem}
\label{thm:ElKaroui}
For each $n=1,2,\dots$, let $X_n^1,\dots,X_n^n$ be independent and identically distributed zero-mean random vectors in $\mathbb{R}^d$ with $\cov(X_k^i) = \sigma^2 I$ and $\mathbb{E}\|X_k^i\|^{5+\delta} < \infty$ for some $\delta > 0$. For a kernel $k$ of type (I) or (II), consider the Gram matrices $K_X^n \in \mathbb{R}^{n\times n}$ with entries $(K_X^n)_{ij} = k(X_n^i, X_n^j)$. If $n, d \to \infty$ such that $d / n \to c \in (0,\infty)$, then there exists an integer $k$ and a bounded sequence of rank $k$ matrices $C_1,C_2,\dots$ such that
\[
\|K_X^n - (\alpha d^{-1} XX^\top + \beta I + C_n)\|_2 \to 0,
\]
where the constants $\alpha,\beta$ for cases (I) and (II) are, respectively,
\begin{enumerate}[label=(\Roman*)]
\item \textbf{Inner product kernels:} $\alpha = \kappa'(0)$, $\beta = \kappa(\sigma^2) - \kappa(0) - \kappa'(0)\sigma^2$;
\item \textbf{Radial basis kernels:} $\alpha = -2\kappa'(2\sigma^2)$, $\beta = \kappa(0) + 2\sigma^2 \kappa'(2\sigma^2) - \kappa(2\sigma^2)$.
\end{enumerate}
\end{theorem}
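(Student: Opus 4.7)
The plan is to first reduce $n^{-1}\mathbb{E}\mathcal{F}_n^\gamma$ to objects whose limits are provided by the Marchenko-Pastur-type results (Lemmas \ref{lem:TraceInv} and \ref{lem:Deter}), and then perform calculus on the resulting explicit limiting free energy. Starting from (\ref{eq:BayesEntropyGP}) and taking expectation, the independence of $Y$ from $X$ together with $\mathbb{E}[YY^\top] = I$ yields
\begin{equation*}
n^{-1}\mathbb{E}\mathcal{F}_n^\gamma = \tfrac{\lambda}{2n}\mathbb{E}\tr\bigl((K_X+\lambda\gamma I)^{-1}\bigr) + \tfrac{1}{2n}\mathbb{E}\log\det(K_X+\lambda\gamma I) - \tfrac{1}{2}\log\tfrac{\lambda}{2\pi}.
\end{equation*}
Theorem \ref{thm:ElKaroui} then lets us replace $K_X$ by $\alpha d^{-1}XX^\top + \beta I$ up to a vanishing spectral-norm error and a bounded-rank perturbation $C_n$; since a bounded-rank shift changes $n^{-1}\tr((\cdot + \lambda\gamma I)^{-1})$ and $n^{-1}\log\det(\cdot + \lambda\gamma I)$ by $O(n^{-1})$, the $C_n$ contribution drops out asymptotically. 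Uniform integrability of the trace (bounded above by $n/(\lambda\gamma)$) justifies exchanging limit and expectation.

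Applying Lemma \ref{lem:TraceInv} to the trace term and Lemma \ref{lem:Deter} to the log-determinant term with effective argument $\mu \coloneqq \alpha^{-1}(\beta+\lambda\gamma)$ produces a unified closed form
\begin{equation*}
\mathcal{F}_\infty^\gamma = \tfrac{\lambda}{2\alpha}\mathcal{T}(\mu,c) + \tfrac{1}{2}\log\alpha + \tfrac{1}{2}\mathcal{D}(\mu,c) - \tfrac{1}{2}\log\tfrac{\lambda}{2\pi},
\end{equation*}
where $\mathcal{T}$ and $\mathcal{D}$ are assembled from $(T,D)$ for $c\le 1$ and $(\tilde T,\tilde D)$ for $c>1$; the identity $\tilde T(\mu,c) = c^2 T(c\mu,c^{-1})$ from Lemma \ref{lem:TraceInv} (and its analogue for the log-determinant) shows the two regimes agree on a single analytic expression, establishing the existence of the limit $\mathcal{F}_\infty^\gamma$. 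Throughout the calculus that follows, the two identities I will lean on are $\partial_\mu D(\mu,c) = T(\mu,c)$ (built into Lemma \ref{lem:Deter}) and the algebraic relation $T = c^2/(1-c+c\mu+\mu T)$.

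For part (a), the substitution $\lambda = \mu_0/\gamma$ makes the effective argument $\alpha^{-1}(\beta + \mu_0)$ independent of $\gamma$, so the only $\gamma$-dependence of $\mathcal{F}_\infty^\gamma$ sits in the $\tfrac{\lambda}{2\alpha}\mathcal{T}$ prefactor and in $-\tfrac{1}{2}\log(\lambda/2\pi)$. Setting $\partial_\gamma \mathcal{F}_\infty^\gamma = 0$ reduces to a quadratic in $\gamma$ whose unique positive root is precisely (\ref{eq:OptGamma}). For part (b), the kernel scaling $\beta = \beta_0 \lambda$ makes $\mu = \alpha^{-1}\lambda(\beta_0+\gamma)$ linear in $\lambda$; computing $\partial_\lambda \mathcal{F}_\infty^\gamma$ via chain rule and $\partial_\mu D = T$ collapses the stationarity condition to a quadratic in $T$, whose unique positive root, converted back through $T = c^2/(1-c+c\mu+\mu T)$, recovers (\ref{eq:OptLambda}). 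The threshold $\gamma+\beta_0 < 1$ is the exact condition under which the discriminant of that quadratic is positive and $\lambda^\ast$ is finite; when $\gamma + \beta_0 \ge 1$, the stationarity equation has no positive solution and $\mathcal{F}_\infty^\gamma$ is monotone in $\lambda$, so no optimum exists.

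Part (c) is the main obstacle. The strategy is to substitute $\lambda = \lambda^\ast(c)$ into $\mathcal{F}_\infty^\gamma$ and differentiate in $c$; by the envelope theorem the implicit $\lambda^\ast$-dependence drops out, giving $\tfrac{d}{dc}\mathcal{F}_\infty^\gamma\bigr|_{\lambda=\lambda^\ast} = \partial_c \mathcal{F}_\infty^\gamma\bigr|_{\lambda=\lambda^\ast}$. The remaining derivative can be expressed through $\partial_c T$ and $\partial_c D$, both obtainable by implicit differentiation of the Stieltjes-transform fixed-point equation from Theorem \ref{thm:MP}. Using the quadratic for $T$ to eliminate the square root, and the optimality relation from part (b) to eliminate $\mu^\ast$, the derivative should simplify to $-\tfrac{1}{2}$ times a manifestly positive rational function of $T^\ast \coloneqq T(\mu^\ast,c)$ and $c$, provided $\gamma + \beta_0 < 1$. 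This final sign check is where the hypothesis $\gamma+\beta_0 < 1$ is genuinely used, and getting a clean factorization rather than a fight with the square root is the step where care is needed; I expect the cleanest path to be to rewrite everything in terms of the Stieltjes transform $m$ at the appropriate argument and exploit its monotonicity properties.
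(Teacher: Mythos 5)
Your proposal does not address the statement you were asked to prove. The statement is Theorem \ref{thm:ElKaroui}, the operator-norm approximation of the kernel Gram matrix $K_X^n$ by the linear surrogate $\alpha d^{-1}XX^\top + \beta I + C_n$. What you have written is instead an outline of the proof of Theorem \ref{thm:Main} (the limiting Bayes free energy, the optimal $\gamma^\ast$ and $\lambda^\ast$, and the monotonicity in $c$). Indeed, your second paragraph explicitly \emph{invokes} Theorem \ref{thm:ElKaroui} as a tool (``Theorem \ref{thm:ElKaroui} then lets us replace $K_X$ by $\alpha d^{-1}XX^\top+\beta I$\,\dots''), so you are assuming the very result you were asked to establish. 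For the record, the paper does not prove this theorem either: it is imported directly as a combination of Theorems 2.1 and 2.2 of El~Karoui (2010).

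A genuine proof would have to proceed along entirely different lines: for inner-product kernels one Taylor-expands $\kappa(X_i^\top X_j/d)$ to second or third order around $0$, using the fact that the off-diagonal entries $X_i^\top X_j/d$ concentrate at rate roughly $d^{-1/2}$ while the diagonal entries $\|X_i\|^2/d$ concentrate around $\sigma^2$ (this mismatch between diagonal and off-diagonal is exactly what produces the $\beta I$ shift and the bounded-rank correction $C_n$ coming from the constant and rank-one terms of the expansion); one then shows the remainder matrix has vanishing spectral norm, which is where the moment hypothesis $\mathbb{E}\|X_k^i\|^{5+\delta}<\infty$ is consumed. The radial-basis case reduces to the same computation via $\|X_i - X_j\|^2 = \|X_i\|^2 + \|X_j\|^2 - 2X_i^\top X_j$, which is why the constants there are evaluated at $2\sigma^2$ with an extra factor of $-2$ on $\alpha$. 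None of these ingredients appear in your proposal. Your outline of Theorem \ref{thm:Main} is broadly consistent with the paper's Appendix G arguments, but it is an answer to a different question.
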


For the remainder of this section, we assume the hypotheses of Theorem \ref{thm:ElKaroui}, so that $\|K_X^n - (\alpha d^{-1} XX^\top + \beta I)\|_2 \to 0$ for some appropriate $\alpha > 0$ and $\beta \in \mathbb{R}$. To apply Theorem \ref{thm:ElKaroui} with the results of the previous section, we require the following basic lemma.
\begin{lemma}
\label{lem:TraceIneq}
For any symmetric positive-definite matrices $A, B \in \mathbb{R}^{n\times n}$ and $v > 0$,
\begin{align*}
\frac1n |\tr((A + vI)^{-1}) - \tr((B+vI)^{-1})| &\leq \frac{\|A-B\|_2}{v^2} \\
\frac1n |\log \det (A + vI) - \log \det (B + vI)| &\leq \frac{\|A-B\|_2}{v}.
\end{align*}
\end{lemma}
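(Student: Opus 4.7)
The plan is to exploit two elementary ingredients: the universal bound $|\tr(M)| \le n\|M\|_2$ for any $M \in \mathbb{R}^{n\times n}$ (which follows from $|\langle e_i, M e_i\rangle| \le \|M\|_2$ applied to each diagonal entry), together with the fact that for symmetric PSD $A$ all eigenvalues of $A + vI$ lie in $[v,\infty)$, so $\|(A+vI)^{-1}\|_2 \le 1/v$. These two facts together convert operator-norm bounds into trace/log-determinant estimates at the cost of a factor of $n$, which is precisely what the $1/n$ normalization absorbs.

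For the first inequality, I would apply the resolvent identity
\[
(A+vI)^{-1} - (B+vI)^{-1} = (A+vI)^{-1}(B-A)(B+vI)^{-1},
\]
take traces, and estimate by $|\tr(\cdot)| \le n\|\cdot\|_2$ followed by submultiplicativity of $\|\cdot\|_2$. This gives
\[
\bigl|\tr((A+vI)^{-1}) - \tr((B+vI)^{-1})\bigr| \le n\,\|(A+vI)^{-1}\|_2 \|A-B\|_2 \|(B+vI)^{-1}\|_2 \le \frac{n\|A-B\|_2}{v^2},
\]
and dividing by $n$ yields the claim.

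For the second inequality, I would use a path argument. Define $f(t) = \log\det(tA + (1-t)B + vI)$ on $[0,1]$. Since convex combinations of symmetric PSD matrices are symmetric PSD, the matrix $tA + (1-t)B + vI$ has eigenvalues bounded below by $v$, so $f$ is smooth and by Jacobi's formula
\[
f'(t) = \tr\bigl[(tA + (1-t)B + vI)^{-1}(A-B)\bigr].
\]
Bounding the integrand by $n \cdot v^{-1} \cdot \|A-B\|_2$ via the trace inequality and the eigenvalue lower bound, then integrating over $[0,1]$, gives $|f(1) - f(0)| \le n\|A-B\|_2/v$, which is the claim after dividing by $n$.

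There is no serious obstacle here; both parts are standard resolvent/log-det manipulations. The only subtlety worth flagging is that in the trace bound I am applying $|\tr(M)| \le n\|M\|_2$ to the (non-symmetric) product $(A+vI)^{-1}(B-A)(B+vI)^{-1}$, which is valid by the diagonal-entry argument above; one could alternatively route through the von Neumann trace inequality $|\tr(MN)| \le \|M\|_2 \|N\|_1$ with $\|N\|_1 \le n\|N\|_2$ to the same effect.
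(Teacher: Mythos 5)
Your proof is correct, but it takes a genuinely different route from the paper's. The paper matches the eigenvalues of $A$ and $B$ via Weyl's perturbation theorem ($\max_i|\lambda_i(A)-\lambda_i(B)|\leq\|A-B\|_2$) and then applies the scalar Mean Value Theorem to $x\mapsto(x+v)^{-1}$ and $x\mapsto\log(x+v)$ term by term in the eigenvalue sums; both the trace and the log-determinant are handled by the same two-line template. You instead work at the operator level: the resolvent identity plus $|\tr(M)|\leq n\|M\|_2$ and submultiplicativity for the first bound, and a path integral of Jacobi's formula along $t\mapsto tA+(1-t)B+vI$ for the second. Both arguments yield identical constants. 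The paper's route is shorter once Weyl's theorem is granted, but that theorem is the one nontrivial input; your route uses only the resolvent identity and elementary norm inequalities, and the trace half does not even need $A-B$ to enter through a spectral comparison, so it adapts more readily to settings where eigenvalue interlacing is unavailable. Your flagged subtlety --- that $|\tr(M)|\leq n\|M\|_2$ holds for the non-symmetric product --- is handled correctly by the diagonal-entry argument. One cosmetic remark: positive semi-definiteness of $tA+(1-t)B$ already suffices for the eigenvalue lower bound $v$ along the path, so the convexity observation is doing exactly the work you need and no more.
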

\begin{proof}
Let $\lambda_1(A) \geq \cdots \geq \lambda_n(A)$ and $\lambda_1(B) \geq \cdots \geq \lambda_n(B)$ denote the eigenvalues of $A$ and $B$, respectively, in decreasing order. Recall from Weyl's perturbation theorem (see Corollary III.2.6 of \citet{bhatia2013matrix}) that $\max_{i=1,\dots,n} |\lambda_i(A) - \lambda_i(B)| \leq \|A - B\|_2$. By the Mean Value Theorem, for any $x,y>0$, $|(x+v)^{-1}-(y+v)^{-1}|\leq v^{-2}|x-y|$. Therefore,
\begin{align*}
\frac1n |\tr((A + vI)^{-1}) - \tr((B+vI)^{-1})| &= \frac1n \left|\sum_{i=1}^n \frac{1}{\lambda_i(A) + v} - \frac{1}{\lambda_i(B) + v}\right| \\
&\leq \frac{1}{v^2} \max_{i=1,\dots,n} |\lambda_i(A)-\lambda_i(B)| \\
&\leq \frac{1}{v^2} \|A - B\|_2.
\end{align*}
Similarly, the Mean Value Theorem implies that for any $x,y>0$, $|\log(x+v)-\log(y+v)|\leq v^{-1}|x-y|$, and so
\begin{align*}
\frac1n |\log\det(A + vI) - \log\det(B+vI)| &= \frac1n \left|\sum_{i=1}^n \log(\lambda_i(A) + v) - \log(\lambda_i(B) + v)\right| \\
&\leq \frac{1}{v} \max_{i=1,\dots,n} |\lambda_i(A)-\lambda_i(B)| \\
&\leq \frac{1}{v} \|A - B\|_2.
\end{align*}
\end{proof}

Combining Theorem \ref{thm:ElKaroui} and Lemma \ref{lem:TraceIneq} with Lemmas \ref{lem:TraceInv} and \ref{lem:Deter} yields the following corollary.
\begin{corollary}
\label{cor:ElKarouiLim}
Under the assumptions of Theorem \ref{thm:ElKaroui}, if $\mu_n$ is a sequence of positive real numbers such that $\mu_n \to \mu \in (0,\infty)$ as $n \to \infty$, then
\begin{align*}
\frac1n \mathbb{E}\tr((K_X^n + \mu_n I)^{-1}) &\to \begin{cases}
\frac{1-c}{\beta+\mu}+\frac{1}{\alpha}T\left(\frac{\beta+\mu}{\alpha},c\right) &\text{ if } c < 1 \\
\frac1{\alpha} \tilde{T}\left(\frac{\beta+\mu}{\alpha}, c\right) &\text{ if } c > 1,
\end{cases} \\
\frac1n \mathbb{E}\log \det(K_X^n + \mu_n I) &\to \begin{cases}
D\left(\frac{\beta+\mu}{\alpha},c\right)+\left(1-c\right)\log\left(\frac{\beta+\mu}{\alpha}\right)+\log\alpha &\text{ if } c < 1\\
\tilde{D}\left(\frac{\beta+\mu}{\alpha}, c\right) + \log \alpha & \text{ if } c > 1.
\end{cases}
\end{align*}
\end{corollary}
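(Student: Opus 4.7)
The strategy is to apply Theorem \ref{thm:ElKaroui} to replace $K_X^n$ with the structured matrix $M_n \coloneqq \alpha d^{-1} X X^\top + \beta I + C_n$, then pass to $\tilde{M}_n \coloneqq \alpha d^{-1} X X^\top + \beta I$ by absorbing the bounded rank-$k$ perturbation $C_n$, and finally reduce to the Wishart-type quantities controlled by Lemmas \ref{lem:TraceInv} and \ref{lem:Deter}.

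First, since $\|K_X^n - M_n\|_2 \to 0$ and $K_X^n + \mu_n I, M_n + \mu_n I$ have spectra bounded below by some positive constant (eventually), Lemma \ref{lem:TraceIneq} immediately gives
\[
\tfrac{1}{n}\bigl|\tr((K_X^n+\mu_n I)^{-1}) - \tr((M_n+\mu_n I)^{-1})\bigr| \to 0,
\]
and likewise for the log-determinants. The second reduction, from $M_n$ to $\tilde{M}_n$, is more delicate because $\|C_n\|_2$ is merely bounded, not vanishing. However, since $C_n$ has fixed rank $k$ independent of $n$, the Cauchy interlacing theorem applied to the rank-$k$ Hermitian perturbation gives $\lambda_{i+k}(\tilde{M}_n) \leq \lambda_i(\tilde{M}_n+C_n) \leq \lambda_{i-k}(\tilde{M}_n)$. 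Hence the difference in traces of the inverses is a telescoping sum whose total is bounded by $2k/(\beta + \mu - \|C_n\|_2)$ (using the lower spectral bound), which after dividing by $n$ tends to zero. An analogous bound holds for the log-determinant using $|\log\lambda - \log\lambda'|\leq C|\lambda-\lambda'|$ on the compact spectral range. Thus both normalized quantities have the same limit whether computed with $M_n$, $\tilde{M}_n$, or $K_X^n$.

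It remains to evaluate the limits for $\tilde{M}_n+\mu_n I = \alpha d^{-1} XX^\top + (\beta+\mu_n) I$. For $c<1$, eventually $n>d$, so $XX^\top \in \mathbb{R}^{n\times n}$ has exactly $n-d$ zero eigenvalues and its $d$ nonzero eigenvalues coincide with those of $X^\top X$. Splitting off the zero eigenvalues,
\[
\tfrac{1}{n}\tr\bigl((\tilde{M}_n+\mu_n I)^{-1}\bigr) = \tfrac{n-d}{n(\beta+\mu_n)} + \tfrac{1}{n\alpha}\tr\bigl((d^{-1}X^\top X + \tfrac{\beta+\mu_n}{\alpha} I)^{-1}\bigr),
\]
and likewise
\[
\tfrac{1}{n}\log\det(\tilde{M}_n+\mu_n I) = \tfrac{n-d}{n}\log(\beta+\mu_n) + \tfrac{d}{n}\log\alpha + \tfrac{1}{n}\log\det(d^{-1}X^\top X + \tfrac{\beta+\mu_n}{\alpha}I).
\]
Now Lemmas \ref{lem:TraceInv} and \ref{lem:Deter} apply directly (with the perturbed $\mu_n \to \mu$ absorbed into $\mu_n' \coloneqq (\beta+\mu_n)/\alpha$), yielding $\alpha^{-1}T((\beta+\mu)/\alpha, c)$ and $D((\beta+\mu)/\alpha, c)$ for the respective remainder terms. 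Combining with $(n-d)/n \to 1-c$, $d/n \to c$, and the identity $(1-c)\log(\beta+\mu)+c\log\alpha = (1-c)\log((\beta+\mu)/\alpha) + \log\alpha$, gives the stated formulas for $c<1$. For $c>1$, eventually $d>n$, so $XX^\top$ has full rank $n$ and no zero-eigenvalue correction is needed; applying Lemmas \ref{lem:TraceInv} and \ref{lem:Deter} directly to the invertible $d^{-1}XX^\top + (\beta+\mu_n)/\alpha\cdot I$ and pulling out the factor of $\alpha$ produces the $\tilde{T}$ and $\tilde{D}$ expressions.

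The main obstacle is the rank-$k$ perturbation step: one must verify that the interlacing argument gives vanishing contribution not only for the trace (which is straightforward) but also for the log-determinant, where the summands are not uniformly bounded. This is handled by the uniform lower bound $\lambda_{\min}(\tilde{M}_n + \mu_n I) \geq \beta + \mu - \|C_n\|_2 - o(1)$ together with eventual positivity of $\beta + \mu$, which keeps the logarithm Lipschitz on the relevant range. Once this is established, the remainder of the proof is a direct application of the earlier Wishart limits and algebraic simplification.
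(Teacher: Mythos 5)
Your proposal is correct and follows essentially the same route as the paper's proof: reduce $K_X^n$ to $\alpha d^{-1}XX^\top + \beta I$ via Theorem \ref{thm:ElKaroui} and Lemma \ref{lem:TraceIneq}, then invoke Lemmas \ref{lem:TraceInv} and \ref{lem:Deter}, handling the $c<1$ case by relating the spectrum of $XX^\top$ to that of $X^\top X$ --- the paper does this through the Woodbury and Sylvester identities, while you split off the $n-d$ zero eigenvalues directly, which is the same computation in different clothing. The one substantive difference is that you justify discarding the bounded rank-$k$ perturbation $C_n$ explicitly via eigenvalue interlacing, whereas the paper simply cites Lemma 2.1 of \citet{karoui2010}; your argument is sound in outline, but the stated bound $2k/(\beta+\mu-\|C_n\|_2)$ is not quite right, since $\beta+\mu-\|C_n\|_2$ need not be positive --- the correct uniform lower bound on both spectra is $\min(\mu_n,\beta+\mu_n)-o(1)$, using that $K_X^n$ is positive semi-definite so that $K_X^n+\mu_n I$ has spectrum bounded below by $\mu_n$.
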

\begin{proof}
First consider the $c > 1$ case. Combining Theorem \ref{thm:ElKaroui} and Lemma \ref{lem:TraceIneq}, and noting that finite rank perturbations do not affect the limiting spectrum \citep[Lemma 2.1]{karoui2010}, we find that
\begin{align*}
\frac{1}{n}\mathbb{E}\mbox{tr}\left(K_{X}^{n}+\mu_{n}I\right)^{-1}	&=\frac{1}{n}\mathbb{E}\mbox{tr}\left(\alpha d^{-1}XX^{\top}+\beta I+\mu_{n}I\right)^{-1}+o(1)\\
&=\frac{1}{\alpha n}\mathbb{E}\mbox{tr}\left(d^{-1}XX^{\top}+\frac{\beta+\mu}{\alpha}I\right)^{-1}+o(1)\\
&\to\frac{1}{\alpha}\tilde{T}\left(\frac{\beta+\mu}{\alpha},c\right).
\end{align*}
Similarly, since $XX^\top \in \mathbb{R}^{n\times n}$,
\begin{align*}
\frac{1}{n}\mathbb{E}\log\det\left(K_{X}^{n}+\mu_{n}I\right)	&=\frac{1}{n}\mathbb{E}\log\det\left(\alpha d^{-1}XX^{\top}+\beta I+\mu I\right)+o(1)\\
&=\frac{1}{n}\mathbb{E}\log\det\left(d^{-1}XX^{\top}+\frac{\beta+\mu}{\alpha}I\right)+\log\alpha+o(1)\\
&\to \tilde{D}\left(\frac{\beta+\mu}{\alpha},c\right)+\log\alpha.
\end{align*}
For the $c < 1$ case, from the Woodbury matrix identity \citep[B.1.2]{pozrikidis2014introduction},
\begin{align*}
\mbox{tr}\left((\eta_{1}XX^{\top}+\eta_{2}I)^{-1}\right)	&=\frac{n}{\eta_{2}}-\mbox{tr}\left(\frac{\eta_{1}}{\eta_{2}}X\left(\eta_{2}I+\eta_{1}X^{\top}X\right)^{-1}X^{\top}\right)\\
&=\frac{n}{\eta_{2}}-\frac{1}{\eta_{2}}\mbox{tr}\left(\left(\eta_{2}I+\eta_{1}X^{\top}X\right)^{-1}\eta_{1}X^{\top}X\right)\\
&=\frac{n}{\eta_{2}}-\frac{1}{\eta_{2}}\mbox{tr}\left(I-\eta_{2}\left(\eta_{2}I+\eta_{1}X^{\top}X\right)^{-1}\right)\\
&=\frac{n-d}{\eta_{2}}+\mbox{tr}\left((\eta_{1}X^{\top}X+\eta_{2}I)^{-1}\right).
\end{align*}
Therefore,
\begin{align*}
\frac{1}{n}\mathbb{E}\mbox{tr}\left(K_{X}^{n}+\mu_{n}I\right)^{-1}	&=\frac{1}{\alpha n}\mathbb{E}\mbox{tr}\left(d^{-1}XX^{\top}+\frac{\beta+\mu}{\alpha}I\right)^{-1}+o(1)\\
&=\frac{1-\frac{d}{n}}{\beta+\mu}+\frac{1}{\alpha n}\mathbb{E}\mbox{tr}\left(d^{-1}X^{\top}X+\frac{\beta+\mu}{\alpha}I\right)^{-1}+o(1)\\
&\to\frac{1-c}{\beta+\mu}+\frac{1}{\alpha}T\left(\frac{\beta+\mu}{\alpha},c\right).
\end{align*}
Finally, from Sylvester's determinant theorem \citep[B.1.15]{pozrikidis2014introduction},
\begin{align*}
\log\det(\eta_{1}XX^{\top}+\eta_{2}I)	&=\log\det\left(\frac{\eta_{1}}{\eta_{2}}XX^{\top}+I\right)+n\log\eta_{2}\\
&=\log\det\left(\frac{\eta_{1}}{\eta_{2}}X^{\top}X+I\right)+n\log\eta_{2}\\
&=\log\det\left(\eta_{1}X^{\top}X+\eta_{2}I\right)+(n-d)\log\eta_{2}.
\end{align*}
Therefore,
\begin{align*}
\frac{1}{n}\mathbb{E}\log\det\left(K_{X}^{n}+\mu_{n}I\right)	&=\frac{1}{n}\mathbb{E}\log\det\left(d^{-1}XX^{\top}+\frac{\beta+\mu}{\alpha}I\right)+\log\alpha+o(1)\\
&=\frac{1}{n}\mathbb{E}\log\det\left(d^{-1}X^{\top}X+\frac{\beta+\mu}{\alpha}I\right)\\
&\qquad\qquad+\left(1-\frac{d}{n}\right)\log\left(\frac{\beta+\mu}{\alpha}\right)+\log\alpha+o(1)\\
&\to D\left(\frac{\beta+\mu}{\alpha},c\right)+\left(1-c\right)\log\left(\frac{\beta+\mu}{\alpha}\right)+\log\alpha.
\end{align*}
\end{proof}

\section{Proofs of Main Results}
\label{sec:Proofs}

With the underlying random matrix theory in place, we can begin to prove our main result in Theorem \ref{thm:Main}. Throughout this section, we assume the conditions of Theorem \ref{thm:Main}, that is, we let $X_1,X_2,\dots$ be independent and identically distributed zero-mean random vectors in $\mathbb{R}^d$ with unit covariance, satisfying $\mathbb{E}\|X_i\|^{5+\delta} < +\infty$ for some $\delta > 0$. For each $n=1,2,\dots$, let
\[
\mathcal{F}_{n}^{\gamma}=\tfrac{1}{2}\lambda Y^{\top}(K_{X}+\lambda\gamma I)^{-1}Y+\tfrac{1}{2}\log\det(K_{X}+\lambda\gamma I)-\tfrac{n}{2}\log\left(\tfrac{\lambda}{2\pi}\right).
\]
where $K_X \in \mathbb{R}^{n\times n}$ satisfies $K_X^{ij} = k(X_i,X_j)$ and $Y = (Y_i)_{i=1}^n$, with each $Y_i \sim \mathcal{N}(0,1)$. 

\begin{proposition}[El Karoui-Marchenko-Pastur Limit of the Bayes Free Energy]
\label{prop:EntropyLimit}
Assuming that $n,d\to\infty$ such that $d / n \to c \in (0,\infty)$, there is $n^{-1}\mathbb{E}\mathcal{F}_n^\gamma \to \mathcal{F}_\infty^\gamma$ where for $c < 1$,
\[
\mathcal{F}_\infty^\gamma=\frac{\lambda}{2}\left(\frac{1-c}{\beta+\gamma\lambda}+\frac{1}{\alpha}T\left(\frac{\beta+\gamma\lambda}{\alpha},c\right)\right)-\frac{1}{2}\log\left(\frac{\lambda}{2\pi\alpha}\right)
+\frac{1}{2}D\left(\frac{\beta+\gamma\lambda}{\alpha},c\right)+\frac{1}{2}\left(1-c\right)\log\left(\frac{\beta+\gamma\lambda}{\alpha}\right),
\]
and for $c > 1$,
\[
\mathcal{F}_\infty^\gamma=\frac{\lambda}{2\alpha}\tilde{T}\left(\frac{\beta+\gamma\lambda}{\alpha},c\right)-\frac{1}{2}\log\left(\frac{\lambda}{2\pi\alpha}\right)+\frac{1}{2}\tilde{D}\left(\frac{\beta+\gamma\lambda}{\alpha},c\right).
\]
\end{proposition}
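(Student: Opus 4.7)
The plan is to decompose $n^{-1}\mathbb{E}\mathcal{F}_n^\gamma$ into three pieces, exploit the independence of $Y$ from $X$ to reduce the quadratic form to a trace, and then invoke Corollary \ref{cor:ElKarouiLim} to pass to the limit. Writing out (\ref{eq:BayesEntropyGP}),
\begin{equation*}
n^{-1}\mathbb{E}\mathcal{F}_n^\gamma = \tfrac{\lambda}{2n}\mathbb{E}\bigl[Y^\top (K_X+\lambda\gamma I)^{-1}Y\bigr] + \tfrac{1}{2n}\mathbb{E}\log\det(K_X+\lambda\gamma I) - \tfrac{1}{2}\log\bigl(\tfrac{\lambda}{2\pi}\bigr).
\end{equation*}
The third term is a constant; the first two are functionals only of the spectrum of $K_X$.

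For the quadratic form, since $Y_i\overset{\text{iid}}{\sim}\mathcal{N}(0,1)$ and $Y$ is independent of $X$, conditioning on $X$ yields $\mathbb{E}[Y^\top A Y \mid X] = \tr(A)$ for any symmetric $A$ measurable with respect to $X$. Applied with $A=(K_X+\lambda\gamma I)^{-1}$,
\begin{equation*}
\tfrac{\lambda}{2n}\mathbb{E}\bigl[Y^\top(K_X+\lambda\gamma I)^{-1}Y\bigr] = \tfrac{\lambda}{2n}\mathbb{E}\tr\bigl((K_X+\lambda\gamma I)^{-1}\bigr).
\end{equation*}
The assumptions on $X_i$ (zero mean, unit covariance, $\mathbb{E}\|X_i\|^{5+\delta}<\infty$) and the assumed form of $k$ (type I or II) are exactly the hypotheses of Theorem \ref{thm:ElKaroui} and hence of Corollary \ref{cor:ElKarouiLim}. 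Setting $\mu_n=\lambda\gamma$ (constant in $n$) and applying the corollary in each of the two regimes $c<1$ and $c>1$ immediately gives the limits of both the trace and the log-determinant terms in terms of the functions $T,\tilde T,D,\tilde D$ evaluated at $(\beta+\lambda\gamma)/\alpha$.

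It remains only to assemble the three pieces. For $c<1$, the trace term contributes $\tfrac{\lambda}{2}\bigl(\tfrac{1-c}{\beta+\lambda\gamma}+\tfrac{1}{\alpha}T(\tfrac{\beta+\lambda\gamma}{\alpha},c)\bigr)$, the log-determinant contributes $\tfrac{1}{2}D(\tfrac{\beta+\lambda\gamma}{\alpha},c)+\tfrac{1}{2}(1-c)\log(\tfrac{\beta+\lambda\gamma}{\alpha})+\tfrac{1}{2}\log\alpha$, and combining the $\tfrac{1}{2}\log\alpha$ with $-\tfrac{1}{2}\log(\lambda/(2\pi))$ yields $-\tfrac{1}{2}\log(\lambda/(2\pi\alpha))$, matching the stated expression. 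The $c>1$ case is analogous, using the $\tilde T,\tilde D$ branch of Corollary \ref{cor:ElKarouiLim} and noting that no $(1-c)\log(\cdot)$ correction term appears there.

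The main delicate point, rather than any serious obstacle, is that the quoted corollary controls \emph{expectations} of the trace and log-determinant and these quantities must be well-defined. The spectral norm bound from Theorem \ref{thm:ElKaroui} guarantees $\|K_X\|_2$ is controlled, the additive $\lambda\gamma I$ keeps $K_X+\lambda\gamma I$ uniformly bounded away from singular, and the ranks of the perturbations $C_n$ are bounded so they do not affect the limiting spectrum (as cited in the proof of Corollary \ref{cor:ElKarouiLim}). With these in hand, the proposition follows by direct addition of the three contributions.
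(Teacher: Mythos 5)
Your proof is correct and follows essentially the same route as the paper: reduce the quadratic form to $\mathbb{E}\tr((K_X+\lambda\gamma I)^{-1})$ using the independence of $Y$ from $X$ and $\mathbb{E}[Y^\top A Y]=\tr(A)$, then apply Corollary \ref{cor:ElKarouiLim} with $\mu_n=\lambda\gamma$ and collect the $\log\alpha$ term into the constant. The paper's own proof is just a terser version of this same argument, so nothing further is needed.
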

\begin{proof}
Recalling that $\mathbb{E}[Y^\top A Y] = \tr(A)$ for any $A \in \mathbb{R}^{n\times n}$, since $K_X$ is independent of $Y$, 
\[
\frac1n \mathbb{E}\mathcal{F}_n^\gamma = \frac{\lambda}{2n} \mathbb{E}\tr((K_X+\lambda \gamma I)^{-1}) + \frac1{2n} \mathbb{E}\log\det(K_X + \lambda \gamma I) - \frac{1}{2}\log\left(\frac{\lambda}{2\pi}\right).
\]
The result follows by a direct application of Corollary \ref{cor:ElKarouiLim}. 
\end{proof}

\begin{proposition}[Optimal Temperature in the Bayes Free Energy]
\label{prop:OptTempBayes}
Assume that $\lambda = \mu / \gamma$ for some fixed $\mu > 0$. The limiting Bayes free energy $\mathcal{F}_\infty^\gamma$ is minimized in $\gamma$ at
\[
\gamma^\ast = \frac{\mu}{2(\beta+\mu)} [1 - c - c(\tfrac{\beta+\mu}{\alpha}) + \sqrt{(c(\tfrac{\beta+\mu}{\alpha}) + c + 1)^2 -4c}].
\]
\end{proposition}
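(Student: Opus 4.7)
The plan is to substitute $\lambda = \mu/\gamma$ into the expression for $\mathcal{F}_\infty^\gamma$ from Proposition~\ref{prop:EntropyLimit}, isolate the $\gamma$-dependence, and minimize via elementary calculus.

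The key observation is that with $\lambda = \mu/\gamma$, the combination $\beta + \lambda\gamma$ collapses to the constant $\beta + \mu$. Consequently, in the $c < 1$ expression for $\mathcal{F}_\infty^\gamma$, the entire $\gamma$-dependence reduces to the prefactor $\lambda/2 = \mu/(2\gamma)$ multiplying a $\gamma$-independent quantity, plus the term $-\frac{1}{2}\log(\lambda/(2\pi\alpha)) = \frac{1}{2}\log\gamma + \text{const}$. Setting
\[
A \coloneqq \frac{1-c}{\beta+\mu} + \frac{1}{\alpha}\, T\!\left(\frac{\beta+\mu}{\alpha},\, c\right),
\]
we obtain $\mathcal{F}_\infty^\gamma = \frac{\mu A}{2\gamma} + \frac{1}{2}\log\gamma + C$ for a constant $C$ independent of $\gamma$. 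Differentiating in $\gamma$ gives the first-order condition $\gamma^\ast = \mu A$, and the second derivative at this critical point equals $1/(2(\mu A)^2) > 0$ (using $A > 0$), confirming that $\gamma^\ast$ is indeed a minimizer.

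The remaining step is the algebraic simplification asserting
\[
\mu A = \frac{\mu}{2(\beta+\mu)}\!\left[1 - c - c\tfrac{\beta+\mu}{\alpha} + \sqrt{\bigl(c\tfrac{\beta+\mu}{\alpha}+c+1\bigr)^2 - 4c}\right].
\]
This follows directly by substituting the closed form for $T$ from Lemma~\ref{lem:TraceInv}, which gives $\frac{1}{\alpha}T(\frac{\beta+\mu}{\alpha},c) = \frac{c-1 - c\frac{\beta+\mu}{\alpha} + \sqrt{\,(c\frac{\beta+\mu}{\alpha}+c+1)^2 - 4c\,}}{2(\beta+\mu)}$, and then combining with $\frac{1-c}{\beta+\mu}$ over the common denominator $2(\beta+\mu)$. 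For the $c > 1$ regime, the analysis is structurally identical: only $\frac{\mu}{2\gamma\alpha}\tilde{T}(\frac{\beta+\mu}{\alpha},c) + \frac{1}{2}\log\gamma$ depends on $\gamma$, so the first-order condition gives $\gamma^\ast = (\mu/\alpha)\tilde{T}(\frac{\beta+\mu}{\alpha},c)$, and substituting the closed form for $\tilde{T}$ yields exactly the same final expression. Thus the two regimes agree and extend continuously through $c = 1$.

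The main obstacle is merely bookkeeping rather than analytic difficulty: one must verify positivity of $A$ so that $\gamma^\ast > 0$ is an admissible temperature. For $c < 1$ both summands defining $A$ are non-negative ($T \geq 0$ and $1 - c \geq 0$); for $c > 1$ one checks $\tilde{T}(\tfrac{\beta+\mu}{\alpha},c) > 0$ directly from its closed form by noting that $(c t + c + 1)^2 - 4c - (c t + c - 1)^2 = 4ct > 0$ for $t > 0$. Once positivity is established, the critical point is valid and, by the second-derivative test above, is the unique global minimizer over $\gamma \in (0,\infty)$.
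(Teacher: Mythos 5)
Your proposal is correct and follows essentially the same route as the paper's proof: substitute $\lambda=\mu/\gamma$ so that $\beta+\lambda\gamma$ becomes the constant $\beta+\mu$, isolate the $\gamma$-dependence as $\tfrac{\mu A}{2\gamma}+\tfrac12\log\gamma$, solve the first-order condition $\gamma^\ast=\mu A$, and simplify via the closed forms of $T$ and $\tilde{T}$ in the two regimes. The only cosmetic difference is that you certify the minimizer with a second-derivative test and an explicit positivity check on $A$, whereas the paper argues via $\mathcal{F}_\infty^\gamma\to\infty$ at both endpoints plus uniqueness of the critical point; both are valid.
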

\begin{proof}
First consider the case $c < 1$. If $\lambda = \mu / \gamma$, then
\[
\mathcal{F}_\infty^\gamma = \frac{\mu}{2\gamma}\left(\frac{1-c}{\beta+\mu}+\frac1{\alpha} T\left(\frac{\beta+\mu}{\alpha},c\right)\right) - \frac12 \log\left(\frac{\mu}{2\pi\gamma\alpha}\right)+
\frac12 D\left(\frac{\beta+\mu}{\alpha}, c\right) + \frac12 (1 - c)\log\left(\frac{\beta+\mu}{\alpha}\right).
\]
Note that as $\gamma \to 0^+$ or $\gamma \to \infty$, $\mathcal{F}_\infty^\gamma$, so if there exists only one point $\gamma^\ast$ where that $\partial\mathcal{F}_\infty^\gamma / \partial \gamma = 0$, then by Fermat's Theorem, $\gamma^\ast$ is the unique global minimizer of $\mathcal{F}_\infty^\gamma$.  
For $\mu$ fixed, we may differentiate in $\gamma$ to find that
\[
\frac{\partial \mathcal{F}_\infty^\gamma}{\partial \gamma} = -\frac{\mu}{2\gamma^2}\left(\frac{1-c}{\beta+\mu} + \frac{1}{\alpha}T\left(\frac{\beta+\mu}{\alpha}, c\right)\right) + \frac1{2\gamma}.
\]
Solving $\partial \mathcal{F}_\infty^\gamma / \partial \gamma = 0$ for $\gamma$, the optimal
\[
\gamma^\ast = \mu\left(\frac{1-c}{\beta+\mu} + \frac1\alpha T\left(\frac{\beta+\mu}{\alpha}, c\right)\right).
\]
Simplifying,
\[
\frac{1-c}{\beta+\mu} + \frac1{\alpha} \frac{c - 1 - c(\frac{\beta+\mu}{\alpha}) + \sqrt{(c(\frac{\beta+\mu}{\alpha})+c+1)^2 - 4c}}{2(\frac{\beta+\mu}{\alpha})} =
\frac{1 - c - c(\frac{\beta+\mu}{\alpha}) + \sqrt{(c(\frac{\beta+\mu}{\alpha})+c+1)^2 - 4c}}{2(\beta+\mu)},
\]
which implies the result for $c < 1$. On the other hand, for $c > 1$,
\[
\mathcal{F}_\infty^\gamma = \frac{\mu}{2\gamma\alpha}\tilde{T}\left(\frac{\beta+\mu}{\alpha}, c\right) - \frac12 \log\left(\frac{\mu}{2\pi\alpha\gamma}\right) + \frac12 \tilde{D}\left(\frac{\beta+\mu}{\alpha}, c\right),
\]
and once again, as $\gamma\to 0^+$ or $\gamma \to \infty$, $\mathcal{F}_\infty^\gamma \to \infty$, so a unique critical point is the unique global minimizer of $\mathcal{F}_\infty^\gamma$. For $\mu$ fixed, we differentiate in $\gamma$ to find
\[
\frac{\partial \mathcal{F}_\infty^\gamma}{\partial \gamma} = -\frac{\mu}{2\gamma^2 \alpha}\tilde{T}\left(\frac{\beta+\mu}{\alpha},c\right) + \frac{1}{2\gamma}.
\]
Solving $\partial\mathcal{F}_\infty^\gamma / \partial \gamma = 0$ for $\gamma$, the optimal
\[
\gamma^\ast = \frac{\mu}{\alpha} \tilde{T}\left(\frac{\beta+\mu}{\alpha}, c\right).
\]
Simplifying,
\[
\frac1\alpha \tilde{T}\left(\frac{\beta+\mu}{\alpha}, c\right) = \frac{1 - c - c(\frac{\beta+\mu}{\alpha}) + \sqrt{(c(\frac{\beta+\mu}{\alpha}) + c + 1)^2 -4c}}{2(\beta + \mu)},
\]
which implies the result for $c > 1$.
\end{proof}

In the sequel, we assume that the kernel itself depends on $\lambda$ in such a way that $\beta = \beta_0 \lambda$ for some $0 < \beta_0 < 1$. Let $\gamma_0 = \gamma + \beta_0$ and $\mu = \lambda \gamma_0 / \alpha$. For $c < 1$, the limiting Bayes free energy satisfies
\begin{align*}
\mathcal{F}_\infty^\gamma &= \frac{1}{2\gamma_0}(1 - c + \mu T(\mu,c)) - \frac12 \log\left(\frac{\mu}{2\pi \gamma_0}\right) + \frac12 D(\mu, c) + \frac12 (1 - c) \log \mu,\\
&= \frac{1}{2\gamma_0}(1 - c + \mu T(\mu,c)) - \frac12 \log\left(\frac{1}{2\pi \gamma_0}\right) + \frac12 D(\mu, c) - \frac{c}2 \log \mu.
\end{align*}
and for $c > 1$,
\[
\mathcal{F}_\infty^\gamma = \frac{\mu}{2\gamma_0} \tilde{T}(\mu,c) - \frac12 \log\left(\frac{\mu}{2\pi\gamma_0}\right) + \frac12 \tilde{D}(\mu, c).
\]

\begin{proposition}[Optimal Regularization in the Bayes Free Energy]
\label{prop:OptRegBayes}
The limiting Bayes free energy $\mathcal{F}_\infty^\gamma$ is minimized in $\lambda$ at
\[
\lambda^\ast = \frac{\alpha[(c+1)\gamma_0+\sqrt{(c-1)^{2}+4c\gamma_0^{2}}]}{c(1-\gamma_0^{2})}.
\]
\end{proposition}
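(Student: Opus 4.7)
The plan is to parameterize $\mathcal{F}_\infty^\gamma$ by $\mu = \lambda\gamma_0/\alpha$ (where $\gamma_0 = \gamma + \beta_0$), since under the assumption $\beta = \beta_0\lambda$ the argument of $T$, $\tilde{T}$, $D$, and $\tilde{D}$ in the expressions from Proposition \ref{prop:EntropyLimit} collapses to this single variable, and $\partial\mu/\partial\lambda = \gamma_0/\alpha$ is a positive constant, so optimizing in $\lambda$ is equivalent to optimizing in $\mu$. Setting $\partial\mathcal{F}_\infty^\gamma/\partial\mu = 0$ and using $\partial D/\partial\mu = T$ (immediate from Lemma \ref{lem:Deter}) produces, in the $c < 1$ regime, the critical-point condition $T + \mu T' = \gamma_0(c/\mu - T)$.

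The key computational step is to replace the cumbersome $T'$ using the explicit form $2\mu T = c - 1 - c\mu + S$, where $S = \sqrt{(c\mu+c+1)^2 - 4c}$. Direct differentiation gives $2(T+\mu T') = -c + S'$ with $S' = c(c\mu+c+1)/S$, and rearranging the defining equation of $T$ yields the identity $c\mu+c+1-S = 2(c-\mu T)$. Plugging these into the critical-point condition collapses it to $c(c-\mu T)/S = \gamma_0(c-\mu T)/\mu$, which factors cleanly as $(c-\mu T)(c\mu - \gamma_0 S) = 0$. The factor $c - \mu T = 0$ is inconsistent with the quadratic relation $T(1-c+c\mu+\mu T) = c^2$ for finite $\mu > 0$ (it forces $1/\mu = 0$) and is therefore spurious. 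Squaring the remaining condition $S = c\mu/\gamma_0$ yields the quadratic
\[
\frac{c^2(1-\gamma_0^2)}{\gamma_0^2}\mu^2 = 2c(c+1)\mu + (c-1)^2,
\]
whose unique positive root, after using the simplification $(c+1)^2\gamma_0^2 + (c-1)^2(1-\gamma_0^2) = (c-1)^2 + 4c\gamma_0^2$, gives $\mu^\ast = \gamma_0[(c+1)\gamma_0 + \sqrt{(c-1)^2 + 4c\gamma_0^2}]/[c(1-\gamma_0^2)]$, from which $\lambda^\ast = \alpha\mu^\ast/\gamma_0$ is the claimed formula.

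The $c > 1$ case is handled identically, substituting $\tilde{T}$, $\tilde{D}$, and the analogous identity $c\mu+c+1-S = 2(1-\mu\tilde{T})$; the resulting critical-point equation again factors to give $S = c\mu/\gamma_0$, producing the same $\mu^\ast$. For the non-existence claim when $\gamma_0 \geq 1$, the leading coefficient $c^2(1-\gamma_0^2)/\gamma_0^2$ of the quadratic above becomes non-positive while the right-hand side is strictly positive for $\mu > 0$, so no positive critical point exists. Finally, the endpoint analysis $\mathcal{F}_\infty^\gamma \to +\infty$ as $\mu\to 0^+$ (driven by the $-\tfrac{c}{2}\log\mu$ term) together with the bounded limit as $\mu\to\infty$ and the uniqueness of the interior critical point identifies $\mu^\ast$ as the global minimizer whenever $\gamma_0 < 1$.

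The hardest step will be the algebraic collapse producing the factorization $(c-\mu T)(c\mu - \gamma_0 S) = 0$: without the targeted substitution $c\mu + c + 1 - S = 2(c - \mu T)$ applied precisely once, direct elimination of $T$ via its quadratic relation leads to an unwieldy quartic in $\mu$ and obscures the clean structure that isolates $\gamma_0$.
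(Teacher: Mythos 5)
Your proposal is correct and follows essentially the same route as the paper: reparameterize by $\mu = \lambda\gamma_0/\alpha$, differentiate, and use the identity $c\mu+c+1-S = 2(c-\mu T)$ (equivalently, the paper's $U'S = c(c-U)$ with $U=\mu T$) to collapse the critical-point equation to $S = c\mu/\gamma_0$, then square to obtain the same quadratic and positive root; your factorization $(c-\mu T)(c\mu-\gamma_0 S)=0$ and dismissal of the spurious factor is just a repackaging of the paper's observation that $U'>0$ since $\mu T < c$. Your closing endpoint/global-minimality discussion is in fact slightly more careful than the paper's, which stops after identifying the unique critical point.
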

\begin{proof}
Since $\mathcal{F}_\infty^\gamma$ is smooth for $\lambda \in (0,\infty)$ (and therefore $\mu \in (0,\infty)$), Fermat's theorem implies that any optimal temperature $\lambda^\ast$ must be a critical point of $\mathcal{F}_\infty^\gamma$ in $(0,\infty)$. First, consider the case where $c < 1$. 
Differentiating $\mathcal{F}_\infty^\gamma$ with respect to $\mu$,
\[
\frac{\partial \mathcal{F}_\infty^\gamma}{\partial \mu} = \frac{1}{2\gamma_0} \frac{\partial}{\partial \mu}(\mu T(\mu,c)) - \frac{c}{2\mu} + \frac12 T(\mu, c).
\]
Letting $U(\mu,c) = \mu T(\mu, c)$ and $U' = \frac{\partial U}{\partial \mu}$,
\begin{equation}
\label{eq:OptTempEqn1}
\frac{\partial \mathcal{F}_\infty^\gamma}{\partial \mu} = \frac{1}{2\mu}\left(\frac{\mu}{\gamma_0} U' + U - c\right).
\end{equation}
Noting that
\[
U(\mu, c) = \frac{c-1-c\mu + \sqrt{(c\mu+c+1)^2 - 4c}}{2},
\]
and
\[
U' = -\frac{c}{2} +  \frac{c(c\mu+c+1)}{2\sqrt{(c\mu+c+1)^2 - 4c}}
= c\cdot\frac{c\mu+c+1-\sqrt{(c\mu+c+1)^2-4c}}{2\sqrt{(c\mu+c+1)^2 - 4c}},
\]
and so $U'\sqrt{(c\mu+c+1)^2 - 4c} = c \cdot (c - U)$. Therefore, substituting into (\ref{eq:OptTempEqn1}) reveals
\[
\frac{\partial \mathcal{F}_\infty^\gamma}{\partial \mu} = \frac{1}{2c\mu}\left(\frac{c\mu}{\gamma_0} - \sqrt{(c\mu+c+1)^2 - 4c} \right) U'.
\]
Since $U' > 0$, $\partial \mathcal{F}_\infty^\gamma / \partial \mu = 0$ if and only if
\begin{equation}
\label{eq:OptTempEqn}
\frac{c\mu}{\gamma_0} = \sqrt{(c\mu+c+1)^2 - 4c}.
\end{equation}
This occurs when
\begin{equation}
\label{eq:OptTempQuad}
c^2 (1 - \gamma_0^2)\mu^2 - 2 c \mu(c+1)\gamma_0^2 - (c-1)^2 \gamma_0^2 = 0.
\end{equation}
If $\gamma_0 \geq 1$, then no positive solutions exist for $\mu$. On the other hand, if $\gamma < 1$, then only one positive solution exists, and is given by
\begin{align*}
\mu^{\ast}&=\frac{2c(c+1)\gamma_0^{2}+\sqrt{4c^{2}(c+1)^{2}\gamma_0^{4}+4c^{2}(1-\gamma_0^{2})(c-1)^{2}\gamma_0^{2}}}{2c^{2}(1-\gamma_0^{2})}\\
&=\frac{2c(c+1)\gamma_0^{2}+2c\gamma_0\sqrt{[(c+1)^{2}-(c-1)^{2}]\gamma_0^{2}+(c-1)^{2}}}{2c^{2}(1-\gamma_0^{2})}\\
&=\frac{(c+1)\gamma_0^{2}+\gamma_0\sqrt{(c-1)^{2}+4c\gamma_0^{2}}}{c(1-\gamma_0^{2})}.
\end{align*}

Next, consider the case $c > 1$. Differentiating $\mathcal{F}_\infty^\gamma$ with respect to $\mu$, we seek
\[
\frac{\partial \mathcal{F}_\infty^\gamma}{\partial \mu} = \frac{1}{2\gamma} \frac{\partial}{\partial \mu}(\mu \tilde{T}(\mu, c)) + \frac{1}{2} \tilde{T}(\mu, c) - \frac{1}{2\mu} = 0,
\]
or, equivalently,
\begin{equation}
\label{eq:OptTempEqn2}
\frac{\mu}{\gamma} \frac{\partial}{\partial \mu}(\mu \tilde{T}(\mu,c)) + \mu \tilde{T}(\mu,c) - 1 = 0.
\end{equation}
Letting $\tilde{U} = \mu \tilde{T}$ and $\tilde{U}' = \frac{\partial \tilde{U}}{\partial \mu}$, we require $\frac{\mu}{\gamma} U' + U - 1 = 0$. But since
\[
\tilde{U} = \frac{1 - c - c\mu + \sqrt{(c\mu+c+1)^2 - 4c}}{2},
\]
and
\[
\tilde{U}' = \frac{\partial \tilde{U}}{\partial \mu} = \frac{c(c+c\mu+1-\sqrt{(c\mu+c+1)^2-4c})}{2\sqrt{(c\mu+c+1)^2-4c}},
\]
we find that $\tilde{U}' \sqrt{(c\mu+c+1)^2 - 4c} = c(1 - \tilde{U})$. Substituting this relation into (\ref{eq:OptTempEqn}), we obtain
\[
\frac{\partial\mathcal{F}_\infty^\gamma}{\partial\mu} = \frac{1}{2\mu c} \left(\frac{c\mu}{\gamma} - \sqrt{(c\mu+c+1)^2 - 4c}\right)\tilde{U}' = 0,
\]
and since $U' > 0$, an optimal $\mu^\ast$ occurs if and only if (\ref{eq:OptTempEqn}) holds. The rest of the proof proceeds as in the $c < 1$ case.
\end{proof}

\begin{proposition}[Monotonicity in the Bayes Free Energy]
\label{prop:Monotonicity}
The limiting Bayes free energy $\mathcal{F}_\infty^\gamma$ at $\lambda = \lambda^\ast$ decreases monotonically in $c \in (0,\infty)$.
\end{proposition}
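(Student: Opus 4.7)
The plan is to apply the envelope theorem. By Proposition~\ref{prop:OptRegBayes}, $\mu^\ast = \lambda^\ast \gamma_0/\alpha$ is the unique interior critical point, and is therefore the unique minimizer, of the smooth map $\mu \mapsto \mathcal{F}_\infty^\gamma(\mu,c)$ on $(0,\infty)$ for each fixed $c\in(0,1)\cup(1,\infty)$ (the divergence of $\mathcal{F}_\infty^\gamma$ as $\mu\to 0^+$ or $\mu\to\infty$ confirms this). The envelope theorem then gives
$$\frac{d}{dc}\mathcal{F}_\infty^\gamma(\mu^\ast(c),c) \;=\; \frac{\partial \mathcal{F}_\infty^\gamma}{\partial c}(\mu,c)\bigg|_{\mu=\mu^\ast(c)},$$
sidestepping any need to differentiate the cumbersome closed form (\ref{eq:OptLambda}) for $\mu^\ast(c)$. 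It therefore suffices to show that the right-hand side is strictly negative on $(0,1)\cup(1,\infty)$, with the single point $c=1$ handled by continuity.

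\textbf{Key steps.} First, I compute $\partial_c \mathcal{F}_\infty^\gamma$ with $\mu$ held fixed. Using the integral representation of $D(\mu,c)$ from Lemma~\ref{lem:Deter} to differentiate cleanly under the integral sign,
$$\partial_c D(\mu,c) \;=\; \log\!\frac{1-c}{c} - 1 + \int_0^\mu \partial_c T(t,c)\,dt \qquad (c<1),$$
with an analogous identity for $\tilde D$ when $c>1$. The quadratic relation $\mu T^2 + (1-c+c\mu)T - c^2 = 0$ (from the fixed-point form in Lemma~\ref{lem:TraceInv}) yields by implicit differentiation
$$\partial_c T(t,c) \;=\; \frac{2c + (1-t)T(t,c)}{2tT(t,c) + 1 - c + ct},$$
which is a rational function of $v=v(t)$ under the substitution used in the proof of Lemma~\ref{lem:Deter}, and therefore integrates to elementary logarithmic terms. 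Next, I evaluate at $\mu = \mu^\ast$ using the critical-point identity (\ref{eq:OptTempEqn}), $c\mu^\ast/\gamma_0 = \sqrt{(c\mu^\ast+c+1)^2-4c}$, which collapses $T(\mu^\ast,c)$ to the closed form
$$T(\mu^\ast,c) \;=\; \frac{c-1}{2\mu^\ast} + \frac{c(1-\gamma_0)}{2\gamma_0},$$
and simplifies $\mu^\ast T(\mu^\ast,c)$ to $(c-1)/2 + c\mu^\ast(1-\gamma_0)/(2\gamma_0)$. After these substitutions, the derivative reduces to a rational function of $c$, $\gamma_0$, and $R(c)\coloneqq\sqrt{(c-1)^2+4c\gamma_0^2}$. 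Finally, I clear denominators and factor: negativity of the numerator follows from the identity $0 < \gamma_0 < 1$ together with standard bounds on $R(c)$, namely $|c-1|\le R(c) \le c+1$.

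\textbf{Main obstacle.} The crux is obtaining a sufficiently clean closed form for $\int_0^\mu \partial_c T(t,c)\,dt$ to make the final sign check transparent. The substitution $v=v(t)$ from the proof of Lemma~\ref{lem:Deter} converts this into a rational integral in $v$; the partial-fraction step is routine, but bookkeeping across the two regimes $c<1$ and $c>1$ is delicate, and it is tempting to exploit the duality $\tilde T(\mu,c) = c^2 T(c\mu,c^{-1})$ and $\tilde D(\mu,c) = cD(c\mu,c^{-1}) - \log c$ to handle the $c>1$ case by reducing to the $c<1$ case via $c\mapsto c^{-1}$. A useful sanity check, which helps locate algebraic errors, is to verify that the derivative is finite at $c\to 1$ (giving continuity at the Marchenko--Pastur hard edge), vanishes in the degenerate limit $\gamma_0\to 1^-$ (where $\lambda^\ast$ diverges and the free energy becomes flat), and behaves as $-\Theta(c^{-1})$ as $c\to\infty$, matching the expected rate of decay to the asymptotic floor.
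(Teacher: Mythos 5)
Your overall strategy coincides with the paper's: both invoke the envelope theorem to replace $\frac{\dd}{\dd c}\mathcal{F}_\infty^\gamma(\mu^\ast(c),c)$ by the partial derivative $\partial_c\mathcal{F}_\infty^\gamma$ evaluated at $\mu=\mu^\ast$, both compute $\partial_c T$ from the Marchenko--Pastur fixed-point relation (your expression $\partial_c T=(2c+(1-t)T)/(2tT+1-c+ct)$ agrees with the paper's $(2c-(\mu-1)T)/\sqrt{(c\mu+c+1)^2-4c}$), and both use the critical-point identity $c\mu^\ast/\gamma_0=\sqrt{(c\mu^\ast+c+1)^2-4c}$ to collapse $T(\mu^\ast,c)$ to the closed form you state. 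Up to that point the proposal is sound, although your detour through $\int_0^\mu\partial_c T(t,c)\,\dd t$ is unnecessary: the already-integrated closed form for $D(\mu,c)$ in Lemma~\ref{lem:Deter} lets you differentiate in $c$ using only $\partial_c T$ at the endpoint $\mu$.

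The genuine gap is in the final sign check. The exact partial derivative is \emph{not} a rational function of $c$, $\gamma_0$ and $R(c)$: differentiating the closed form of $D$ (equivalently, carrying out your $v$-substitution integral) leaves an irreducible logarithmic term --- $\log(c/(\mu T))$ in the regime $c<1$, and $\log(c+\tilde{T})-\log c$ for $c>1$ --- which does not vanish at $\mu=\mu^\ast$ since $\mu^\ast T\neq c$ there. So ``clear denominators and factor'' cannot be executed as described, and the bounds $|c-1|\le R(c)\le c+1$ are not by themselves enough to settle the sign. The missing idea, which the paper supplies, is a two-step linearization-plus-cancellation: (i) since $\mu T<c$ for all $c>0$, the strict inequality $\log x< x-1$ applied to $x=c/(\mu T)$ produces a majorant $M$ of $2\,\dd\mathcal{F}_\infty^\gamma/\dd c$ that \emph{is} rational in $T$, $\mu$, $c$, $\gamma_0$; and (ii) after substituting the closed form of $T(\mu^\ast,c)$, this majorant factors through the quadratic $c^2(1-\gamma_0^2)\mu^2-2c\mu(c+1)\gamma_0^2-(c-1)^2\gamma_0^2$, which vanishes identically at $\mu=\mu^\ast$ by its defining equation (\ref{eq:OptTempQuad}). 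Hence $M=0$ exactly, and the strictness of the logarithmic inequality delivers $\dd\mathcal{F}_\infty^\gamma/\dd c<0$. Without this step your argument does not close; I would also caution that the claimed sanity check that the derivative ``vanishes as $\gamma_0\to 1^-$'' is a heuristic about a degenerate boundary ($\lambda^\ast\to\infty$) rather than evidence for the sign on the interior.
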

\begin{proof}
First, we treat the $c < 1$ case. Using the closed form expression for $D(\mu,c)$ in Lemma \ref{lem:Deter}, 
\[
\mathcal{F}_\infty^\gamma = \frac{1}{2\gamma_0}(1 - c + \mu T) + \frac12 \log(2\pi\gamma_0) - \frac{c}{2}\log \mu
+ \frac12 \left[ \log\left(1 + \frac{T}{c}\right) - \frac{T}{c+T} - c\log\left(\frac{T}{c}\right) \right].
\]
Note that, at the optimal $\mu^\ast$, $\frac{\dd}{\dd c} \mathcal{F}_\infty^\gamma = \frac{\partial}{\partial c} \mathcal{F}_\infty^\gamma + \frac{\partial}{\partial \mu} \mathcal{F}_\infty^\gamma \cdot \frac{\partial \mu^\ast}{\partial c} = \frac{\partial}{\partial c} \mathcal{F}_\infty^\gamma$. Therefore,
\[
2 \frac{\dd \mathcal{F}_\infty^\gamma}{\dd c} = \left(\frac{T}{(T+c)^2} + \frac{\mu}{\gamma_0} - \frac{c}{T}\right)\frac{\partial T}{\partial c} + 1 - \frac1{\gamma_0} - \frac{T^2}{c(T+c)^2} +\log\left(\frac{c}{\mu T}\right).
\]
Differentiating $T$ in $c$, we find that
\begin{align*}
\frac{\partial T}{\partial c} &= \frac{1-\mu}{2\mu}+\frac{1}{2\mu}\left(\frac{(c\mu+c+1)(\mu+1)-2}{\sqrt{(c\mu+c+1)^{2}-4c}}\right)\\
&=\frac{(c\mu+c+1)(\mu+1)-(\mu-1)\sqrt{(c\mu+c+1)^{2}-4c}-2}{2\mu\sqrt{(c\mu+c+1)^{2}-4c}}\\
&=\frac{2c-(\mu-1)T}{\sqrt{(c\mu+c+1)^{2}-4c}}.
\end{align*}
Since $c\mu^\ast / \gamma_0 = \sqrt{(c\mu^\ast+c+1)^2 - 4c}$, at the optimal $\mu^\ast$,
\[
\frac{\partial T}{\partial c} = \gamma_{0}\cdot\frac{2c-\mu T+T}{c\mu}.
\]
Note that for any $c > 0$,
\[
\mu T = \frac{c-1-c\mu+\sqrt{(c\mu+c+1)^{2}-4c}}{2}<\frac{c-1-c\mu+c\mu+c+1}{2}=c.
\]
Recalling that $\log x < x - 1$ for any $x > 1$, $\log(c/(\mu T)) < c/(\mu T) - 1$. Therefore, $2 \frac{\dd \mathcal{F}_\infty^\gamma}{\dd c} < M$, where
\[
M = \left(\frac{T}{(T+c)^2} + \frac{\mu}{\gamma_0} - \frac{c}{T}\right)\frac{\gamma_0(2c-\mu T+T)}{c\mu} - \frac{1}{\gamma_0} - \frac{T^2}{c(T+c)^2} + \frac{c}{\mu T}.
\]
Since $T = (c-1-c\mu^\ast + c\mu^\ast / \gamma_0)/(2\mu^\ast)$ at the optimal $\mu^\ast$, after several calculations, we find that
\[
M = Q(\mu^\ast,c,\gamma_0) \frac{c^2(1-\gamma_0^2)(\mu^\ast)^2 - 2c\mu^\ast (c+1)\gamma_0^2 - (c-1)^2 \gamma_0^2}{2c\gamma_0\mu^\ast(c\gamma_0 \mu^\ast - c \gamma_0 + c\mu^\ast + \gamma_0)(c\gamma_0 \mu^\ast + c\gamma_0 - c\mu^\ast - \gamma_0)},
\]
where
\[
Q(\mu,c,\gamma) = (2(c-1)\gamma+c\mu)(c\mu^{2}+2c\mu+c+\mu-1)\gamma^{2}+c\mu(c\mu+c+\mu-1)\gamma^{2} -(\mu+1)(c\mu+(c-1)\gamma)^{2}.
\]
In particular, by (\ref{eq:OptTempQuad}), at $\mu = \mu^\ast$, $M = 0$, and hence, $\frac{\dd}{\dd c}\mathcal{F}_\infty^\gamma < 0$.

Next, we treat the $c > 1$ case. Using the closed form expression in Lemma \ref{lem:Deter}, 
\[
\mathcal{F}_\infty^\gamma = \frac{\mu}{2\gamma_0}\tilde{T} - \frac{1}{2}\log\left(\frac{\mu}{2\pi\gamma_0}\right) + \frac12 c \log(c + \tilde{T}) - \frac12 c \log c - \frac12 \frac{c \tilde{T}}{c+\tilde{T}} - \frac12 \log \tilde{T}. 
\]
Differentiating in $c$ at the optimal $\mu^\ast$,
\begin{align*}
2\frac{\dd \mathcal{F}_\infty^\gamma}{\dd c} = 2\frac{\partial \mathcal{F}_\infty^\gamma}{\partial c} &= \left(\frac{\mu}{\gamma_0} - \frac{1}{\tilde{T}} + \frac{c\tilde{T}}{(c+\tilde{T})^2}\right) \frac{\partial \tilde{T}}{\partial c} \\
&+ \log(c+\tilde{T})-\log c - 1 + \frac{c\tilde{T}}{(c+\tilde{T})^2} + \frac{c^2}{(c+\tilde{T})^2} - \frac{\tilde{T}^2}{(c+\tilde{T})^2}.
\end{align*}
Differentiating $\tilde{T}$ in $c$, we find that
\begin{align*}
\frac{\partial\tilde{T}}{\partial c}	&=\frac{-1-\mu+\frac{(c\mu+c+1)(\mu+1)-2}{\sqrt{(c\mu+c+1)^{2}-4c}}}{2\mu}\\
&=\frac{-\left(\mu+1\right)\sqrt{(c\mu+c+1)^{2}-4c}+(c\mu+c+1)(\mu+1)-2}{2\mu\sqrt{(c\mu+c+1)^{2}-4c}}\\
&=\frac{1}{\sqrt{(c\mu+c+1)^{2}-4c}}\cdot\left[(\mu+1)\frac{c\mu+c+1-\sqrt{(c\mu+c+1)^{2}-4c}}{2\mu}-\frac{1}{\mu}\right]\\
&=\frac{1}{\sqrt{(c\mu+c+1)^{2}-4c}}\cdot\left[(\mu+1)\left(\frac{1}{\mu}-\tilde{T}\right)-\frac{1}{\mu}\right]\\
&=\frac{1-\mu \tilde{T} - \tilde{T}}{\sqrt{(c\mu+c+1)^{2}-4c}}.
\end{align*}
Since $c\mu^\ast / \gamma_0 = \sqrt{(c\mu^\ast+c+1)^2 - 4c}$, it follows that
\[
\frac{\partial\tilde{T}}{\partial c} = \gamma\cdot\frac{1-\mu T-T}{c\mu}.
\]
Note that for any $c > 0$, $\tilde{T} < c$, and so $\log(1 + \tilde{T}/c) < \tilde{T} / c$. Therefore, $2 \frac{\dd \mathcal{F}_\infty^\gamma}{\dd c} < M$ where
\[
M = \frac{\gamma_0}{c\mu}\left(\frac{\mu}{\gamma_0} - \frac{1}{\tilde{T}} + \frac{c\tilde{T}}{(c+\tilde{T})^2}\right)
+ \frac{\tilde{T}}{c} - 1 + \frac{c\tilde{T}}{(c+\tilde{T})^2} + \frac{c^2}{(c+\tilde{T})^2} - \frac{\tilde{T}^2}{(c+\tilde{T})^2}.
\]
Since $\tilde{T} = (1-c - c\mu^\ast + c\mu^\ast/\gamma_0) / (2\mu^\ast)$ at the optimal $\mu^\ast$, after several calculations, we find that
\[
M = -Q(\mu^{\ast},c,\gamma_0) \frac{c^2 (1-\gamma_0^2) (\mu^{\ast})^2 - 2 c \mu^\ast (c+1)\gamma_0^2 - (c-1)^2\gamma_0^2}{2c\gamma_0\mu^{\ast} (c\gamma_0 \mu^{\ast} - c\gamma_0 + c \mu^\ast + \gamma_0)(c\gamma_0 \mu^\ast +c \gamma_0 - c\mu^\ast - \gamma_0)},
\]
where
\begin{multline*}
Q(\mu,c,\gamma)	=\mu(c\mu+\gamma)^{2}+2c(\mu+1)^{2}(c-1)\gamma^{3}+2(c-1)(\mu-1)\gamma^{3}
	\\ -2c^{2}\gamma^{2}\mu(\mu+1)-\mu c^{2}\gamma^{2}(\mu+1)^{2}-2c\gamma^{2}\mu(\mu-1).
\end{multline*}
In particular, since the numerator for $M$ is always zero, it follows that $\frac{\dd \mathcal{F}_\infty^\gamma}{\dd c} < 0$. 
\end{proof}

Theorem \ref{thm:Main} follows immediately from Propositions \ref{prop:EntropyLimit}, \ref{prop:OptTempBayes}, \ref{prop:OptRegBayes}, and \ref{prop:Monotonicity}. 

\begin{proof}[Proof of Proposition \ref{prop:DD}]
Under the stated hypotheses, let $\delta(\lambda,\gamma) = c_2(\lambda,\gamma) / c_1(\gamma)$ and $\bar{E}(c) = E(c) + c_3(\gamma) / c_1(\gamma)$. Then 
\begin{align*}
|\mathcal{L} / c_1 - \bar{E}| &\leq |\mathbb{E}\|\bar{f}(\boldsymbol{x}) - \boldsymbol{y}\|^2 - E| + \delta(\lambda(\gamma),\gamma) \mathbb{E}\tr(\Sigma(\boldsymbol{x})) \\
&\leq |\mathbb{E}\|\bar{f}(\boldsymbol{x}) - \boldsymbol{y}\|^2 - E| + \delta(\lambda(\gamma),\gamma) m \mathbb{E}k(x, x).
\end{align*}
For an arbitrary $\epsilon > 0$, let $N$ be sufficiently large so that for any $n > N$ and $d = d(n)$, $|\mathbb{E}\|\bar{f}(\boldsymbol{x}) - \boldsymbol{y}\|^2 - E| \leq \epsilon / 2$. Similarly, let $\gamma_0$ be sufficiently small so that for any $0 < \gamma < \gamma_0$, $\delta(\lambda(\gamma),\gamma) < \epsilon / (2 m \mathbb{E}k(x, x))$. Then $|\mathcal{L} / c_1 - \bar{E}| < \epsilon$, and the result follows. 
\end{proof}

\fi
\end{document}